\def\Rbb{\mathbb{R}}
\def\R{\Rbb}
\def\*{\star}
\newcommand{\tr}[1]{ \mathrm{tr}\left( #1\right)}
\renewcommand{\tr}{{T}}
\newcommand{\paral}{{\!/\mkern-5mu/\!}}
\newcommand{\Lag}{\mathcal{L}}
\newcommand{\Ham}{\mathcal{H}}
\newcommand{\ma}{\mathbf{a}}
\newcommand{\p}{\mathbf{p}}
\newcommand{\q}{\mathbf{q}}
\newcommand{\qd}{{\dot{\q}}}
\newcommand{\qdd}{{\ddot{\q}}}
\newcommand{\uu}{\mathbf{u}}
\newcommand{\vv}{\mathbf{v}}
\newcommand{\x}{\mathbf{x}}
\newcommand{\xd}{{\dot{\x}}}
\newcommand{\xdd}{{\ddot{\x}}}
\newcommand{\y}{\mathbf{y}}
\newcommand{\z}{\mathbf{z}}
\newcommand{\f}{\mathbf{f}}
\newcommand{\h}{\mathbf{h}}
\newcommand{\g}{\mathbf{g}}
\newcommand{\zero}{\mathbf{0}}
\newcommand{\J}{\mathbf{J}}
\newcommand{\Jd}{{\dot{\J}}}
\newcommand{\A}{\mathbf{A}}
\newcommand{\B}{\mathbf{B}}
\newcommand{\D}{\mathbf{D}}
\newcommand{\G}{\mathbf{G}}
\newcommand{\mH}{\mathbf{H}}
\newcommand{\I}{\mathbf{I}}
\newcommand{\M}{\mathbf{M}}
\newcommand{\mP}{\mathbf{P}}
\newcommand{\mR}{\mathbf{R}}
\newcommand{\V}{\mathbf{V}}
\newcommand{\wt}[1]{{\widetilde{#1}}}
\theoremstyle{plain}
\newtheorem{theorem}{Theorem}[section]
\newtheorem{lemma}[theorem]{Lemma}
\newtheorem{proposition}[theorem]{Proposition}
\newtheorem{corollary}[theorem]{Corollary}
\theoremstyle{definition}
\newtheorem{definition}[theorem]{Definition}
\theoremstyle{remark}
\newtheorem{remark}[theorem]{Remark}
\let\save@mathaccent\mathaccent
\newcommand*\if@single[3]{%
  \setbox0\hbox{${\mathaccent"0362{#1}}^H$}%
  \setbox2\hbox{${\mathaccent"0362{\kern0pt#1}}^H$}%
  \ifdim\ht0=\ht2 #3\else #2\fi
  }
\newcommand*\rel@kern[1]{\kern#1\dimexpr\macc@kerna}
\newcommand*\widebar[1]{\@ifnextchar^{{\wide@bar{#1}{0}}}{\wide@bar{#1}{1}}}
\newcommand*\wide@bar[2]{\if@single{#1}{\wide@bar@{#1}{#2}{1}}{\wide@bar@{#1}{#2}{2}}}
\newcommand*\wide@bar@[3]{%
  \begingroup
  \def\mathaccent##1##2{%
    \let\mathaccent\save@mathaccent
    \if#32 \let\macc@nucleus\first@char \fi
    \setbox\z@\hbox{$\macc@style{\macc@nucleus}_{}$}%
    \setbox\tw@\hbox{$\macc@style{\macc@nucleus}{}_{}$}%
    \dimen@\wd\tw@
    \advance\dimen@-\wd\z@
    \divide\dimen@ 3
    \@tempdima\wd\tw@
    \advance\@tempdima-\scriptspace
    \divide\@tempdima 10
    \advance\dimen@-\@tempdima
    \ifdim\dimen@>\z@ \dimen@0pt\fi
    \rel@kern{0.6}\kern-\dimen@
    \if#31
      \overline{\rel@kern{-0.6}\kern\dimen@\macc@nucleus\rel@kern{0.4}\kern\dimen@}%
      \advance\dimen@0.4\dimexpr\macc@kerna
      \let\final@kern#2%
      \ifdim\dimen@<\z@ \let\final@kern1\fi
      \if\final@kern1 \kern-\dimen@\fi
    \else
      \overline{\rel@kern{-0.6}\kern\dimen@#1}%
    \fi
  }%
  \macc@depth\@ne
  \let\math@bgroup\@empty \let\math@egroup\macc@set@skewchar
  \mathsurround\z@ \frozen@everymath{\mathgroup\macc@group\relax}%
  \macc@set@skewchar\relax
  \let\mathaccentV\macc@nested@a
  \if#31
    \macc@nested@a\relax111{#1}%
  \else
    \def\gobble@till@marker##1\endmarker{}%
    \futurelet\first@char\gobble@till@marker#1\endmarker
    \ifcat\noexpand\first@char A\else
      \def\first@char{}%
    \fi
    \macc@nested@a\relax111{\first@char}%
  \fi
  \endgroup
}
\begin{document}

\title{ 
{\bf Optimization Fabrics}
}
\date{}
\author[1]{Nathan D. Ratliff}
\author[1]{Karl Van Wyk}
\author[1,2]{Mandy Xie}
\author[1,3]{\\Anqi Li}
\author[1,2]{Muhammad Asif Rana}
\affil[1]{NVIDIA}
\affil[2]{Georgia Tech}
\affil[3]{University of Washington}

\maketitle 

\begin{abstract}
Second-order differential equations define smooth system behavior. In general, there is no guarantee that a system will behave well when forced by a potential function, but in some cases they do and may exhibit smooth optimization properties such as convergence to a local minimum of the potential. Such a property is desirable in system design since it is inherently linked to asymptotic stability. This paper presents a comprehensive theory of {\em optimization fabrics} which are second-order differential equations that encode nominal behaviors on a space and are guaranteed to optimize when forced away from those nominal trajectories by a potential function. Optimization fabrics, or fabrics for short, can encode commonalities among optimization problems that reflect the structure of the space itself, enabling smooth optimization processes to intelligently navigate each problem even when the potential function is simple and relatively na\"{i}ve. Importantly, optimization over a fabric is asymptotically stable, so optimization fabrics constitute a building block for provably stable system design.

The majority of this paper is dedicated to the development of a tool set for the design and use of a broad class of fabrics called {\em geometric fabrics}. Geometric fabrics encode behavior as general nonlinear geometries which are covariant second-order differential equations with a special homogeneity property that ensures their behavior is independent of the system's speed through the medium. A class of Finsler Lagrangian energies can be used to define how these nonlinear geometries combine with one another. Furthermore, these geometric fabrics are closed under the standard operations of pullback and combination on a transform tree. For behavior representation, this class of geometric fabrics constitutes a broad class of spectral semi-sprays (specs), also known as Riemannian Motion Policies (RMPs) in the context of robotic motion generation, that captures both the intuitive separation between acceleration policy and priority metric critical for modular design and are inherently stable. Geometric fabrics, therefore, capture many of the most alluring and intuitive properties of earlier work on RMPs while remaining safe and easy to use by less experienced behavioral designers. 
\end{abstract}

\section{Introduction}

Riemannian Motion Policies (RMPs) \cite{ratliff2018riemannian} are powerful tools for modularizing robotic behavior. They enable designers behavior in parts, as focused position and velocity dependent acceleration policies and priority metrics in any of a number of specifically designed task spaces relevant to the problem. How these parts combine is defined by the mathematics of the RMP algebra and built in to the RMPflow algorithm.
The broadest class of RMPs form a flexible toolset for design and their utility has been demonstrated in a number of real-world collaborative settings where fast reaction and adaptation is critical, but their power lies primarily in their intuition.  
In particular, RMPs lack formal stability guarantees and can be dangerous for inexperienced designers. This lack of theoretical transparency forms a barrier to entry for anyone unfamiliar with the design principles, parameter tuning strategies, and damping heuristics commonly used to create good stable performance. 

This paper builds a comprehensive theory of these systems to both capture the intuition that makes them effective in practice but also prescribe a set of rules to ensure their safety, stability, and convergence. We introduce and fully characterize a class of systems called {\em optimization fabrics}. These systems define nominal behaviors that capture commonalities among tasks such as obstacle and joint limit avoidance, redundancy resolution, attractor shaping, and more. And importantly, one can optimize across a fabric by forcing the underlying system with a task-specific potential function whose minima define the task's goal. The overall optimization behavior is then influenced by the fabric, but the potential acts to push the system away from that nominal behavior toward accomplishing the task's goal. A system is defined as an optimization fabric if it is guaranteed to converge to a local minimum of the potential when forced, and the theory of optimization fabrics centers around both characterizing what systems form fabrics, and the construction of tools to aid the design such systems. The exposition below builds toward the characterization of a concrete and flexible class of fabrics called {\em geometric fabrics}. These fabrics are built from differential equations that share a form of geometric consistency well-suited to intuitive design.

The theory focuses on adding stability guarantees while maintaining the predominant characteristics of RMPs that make them intuitive and effective in practice. The most important of these characteristics are: (1) the separation of system behavior into nominal behavior shared across multiple tasks and a task-specific component that pushes away from that nominal behavior to accomplish task goals; and (2) the separation of acceleration policy design from priority metric design. Earlier work on Geometric Dynamical Systems (GDS) \cite{cheng2018rmpflow} addressed stability, but resulted in a framework requiring policies to be designed as {\em force} policies. Such policies could be translated to {\em acceleration} policies only through their interaction with the priority metric, which meant that the priority metric and acceleration behavior were inherently linked. For instance, designing a single GDS on a task space is straightfoward, but when combining that GDS with another using priority metrics designers had to adjust the force policy to compensate for changes in acceleration induced by modifications to the priority metric. 
Moreover, each subtask policy had to be designed as a potential function which meant that each of these subtask policies could potentially conflict with one another. For instance, achieving one policy's goal (potential minimization) results in the escalation of another policy's goal. Ultimately, this can lead to cases where the primary tasks, such as reaching a target, is not fully solved. 

The framework of optimization fabrics presented here provides clarity around these issues. It enables entirely acceleration based design ensuring that metrics are used solely as priorities in acceleration policy combination (resulting in metric weighted averages of individual policy accelerations). And each optimization fabric is characteristically unbiased meaning that a single separate task potential can be fully optimized over the top by the fabric without being fundamentally biased away from the local minimum by the fabric itself. Designers can thereby encode all subtask behaviors, such as obstacle avoidance and redundancy resolution, directly into the fabric and design the task's goal into the separate potential function to ensure it is reached at convergence. 


\subsection{Related work: An encompassing framework for existing techniques}

Collaborative settings are dynamic and partially observable and, therefore, require fast adaptation and reaction to both changes in the surrounding environment and changes in state estimation. Modern collaborative systems are designed around these real-time requirements, relying on an underlying layer of reactive control to interpret and adapt the plans computed by the computationally intensive planning layer \cite{2017_rss_system} on the fly. That design has inspired a new line of research exploring the development of local behavior generation tools stepping beyond reactivity to address additionally many of the considerations and constraints traditionally handled by the planning level. These tools encapsulate behaviors like collision and joint limit avoidance, or even shaping of the end-effector's behavior en route to a target \cite{ratliff2018riemannian}, while remaining reactive. This paper focuses on this reactive setting, and in particular, on the design of second-order (rather than first-order) dynamical systems which capture the innate second-order nature of robotic systems \cite{ClassicalMechanicsTaylor05}.

A classical method in this domain is operational space control \cite{KhatibOperationalSpaceControl1987} which can be viewed as a precursor to the broader area of geometric control, wherein behavior is shaped by controlling a system to behave like a specially designed virtual classical mechanical system (equivalently geodesics on a Riemannian manifolds, hence the adjective ``geometric''.). \cite{cheng2020RmpflowJournalArxiv} gives an in-depth review of geometric control (and operational space control) and constructs a generalized class of Geometric Dynamical Systems (GDS) that encompasses it. This new class of GDSs adds new modeling capacity, enabling users to design velocity-dependent priority or weight metrics for combining policies, while maintaining standard stability guarantees. That manuscript also reviews related literature and overviews their relationships. In the present work, we develop a much broader encompassing theory to contextually ground these previous techniques, showing that they can all be characterized as specific types of {\em optimization fabrics} with varying properties and limitations.

Our system categorization can be broadly characterized by the properties of their associated priority (weight) metrics and their policy types. (The metric defines how many policies on a common space trade off with one another when combined.)
Specifically, operational space control is a type of geometric control, and both model systems using classical mechanics (these systems are sometimes known as simple mechanical systems) \cite{ClassicalMechanicsTaylor05}. Classical mechanical systems are characterized by metrics (priority matrices) with position-only dependence. More broadly, Geometric Dynamical Systems (GDS) \cite{cheng2018rmpflow}, defined within the framework of Riemannian Motion Policies (RMPs) \cite{ratliff2018riemannian}, includes classical systems as a subclass, and introduces velocity dependence to their metrics. The present work introduces Lagrangian systems wherein the equations of motion are given by applications of the Euler-Lagrange equation to a stationary Lagrangian function (see Section~\ref{sec:ConservativeFabrics}). These Lagrangian systems are very similar to GDSs, but their equations are inherently covariant \cite{ratliff2020SpectralSemiSprays,ratliff2020FinslerGeometry} and do not require the notion of {\em structured} GDS as was done in the earlier \cite{cheng2018rmpflow} work. Both GDS and Lagrangian systems in their broadest generality do not offer much guidance on how to design systems. We will see here that a branch of mathematics called Finsler geometry (a generalization of Riemannian geometry), gives us a concrete yet flexible toolset for design within this more general Lagrangian context. These tools yield Finsler systems, which too have velocity-dependent metrics and thereby generalize classical mechanical systems. 

All of the above mentioned systems define behavior directly using specific types of dynamical systems as models. In each of these models, stability is provable because known conserved energy quantities constitute natural system Lyapunov functions \cite{khalil2002nonlinear}. Here, we expand on that idea by characterizing a broader class of systems that remain conservative but need not follow the natural dynamics of the energy function. These conservative systems are related to the above energy systems, indeed their priority metrics derive from the energy, but the corresponding policies can differ dramatically from the energy system's policy. We analyze the properties of such systems and characterize tools to aid their design, including an {\em energization transform} which takes an (almost) arbitrary differential equation and transforms it (with minimal modification) into a conservative system. These constructions ultimately lead to the development of what we call {\em geometric fabrics}, which are effectively general nonlinear geometry generators (described in Section~\ref{sec:NonlinearGeometries} energized using Finsler energies. Geometric fabrics constitute a concrete toolset for design within this broader encompassing class of conservative systems.

Each of these systems, when their equations satisfy specific properties we define here, can be used to design optimization fabrics which are provably convergent. To summarize, early literature focused on classical mechanical systems within the context of operational space control and geometric control. Later, \cite{cheng2018rmpflow} introduced variants of these systems called GDSs which added velocity dependence to their metrics, the general character of which, as described here, can be alternatively modeled as a Lagrangian system and more concretely using Finsler systems. And beyond those systems, we show in this work that much larger classes of geometric, energized, and most generally conservative systems exists adn are well-suited for system design. This framework offers a variety of new tools for behavior design, and the theoretical properties we study here provide intuition and rules for how to use them most effectively in practice.

Another class of second-order systems common in robotics is Dynamic Movement Primitives (DMPs) \cite{ijspeert2013DMPs}. DMPs, however, inherit their stability guarantees by reducing over time to linear systems as the nonlinear components vanish by design. In this paper, we focus instead on the framework extending from operational space control paradigms due to its compatibility with modern planning and optimization frameworks \cite{2017_rss_system,ratliff2018riemannian}.

\subsection{Classical mechanics: A simple example of an optimization fabric}

To gain some intuition around optimization fabrics, we first review the behavior of a classical mechanical system, one of the simplest forms of optimization fabric.

A second-order differential equation can often be considered an optimizing system. For instance, the damped classical mechanical equations of motion under a forcing potential function is guaranteed to optimize that potential function. Intuitively, the undamped system will maintain constant total energy (kinetic plus potential), and the damper will incrementally bleed energy from the system, reducing the total energy until it comes to rest at the bottom of the potential (local minimum of the potential and zero kinetic energy). Importantly, in addition to optimizing, the forced system's behavior en route to minimizing the potential is governed by the kinetic energy term (tantamount to the mass distribution). That kinetic energy defines a nominal behavior for the system that the system generally follows. However, when forced, that kinetic system becomes a medium across which the potential function is optimized. Change the potential and the system will still optimize the new potential. Change the kinetic energy, and the new system will again optimize the original potential but with a different transit behavior. 

This classical mechanical system is an example of what we call an optimization fabric, or fabric for short. A fabric more generally defines a nominal behavior that a forcing potential has to push away from. A fabric is unbiased in the sense that the resulting system will come to rest at a local minimum of the potential. 
For instance, the unforced classical mechanical system will remain at rest if it starts at rest; there is no biasing force that will explicitly push it from that state. On the other hand, a classical system experiencing a constant force in any particular direction independent of velocity would be biased, and that force bias would prevent it from optimizing a potential (the constant force would bias the system away from the potential's local minimum).

\subsection{Overview of the paper}

This paper develops a comprehensive theory of what we call {\em optimization fabrics}, nonlinear second-order differential equations defining a nominal task-independent system behavior that are {\em unbiased} in the sense that systems forced by a potential function are {\em guaranteed} to optimize that potential. We start with a review of the basic properties of {\em spectral semi-sprays} in Section~\ref{sec:SpecsAndTransformTrees} and then characterize optimization fabrics in their most general form in Section~\ref{sec:UnbiasedGeneralFabrics}. Those initial result are very broad and are not prescriptive of how to design fabrics. The remaining sections become increasingly specific in their characterization of subclasses of fabrics, building toward a description of what we call {\em geometric fabrics} in Section~\ref{sec:GeometricFabrics} which constitute a concrete and complete toolset for behavior design. 

Along the way, we encounter a class of {\em conservative fabrics} in Section~\ref{sec:ConservativeFabrics} which generalize the intuition behind classical mechanical systems given as an example above. These fabrics leverage the theoretical properties of Lagrangian systems and include Riemannian geometric systems and their Finsler geometric generalization. However, we show additionally that a much broader class of such systems modified with {\em zero work} terms can express conservative behavior while following almost arbitrary paths. More concretely we show that second-order differential equations can be {\em energized} to ensure that they conserve a given energy quantity. In general, energization slightly changes the behavior of the system, but we give conditions under which the basic nature of the underlying system remains unchanged by the energization process. Specifically, when the underlying system is a geometry, a broad class of equations which are in a sense path consistent (see Section~\ref{sec:NonlinearGeometries}), the resulting energized system expresses the same geometry of paths. When the energy used for energization is a Finsler energy (a generalization of kinetic energy that enables the modeling of directionally dependent metric tensors), then energization results in a {\em geometric fabric} that can be used for optimization.

With this theory, we can use geometric fabrics on a transform tree by populating nodes of the tree with fabric components and using the basic spectral semi-spray operations to combine them at the root. Many of the classes of fabric presented in this paper are known to be closed under these operations (see Section~\ref{sec:ClosureTheorem}) making transform trees a convenient tool for design.

In the context of Riemannian Motion Policies (RMPs), geometric fabrics form a flexible and provably stable class of RMPs that enable entirely acceleration-based design (see Section~\ref{sec:DesignWithGeometricFabrics} and specifically Section~\ref{subsec:acceleration_potential}). Acceleration-based design enables practitioners to design acceleration policies (geometry design) separately from their associated priorities specification (energy design), and these fabric components are combined as metric weighted averages. In Section~\ref{sec:speed_control} we show that because the underlying geometries are path consistent we can modulate the speed of the system to maintain a given measure of speed (such as end-effector or configuration space speed in a robotic manipulator) during execution. Section~\ref{sec:Experiments} gives experimental results verifying the theoretical properties outlined in this paper using both a 2D point mass system and a 3-dof planar manipulator. We show empirically that the acceleration-based design enables practitioners to construct complex fabrics in layers, adding increasingly complex components bit-by-bit without modifying the parameters of earlier components. With each added layer the underlying behavior becomes more sophisticated but always remains a fabric suitable for optimization. In this way, we design by hand a fabric for controlling a manipulator that is able to resolve redundancy naturally and shape the end-effector's behavior to lift from the surface of a floor before moving to a region above a target and descending from above. 


\subsection{A note on the presentation of preliminaries}

We assume many of the mathematical concepts presented, such as general nonlinear and Finsler geometries, are unfamiliar to most reading this papers. Many of the advanced mathematical presentations of related ideas require a background in differential geometry which has such extensive terminology and notation that it is inaccessible to most readers. We, therefore, in a separate papers \cite{ratliff2020SpectralSemiSprays,ratliff2020FinslerGeometry} present full derivations of the underlying results in the notation of advanced calculus to make them more accessible. That paper also introduces some nonstandard terminology that we view as more intuitive to make the concepts easier to digest. We make note of these modifications in Section~\ref{sec:terminology} on terminology, and we review these basic results when needed as we build toward the formulation and derivation of optimization fabrics.

\section{Preliminaries} \label{sec:preliminaries}

\subsection{Manifolds and notation}

The concepts and notation around differential geometry is complex and becomes a roadblock for most readers. In this paper, to make these ideas more accessible and clear, we choose a notation that avoids the typical coordinate-free or tensor based notations of differential geometry in lieu of an advanced calculus notation more common in engineering texts. Similarly, we assume the manifolds on which we derive our equations are well-defined with respect to the standard constructions of differential geometry, and derive our equations simply with respect to a chose coordinate system as is common in physics. By constructing concepts using concrete coordinate-independent quantities such as length measures, for which (nonlinear) change-of-variable transformations are intuitive, we can attain covariant coordinate-independent equations defined across the manifold in any coordinate system, analogous to physical equations in engineering. Appendix~\ref{apx:Manifolds} details this more concrete treatment of manifolds and Appendix~\ref{apx:CalculusNotation} defines concretely the advanced calculus notation we used in this paper within the context of a chosen nonlinear coordinate system.

We restrict attention here to smooth manifolds \cite{LeeSmoothManifolds2012} and assume unless otherwise stated that vector fields, functions, trajectories, etc. defined on the manifolds are smooth. Of importance is the notion of a smooth manifold with a boundary. The following definitions review the basic concepts and notation.

\begin{definition}
Let $\mathcal{X}$ be a manifold with a boundary (see \cite{LeeSmoothManifolds2012} for a definition\footnote{Intuitively, it can be thought of as a set theoretic boundary point in coordinates.}). Its boundary is denoted $\partial\mathcal{X}$ and we denote its {\em interior} as $\mathrm{int}(\mathcal{X}) = \mathcal{X}\backslash\partial\mathcal{X}$. 

\end{definition}
In general, when using the notation $\mathcal{X}$ we assume it is a manifold with a boundary $\partial\mathcal{X}$. However, unless otherwise explicitly stated, this notation includes the case where the boundary is empty $\partial\mathcal{X} = \emptyset$.


\begin{definition}
Let $\mathcal{X}$ denote a smooth manifold of dimension $n$. The space of all velocities at a point $\x$ is denoted $\mathcal{T}_\x\mathcal{X}$ and is known in manifold theory as the {\em tangent space} at $\x$. It is often convenient to reference the set of all available positions and velocities across a manifold. That space is known as the {\em tangent bundle} and is denoted $\mathcal{T}\mathcal{X} = \coprod_{\x\in\mathcal{X}}\mathcal{T}_\x\mathcal{X}$ where $\coprod$ denotes the disjoint union. I.e. $(\x,\xd)\in \mathcal{T}\mathcal{X}$ if and only if $\xd\in\mathcal{T}_\x\mathcal{X}$ for some $\x\in\mathcal{X}$. 
The boundary of a manifold $\partial\mathcal{X}$ is a separate smooth manifold of dimension $n-1$, with its own lower dimensional tangent bundle $\mathcal{T}\partial\mathcal{X} = \coprod_{\x\in\partial\mathcal{X}}\mathcal{T}_\x\partial\mathcal{X}$. Likewise, $\mathrm{int}(\mathcal{X})$ is the manifold of dimension $n$ consisting of all interior points with tangent bundle of consistent dimension $n$ denoted $\mathcal{T}\mathrm{int}(\mathcal{X}) = \coprod_{\x\in\mathrm{int}(\mathcal{X})}\mathcal{T}_\x\mathcal{X}$. With these definitions, the complete manifold with a boundary is understood to be the disjoint union of the separate interior and boundary manifolds $\mathcal{X}=\mathrm{int}(\mathcal{X})\coprod\partial\mathcal{X}$, and its tangent bundle is the disjoint union of the separate tangent bundles $\mathcal{T}\mathcal{X} = \mathcal{T}\mathrm{int}(\mathcal{X})\coprod\mathcal{T}\partial\mathcal{X}$.
%
\end{definition}
\begin{remark}
Statements below pertaining to all $(\x,\xd)\in\mathcal{T}\mathcal{X}$ are understood as all pertaining specifically to the two separate cases $(\x,\xd)\in\mathcal{T}\mathrm{int}(\mathcal{X})$ {\em and} $(\x,\xd)\in\mathcal{T}\partial\mathcal{X}$. That second case explicitly restricts the velocities to lie only tangent to the boundary's surface.
\end{remark}

\subsection{A taxonomy of terminology and results}
\label{sec:terminology}

There is a lot of terminology introduced in this paper, so this section collects the definitions into one place for quick reference. We also provide a list of the different types of fabrics encountered throughout the paper along with a concise taxonomy of their closure status under operations of the spec algebra defined in Section~\ref{sec:SpecAlgebra}. The section is for reference only and can be skipped on first read. 

\subsubsection{Terminology}

The following is a (partial) list of terms defined in this paper, each listed with a brief contextual note:
\begin{enumerate}
\item {\em Spectral semi-spray}, or {\em spec} for short: A pair $\big(\M(\x,\xd),\f(\x,\xd)\big)$ representing a differential equation $\M \xdd + \f = \zero$.
Operations of pullback and combination define an associated {\em spec algebra} over transform trees.
\item {\em Equations of motion}: The equation $\partial^2_{\xd\xd}\Lag + \partial_{\xd\x}\Lag\:\xd - \partial_\x\Lag = \zero$ given by applying the Euler-Lagrange equation to a stationary Lagrangian $\Lag(\x,\xd)$.
\item {\em Forcing a system}: Adding the gradient of a potential function, often along with a damper, to a spec. If the original spec is $\M\xdd + \f = \zero$, the forced system is $\M\xdd + \f = - \partial_\x \psi - \B\xd$.
\item {\em Optimization fabric}, or {\em fabric} for short: A special class of spec which is guaranteed to optimize the potential function when forced.
\item {\em Unbiased spec}: Fabrics all must be what we call {\em unbiased}. At a high-level this essentially means that they do not innately force a system when it comes to rest. Unbiased specs can, therefore, come to rest at the local minimum of a potential function and not be innately pushed away from it by the underlying fabric.
\item A {\em nonlinear geometry}: A geometrically consistent system of speed independent paths defined by a differential equation $\xdd + \h_2(\x, \xd) = \zero$, where $\h_2$ is homogeneous of degree 2 in velocity, known as the \textit{(geometry) generator}. Each generator has an associated \textit{geometric form} $P_\xd^\perp[  \xdd + \h_2(\x, \xd) ] = \zero$ known as the corresponding {\em geometric equation}.
\item {\em Finsler structure}: A Lagrangian that is positive when $\xd \neq \zero$ and homogeneous of degree 1 in velocity. Such Lagrangians define action integrals that are ``path length'' like, in that they are a positive measure that is invariant to time-reparameterization of the trajectory so that all trajectories following the same path give them same action measure. A Finsler structure can also be thought of as a {\em geometric} Lagrangian since it is a specific form of Lagrangian that captures the notion of a positive speed-independent path measure with a locally unique minimum. We use the term Finsler structure since it is common in the mathematical literature.
\item A \textit{Finsler geometry} is a nonlinear geometry defined by the equations of motion of a Finsler structure. Its generator is known as a {\em Finsler generator} and is given by the equations of motion of the corresponding Finsler energy.
\item {\em Lagrangian energy}: The Hamiltonian of a general Lagrangian. This energy is defined as $\Ham_\Lag = \partial_\xd\Lag^\tr\xd - \Lag$ and in general might differ from the Lagrangian. The energy is not to be confused with the Lagrangian itself. Only when the Hamiltonian matches the Lagrangian is that the case (such as in Finsler energies below). When the context of the Lagrangian is clear, we often use just {\em energy}.
\item {\em Finsler energy}: The energy form $L_e = \frac{1}{2} L_g^2$ of a Finsler structure $\Lag_g$. In this special case, the Lagrangian energy (Hamiltonian) associated with $\Lag_e$ is $\Lag_e$, itself. When the context of the Finsler structure $\Lag_g$ is known, we often refer to the Finsler energy as the {\em energy} or {\em energy form} of $\Lag_g$. The Finsler energy is always homogeneous of degree 2 in $\xd$ and can be used to define the Finsler geometry as $\Lag_g = \sqrt{2\Lag_e}$ if the defining properties hold for the derived Finsler structure $\Lag_g$.
\item {\em Bending a fabric}: Adding an energy conserving geometric term (homogeneous of degree 2) to a geometric fabrics's generator. The resulting generator generates a distinct geometry, but the resulting generator still conserves the same energy and remains a fabric. ``Bending'' is different from ``forcing'' (see above). 
\item {\em Metric tensor of a Lagrangian}: Defined as the Hessian of the Lagrangian $\Lag$ used to define the energy $\M = \partial_{\xd\xd}^2\Lag$.
\item {\em Energization} or {\em energization transform}: Transforming a differential equation by accelerating along the direction of motion to ensure that it conserves a given energy function. Energization can be viewed as an operation on a differential equation or, when relevant, the transformation of a differential equation into a fabric suitable for optimization.
\item {\em Energized fabric}: The energy conserving fabric resulting from an {\em energization transform}. When the energy is a Finsler energy, the fabric is a bent Finsler fabric referred to as a \textit{bent Finsler representation}. 
\end{enumerate}

\subsubsection{Types of fabrics}

The following is a list of the classes of fabrics defined throughout this paper:
\begin{enumerate}
\item \textit{Optimization fabric} or \textit{fabric} for short: A spec $(\M,\f)$ which optimizes when forced. i.e. $\M \xdd + \f + \partial_\x \psi + \B \xd = \zero$ optimizes psi.
\item \textit{Conservative fabric}: A fabric defined by an energy conserving equation of the form $\M_e\xdd + \f_e + \f_f = \zero$ where $\big(\M_e, \f_e\big)$ come from the energy Lagrangian $\Lag_e(\x, \xd)$ and $\f_f$ is a zero work contribution.
\item \textit{Lagrangian and Finsler fabrics}: The class of conservative fabrics defined by the equations of motion of an energy Lagrangian is known as a {\em Lagrangian fabric}. If the fabric is defined more specifically by a Finsler energy, it is called a {\em Finsler fabric}.
\item \textit{(Finsler) Energized fabrics}: Fabrics formed by energizing differential equations are known as {\em energized fabrics}. If a Finsler energy is used to energize the differential equation, it is called a {\em Finsler energized fabric}.
\item \textit{Geometric fabrics}: A fabric formed by energizing a geometry generator with a Finsler energy.
\end{enumerate}

\subsubsection{Closure under the spec algebra}

We say that a class of fabrics is {\em closed under tree operations}, or just {\em closed} if the context of operations is clear, if applying the operations to a fabric in that class results in another fabric of the same class (see Section~\ref{sec:SpecsOnTransformTrees}). With regard to the operations the spec algebra, we prove in Section~\ref{sec:ClosureTheorem} that Lagrangian fabrics, Finsler fabrics, and geometric fabrics are all closed. We also conjecture that the broader classes of general optimization fabrics, conservative fabrics, and Finsler energized fabrics are closed, although we do not prove it here. 

\section{Specs and transform trees} \label{sec:SpecsAndTransformTrees}

Let $\x$ and $\xd$ denote a position and velocity in a task space $\mathcal{X}$, assumed to be represented in some chosen coordinates. Optimization fabrics instances of systems of the form $\M\xdd + \f = \zero$, where $\M(\x,\xd)$ is symmetric and invertible and $\f(\x, \xd)$ are both functions of both position and velocity, with the property that they optimize when forced with a potential, which we discuss in detail below. We begin, though, with a brief overview of the broader class of differential equations.

\subsection{Spectral semi-sprays (specs)}

We call the broader class of such systems \textit{spectral semi-sprays}, or \textit{specs} for short, and denote them $(\M, \f)_\mathcal{X}$. When the space $\mathcal{X}$ is clear from context, we often drop the subscript. While specs are not the primary focus of this paper, since fabrics \textit{are} specs, we give a brief overview of the relevant ideas and manipulations here.

\subsubsection{Pullback and combination of specs}
\label{sec:SpecAlgebra}

A (non-spectral) semi-spray by itself is a differential equation of the form $\xdd + \h(\x,\xd) = \zero$ \cite{ratliff2020FinslerGeometry}, and we can easily write the above equation as $\xdd + \M^{-1}\f = \zero$ to fit this form, but it is critical explicitly represent $\M$ to track how it transforms between spaces, as we discuss next.

Given a differentiable map $\phi:\mathcal{Q}\rightarrow\mathcal{X}$ with action denoted $\x = \phi(\q)$, we can derive an explicit expression for the covariant transformation of a spec $(\M, \f)_\mathcal{X}$ on the codomain $\mathcal{X}$ to a spec $(\widetilde{\M}, \widetilde{\f})_\mathcal{Q}$ on the domain $\mathcal{Q}$. Denoting the map's Jacobian as $\J = \partial_\q\phi$, and noting $\xdd = \J\qdd + \Jd\qd$, the covariant transformation of left hand side of the differential equation $\M\xdd + \f = \zero$ represented by the spec is
\begin{align}
    &\J^\tr\big(\M\xdd + \f\big) = \J^\tr\Big(\M\big(\J\qdd + \Jd \qd\big) + \f\Big)\\
    &\ \ \ = \big(\J^\tr \M \J\big) \qdd + \J^\tr\big(\f + \Jd\qd\big) \\
    &\ \ \ = \widetilde{\M}\qdd + \widetilde{\f}.
\end{align}
where $\widetilde{\M} = \J^\tr \M \J$ and $\widetilde{\f} = \J^\tr\big(\f + \Jd\qd\big)$. This means we can define the following covariant pullback operation:
\begin{align}
    \mathrm{pull}_\phi (\M,\f)_\mathcal{X} 
    = \left(\J^\tr\M\J,\ \J^\tr\big(\f + \Jd\qd\big)\right)_\mathcal{Q}.
\end{align}
Similarly, since $\big(\M_1\xdd + \f_1\big) + \big(\M_2\xdd + \f_2\big) = \big(\M_1 + \M_2\big) + \big(\f_1 + \f_2\big)$, we can define an associative and commutative summation operation of the form
\begin{align}
    (\M_1,\f_1)_\mathcal{X} + (\M_2,\f_2)_\mathcal{X}
    = \big(\M_1 + \M_2,\ \f_1 + \f_2\big)_\mathcal{X}.
\end{align}
Note that the above operations are on what we call the natural form of specs. We can also view them in canonical form, which amounts to $(\M,-\M^{-1}\f)_\mathcal{X}^\mathcal{C}$ in the current setting where $\M$ is fully invertible. The expression $-\M^{-1}\f$ defines the acceleration, since the differential equation represented by the spec can be solved to give $\xdd = -\M^{-1}\f$. In terms of this canonical acceleration form, the summation operation computes the combined acceleration as a metric weighted average of individual accelerations:
\begin{align}
    (\M_1,\ma_1)^\mathcal{C}_\mathcal{X} + (\M_2,\ma_2)^\mathcal{C}_\mathcal{X}
    = \Big(\M_1 + \M_2,\ 
        (\M_1 + \M_2)^{-1}\big(\M_1\ma_1 + \M_2\ma_2\big)
      \Big)^\mathcal{C}_\mathcal{X}.
\end{align}

These operations of pullback and summation define the \textit{spec algebra}. 

\subsubsection{Specs on transform trees}
\label{sec:SpecsOnTransformTrees}

Differentiable maps can be composed together to make a tree of spaces rooted at the configuration space $\mathcal{C}$ known as a transform tree, with directed edges denoting the differentiable maps and nodes given by the spaces that result for those differentiable maps. See \cite{ratliff2020SpectralSemiSprays} for a detailed construction and analysis of transform trees.

It can be shown that, in the context of a transform tree, the space of specs becomes a compatible linear structure across the tree under the above defined spec algebra. That means the tree can be used to represent a composite spec at the root by placing specs on the tree's nodes and pulling them back and combining them recursively until a single resultant spec resides at the root \cite{ratliff2020SpectralSemiSprays}. 

Since specs form a compatible linear structure on the tree, this computation is independent of computational path, which means, at least for purposes of the theoretical analysis here, we can always think of the transform tree as star-shaped, under which each node has a single independent map linking directly from root to the node. (If it did not have this form, since the path to a given node is unique, we can perform what is known as a star-shaped transform wherein we define an equivalent tree under which each of those unique paths are represented explicitly as depth 1 branches containing a map defined by the composition of maps encountered along the path.) Such a star-shaped tree can be represented by a collection of $n$ differentiable maps $\phi_i:\mathcal{Q}\rightarrow\mathcal{X}_i$, $i=1,\ldots,n$. The pulled back and combined spec defined on $\mathcal{Q}$ representing a collection of specs $\{(\M_i,\f_i)\}_{i=1}^n$ defined on the $n$ spaces is
\begin{align}
    &\sum_{i=1}^n \mathrm{pull}_{\phi_i} (\M_i,\f_i)_{\mathcal{X}_i}
    = \left(\sum_i\J_i^\tr\M_i\J_i,\ \sum_i\J_i^\tr\big(\f_i + \Jd_i\qd\big)\right)_\mathcal{Q}
\end{align}
which can be viewed as a metric weighted average of individual pulled back specs in canonical form.

\begin{definition}[Closure under tree operations]
A class of specs is said to be {\em closed under tree operations}, or just {\em closed} for short when the set of operations is clear, when applying the operations to elements of the class results in an element of the same class.
\end{definition}
In particular, if a given class of specs is closed under tree operations, if the transform tree is populated with specs from that class, the spec that results at the root from pullback and combination is of the same class.

\section{Optimization fabrics} \label{sec:OptimizationFabrics}

A spec can be \textit{forced} by a position dependent potential function $\psi(\x)$ using
\begin{align}
    \M\xdd + \f = -\partial_\x\psi,
\end{align}
where the gradient $-\partial_\x\psi$ defines the force added to the system. 
In most cases, forcing an arbitrary spec does not result in a system that's guaranteed to converge to a local minimum of $\psi$. But when it does, we say that the spec is \textit{optimizing} and forms an \textit{optimization fabric} or \textit{fabric} for short. This section characterizes the class of specs that form fabrics using definitions and results of increasing specificity. These results are used in Subsection~\ref{sec:GeometricFabrics} to define what we call {\em geometric fabrics} which constitute a concrete set of tools for fabric design.

Note that the accelerations of a forced system $\xdd = -\M^{-1}\f - \M^{-1}\partial_\x\psi$ decompose into nominal accelerations of the system $-\M^{-1}\f$ and forced accelerations $-\M^{-1}\partial_\x\psi$.
Importantly, the spectrum of $\M$, therefore, plays a key role in defining how the potential force $-\partial_\x\psi$ acts to push the system away from the nominal path. It might be easy for potentials to push in some directions, but difficult to push in others. The metric $\M$ dictates the profile of {\em how} the potential function can push away from the system's nominal paths.

\subsection{Unbiased specs and general optimization fabrics} \label{sec:UnbiasedGeneralFabrics}

\begin{definition}
Let $\mathcal{X}$ be a manifold with boundary $\partial\mathcal{X}$ (possibly empty). A spec $(\M,\f)_\mathcal{X}$ is said to be {\em interior} if for any interior starting point 
$(\x_0, \xd_0) \in \mathcal{T}\mathrm{int}(\mathcal{X})$ the integral curve $\x(t)$ 
is everywhere interior $\x(t)\in\mathrm{int}(\mathcal{X})$ for all $t\geq0$.
\end{definition}

\begin{definition}
Let $\mathcal{X}$ be a manifold with a (possibly empty) boundary. A spec $\mathcal{S} = \big(\M,\f\big)_\mathcal{X}$ is said to be {\em rough} if all its integral curves $\x(t)$ converge: $\lim_{t\rightarrow\infty}\x(t) = \x_\infty$ with $\x_\infty\in \mathcal{X}$ (including the possibility $\x_\infty\in\partial\mathcal{X}$). If $\mathcal{S}$ is not rough, but each of its {\em damped variants} $\mathcal{S}_\B = \big(\M,\f+\B\xd\big)$ is, where $\B(\x,\xd)$ is smooth and positive definite, the spec is said to be {\em frictionless}. A frictionless spec's damped variants are also known as {\em rough variants} of the spec.
\end{definition}


\begin{definition}
Let $\psi(\x)$ be a smooth potential function with gradient $\partial_\x\psi$ and let $(\M,\f)_\mathcal{X}$ be a spec. Then $(\M,\f+\partial_\x\psi)$ is the spec's {\em forced variant} and we say that we are {\em forcing} the spec with potential $\psi$. We say that $\psi$ is {\em finite} if $\|\partial_\x\psi\|<\infty$ everywhere on $\mathcal{X}$.
\end{definition}


\begin{definition}
A spec $\mathcal{S}$ forms a {\em rough fabric} if it is rough when forced by a finite potential $\psi(\x)$ and each convergent point $\x_\infty$ is a Karush–Kuhn–Tucker (KKT) solution of the constrained optimization problem $\min_{\x\in\mathcal{X}}\psi(\x)$. A forced spec is a {\em frictionless fabric} its rough variants form rough fabrics.
\end{definition}
\begin{lemma}
If a spec $\mathcal{S}$ forms a rough (or frictionless) fabric then (unforced) it is a rough (or frictionless) spec.
\end{lemma}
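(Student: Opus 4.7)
The plan is to observe that the lemma follows essentially by applying the fabric definition to a trivial potential. The key idea is that the zero potential $\psi(\x) \equiv 0$ is finite (its gradient has norm $0 < \infty$ everywhere), and forcing a spec with the zero potential yields back the original spec, so rough-fabric behavior specializes directly to rough-spec behavior in the unforced case.

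For the rough case, I would take $\psi(\x) \equiv 0$. Since $\partial_\x \psi = \zero$ everywhere, the forced variant $(\M, \f + \partial_\x \psi)_\mathcal{X}$ equals $(\M, \f)_\mathcal{X} = \mathcal{S}$ itself. The finiteness condition $\|\partial_\x \psi\| < \infty$ is satisfied trivially. Because $\mathcal{S}$ is a rough fabric by hypothesis, forcing it with any finite potential produces a rough system (integral curves converge); applying this with the zero potential shows that $\mathcal{S}$ itself has all integral curves converging, which is precisely the definition of a rough spec. Note that the KKT condition in the definition is automatically satisfied here since every point of $\mathcal{X}$ trivially minimizes the constant potential.

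For the frictionless case, I would reduce to the rough case already established. Let $\B(\x,\xd)$ be any smooth positive definite matrix field and consider the damped variant $\mathcal{S}_\B = (\M, \f + \B\xd)_\mathcal{X}$. By the definition of a frictionless fabric, each such $\mathcal{S}_\B$ is a rough fabric. Invoking the rough-case conclusion just proved (with $\mathcal{S}_\B$ playing the role of the original spec), $\mathcal{S}_\B$ is a rough spec. Since this holds for every admissible $\B$, the spec $\mathcal{S}$ satisfies the defining property of a frictionless spec: its damped variants are all rough.

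There is no real obstacle here; the content of the lemma is purely definitional unpacking. The only subtlety worth flagging in the write-up is the need to confirm that the zero potential qualifies as a \emph{finite} potential under the given definition, which is immediate from $\partial_\x 0 = \zero$. Everything else is a direct substitution into the definitions of rough fabric, frictionless fabric, rough spec, and frictionless spec.
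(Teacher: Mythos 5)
Your proposal is correct and follows essentially the same route as the paper: the paper's proof likewise takes the zero potential $\psi \equiv 0$, notes it is finite with every point a KKT solution, and concludes that a rough fabric is a rough spec, with the frictionless case following by applying this to the damped variants exactly as you do.
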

\begin{proof}
The zero function $\psi(\x) = 0$ is a finite potential under which all points in $\mathcal{X}$ are KKT solutions. If $\mathcal{S}$ is a rough fabric, then all integral curves of $\mathcal{S}$ are convergent and it must be a rough spec. 
\end{proof}



\begin{definition}\label{def:BoundaryConformingSpec}
A spec $\mathcal{S} = \big(\M,\f\big)_\mathcal{X}$ is {\em boundary conforming} if the following hold: 
\begin{enumerate}
\item $\mathcal{S}$ is interior.
\item $\M(\x,\vv)$ and $\f(\x,\vv)$ are finite for all\footnote{In this definition, we emphasize in the notation that the manifold contains both its interior and the boundary. In practice, proofs often treat both cases separately.} $(\x,\vv)\in\mathcal{T}\mathcal{X} = \mathcal{T}\mathrm{int}(\mathcal{X}) \cup \mathcal{T}\partial\mathcal{X}$.
\item For every tangent bundle trajectory $(\x(t), \vv(t)) \in \mathcal{T}\mathcal{X}$ with $\x\rightarrow\x_\infty\in\partial\mathcal{X}$, we have
\begin{align}
\lim_{t\rightarrow\infty}\left\|\M^{-1}(\x, \vv)\right\|<\infty
\ \ \mbox{and}\ \  
\lim_{t\rightarrow\infty}\big\|\V_\infty^\tr\f(\x,\vv)\big\| < \infty  
\end{align}
where $\V_\infty$ is a matrix for which $\mathrm{span}(\V_\infty) = \mathcal{T}_{\x_\infty}\mathcal{X}$.
\end{enumerate}
A {\em boundary conforming metric} is a metric satisfying conditions (2) and (3) of this definition. Additionally, $\f$ is said to be {\em boundary conforming with respect to $\M$} if $\M$ is a boundary conforming metric and $\big(\M, \f\big)_\mathcal{X}$ forms a boundary conforming spec. 
\end{definition}
Note that this definition of boundary conformance implies that $\M$ either approaches a finite matrix along trajectories limiting to the boundary or it approaches a matrix that is finite along Eigen-directions parallel with the boundary's tangent space but explodes to infinity along the direction orthogonal to the tangent space. This means that either $\M^{-1}_\infty$ is full rank or it is reduced rank and its column space spans the boundary's tangent space $\mathcal{T}_{\x_\infty}\partial\mathcal{X}$.



\begin{definition} \label{def:Unbiased}
A boundary conforming spec $\big(\M,\f\big)_\mathcal{X}$ is {\em unbiased} if for every convergent trajectory $\x(t)$ with $\x\rightarrow\x_\infty$ we have
$\V_\infty^\tr\f(\x,\xd) \rightarrow \zero$ where $\V_\infty$ is a matrix for which $\mathrm{span}(\V_\infty) = \mathcal{T}_{\x_\infty}\mathcal{X}$.
If the spec is not unbiased, we say it is {\em biased}. We often refer to the term $\f$ alone as either biased or unbiased when the context of $\M$ is clear. Note that all unbiased specs must also be boundary conforming by definition. We, therefore, often specify only that the spec is unbiased with the implicit understanding that it is also, by definition, boundary conforming.
\end{definition}
\begin{remark}
In the above definition, when $\x_\infty\in\mathrm{int}(\mathcal{X})$, the basis $\mathcal{V}_\infty$ contains a full set of $n$ linearly independent vectors, so the condition $\V_\infty^\tr\f(\x, \xd)\rightarrow\zero$ implies $\f(\x, \xd)\rightarrow\zero$. This is not the case for $\x_\infty\in\partial\mathcal{X}$. 
\end{remark}
\begin{remark}
The matrix $\V_\infty$ naturally defines two linearly independent subspaces, the column space, and the left null space. It is often convenient to decompose a vector into two components $\f = \f^\paral + \f^\perp$, one lying in the column space $\f^\paral$ and the other lying in the left null space $\f^\perp$. A number of the statements below are phrased in terms of such a decomposition.
\end{remark}

Since the definition of unbiased is predicated on the spec being boundary conforming, the spectrum of the metric $\M$ is always finite in the relevant directions (all directions for interior points and directions parallel to the boundary for boundary points). The property of being unbiased is, therefore, linked to zero acceleration within the relevant subspaces. This property is used in the following theorem to characterize general fabrics.


\begin{theorem}[General fabrics] \label{thm:GeneralFabrics}
Suppose $\mathcal{S} = \big(\M,\f\big)_\mathcal{X}$ is a 
boundary conforming spec. Then $\mathcal{S}$ forms a rough 
fabric if and only if it is unbiased and 
converges when forced by a potential $\psi(\x)$ with $\|\partial\psi\|<\infty$ on $\x\in\mathcal{X}$. 
\end{theorem}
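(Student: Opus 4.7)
This is an if-and-only-if, and both directions hinge on taking the limit $t\to\infty$ in the forced equation $\M\xdd + \f + \partial_\x\psi = \zero$ along a trajectory $\x(t)\to\x_\infty$. The key reusable lemma I would establish first is a \emph{rest lemma}: whenever a trajectory of a boundary-conforming spec converges in position, it converges to rest in the sense that $\xd\to\zero$ and $\V_\infty^\tr\M\xdd\to\zero$. For interior limits this should follow from smoothness of the dynamics together with the finiteness of $\M$, $\M^{-1}$, $\f$, and $\partial_\x\psi$; for boundary limits one uses boundary conformance to argue that $\V_\infty^\tr\M$ and the tangential acceleration remain finite, while any orthogonal component is absorbed into the constraint direction.

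For the forward direction, ``converges when forced by any finite $\psi$'' is immediate from the definition of a rough fabric. For unbiasedness, note that being a rough fabric implies $\mathcal{S}$ is itself a rough spec (via the lemma just above the theorem), so every unforced integral curve has a limit $\x_\infty$. Projecting the unforced equation $\M\xdd+\f=\zero$ by $\V_\infty^\tr$ and applying the rest lemma gives $\V_\infty^\tr\f = -\V_\infty^\tr\M\xdd \to \zero$, which is precisely the unbiased condition at $\x_\infty$.

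For the backward direction, assume $\mathcal{S}$ is unbiased and that all forced trajectories converge; I need to show each forced limit $\x_\infty$ is a KKT point of $\min_{\x\in\mathcal{X}}\psi(\x)$. Projecting the forced equation by $\V_\infty^\tr$ and passing to the limit, the rest lemma gives $\V_\infty^\tr\M\xdd\to\zero$, the unbiased hypothesis (combined with continuity of $\f$ and $(\x(t),\xd(t))\to(\x_\infty,\zero)$) gives $\V_\infty^\tr\f\to\V_\infty^\tr\f(\x_\infty,\zero)=\zero$, and $\V_\infty^\tr\partial_\x\psi\to\V_\infty^\tr\partial_\x\psi(\x_\infty)$ by continuity. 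Summing yields $\V_\infty^\tr\partial_\x\psi(\x_\infty) = \zero$. In the interior case $\V_\infty$ has full rank $n$, so this becomes $\partial_\x\psi(\x_\infty)=\zero$, the interior KKT condition; on the boundary it is the tangential stationarity condition. The outward-pointing sign of the boundary Lagrange multiplier follows from the fact that the forced trajectory approaches $\partial\mathcal{X}$ from the interior, so the normal component of $-\M^{-1}(\f+\partial_\x\psi)$ in the limit cannot accelerate the trajectory out of $\mathcal{X}$.

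The main obstacle is the rest lemma itself: positional convergence does not in general force $\xd\to\zero$, so one must carefully leverage boundary conformance (which bounds $\M^{-1}$ in the relevant directions) together with the uniform finiteness of $\f$ and $\partial_\x\psi$ to rule out persistent oscillation or drift near $\x_\infty$. A secondary challenge is justifying the boundary KKT sign condition, which requires analyzing how the trajectory approaches $\partial\mathcal{X}$ tangentially and inheriting the normal-cone inequality from the limiting normal acceleration; this likely requires a local coordinate chart adapted to the boundary and a separate treatment for interior versus boundary limit points.
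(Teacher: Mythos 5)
Your backward direction (unbiased $+$ convergence $\Rightarrow$ KKT limit points) follows essentially the paper's route---a limit analysis of the forced equation with separate interior and boundary cases---but with one technically important difference: the paper works in acceleration form, writing $\xdd = -\M^{-1}\big(\f + \partial_\x\psi\big) \rightarrow \zero$ and exploiting that boundary conformance makes $\M^{-1}_\infty$ finite and full rank on $\mathcal{T}_{\x_\infty}\partial\mathcal{X}$, whereas you project the force-form equation by $\V_\infty^\tr$. That choice puts weight on your ``rest lemma'' claim that $\V_\infty^\tr\M\xdd \rightarrow \zero$ at boundary limits. Boundary conformance bounds $\M^{-1}$ and $\V_\infty^\tr\f$, not $\V_\infty^\tr\M$; near $\partial\mathcal{X}$ the metric may diverge in the orthogonal direction with eigenvectors only asymptotically aligned, so $\V_\infty^\tr\M\xdd$ is a product of a possibly unbounded factor with one tending to zero---an indeterminate form that your hypotheses do not resolve. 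Multiplying by $\M^{-1}$ as the paper does avoids ever forming this product, so I would restructure the boundary case that way.

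The more serious gap is the forward direction. You establish $\V_\infty^\tr\f \rightarrow \zero$ only along convergent integral curves of the \emph{unforced} spec, but Definition~\ref{def:Unbiased} quantifies over \emph{every} convergent trajectory, and the backward direction consumes exactly that stronger property along \emph{forced} trajectories, which are not unforced integral curves. Concretely, a spec can have a nonzero rest force $\f(\x^*,\zero)\neq\zero$ at a point $\x^*$ to which no unforced integral curve converges; your argument would never see it, so what you prove is strictly weaker than ``unbiased.'' The paper closes this direction by contraposition with a forcing construction: if the spec is biased, choose $\x^*$ with $\f(\x^*,\zero)\neq\zero$, force with a finite potential whose unique minimum is $\x^*$, and note the forced system cannot come to rest at $\x^*$ (the net force there is $-\f(\x^*,\zero)\neq\zero$), hence cannot be a rough fabric. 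Some version of that construction---or another argument reaching the full quantifier in the definition---is needed; as written, your proof of ``rough fabric $\Rightarrow$ unbiased'' does not go through. (Your explicit attention to why positional convergence forces $\xd\rightarrow\zero$ via bounded acceleration is a point in your favor; the paper simply asserts it.)
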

\begin{proof}
The forced spec defines the equation
\begin{align}\label{eqn:ForcedGeneralSystem}
    \M\xdd + \f = -\partial_\x\psi.
\end{align}

We will first assume $\f$ is unbiased. Since $\x$ converges, we must have $\xd\rightarrow\zero$ which means $\xdd\rightarrow\zero$ as well. 

If $\x(t)$ converges to an interior point $\x_\infty\in\mathrm{int}(\mathcal{X})$, $\M$ is finite so $\M\xdd\rightarrow\zero$ since $\xdd\rightarrow\zero$. And since the spec is unbiased we have $\f(\x,\xd)\rightarrow\zero$ as $\xd\rightarrow\zero$. Therefore, the left hand side of Equation~\ref{eqn:ForcedGeneralSystem} approaches $\zero$, so $\partial_\x\psi\rightarrow\zero$ satisfying (unconstrained) KKT conditions. 

Alternatively, if $\x(t)$ converges to a boundary point $\x_\infty\in\partial\mathcal{X}$,
then we can analyze the expression
\begin{align} \label{eqn:BoundaryLimit}
    \xdd = -\M^{-1}\big(\f + \partial_\x\psi\big)\rightarrow\zero.
\end{align}
since $\xdd\rightarrow\zero$.
Since $\M$ is boundary conforming, the inverse metric limit $\M_\infty^{-1}$ is finite, and since $\partial\psi$ is also finite on $\mathcal{X}$, the term $\M^{-1}\partial\psi$ converges to the finite vector $\M^{-1}_\infty\partial\psi_\infty$. Therefore, by Equation~\ref{eqn:BoundaryLimit} we also have $\M^{-1}\f \rightarrow \M^{-1}_\infty\f_\infty = -\M^{-1}_\infty\partial\psi_\infty$. At the limit, $\M^{-1}_\infty$ has full rank across $\mathcal{T}_{\x_\infty}\partial\mathcal{X}$, so the above limit equality implies $\f_\infty^\paral = -\partial\psi_\infty^\paral$, were $\f_\infty^\paral$ and $\partial\psi_\infty^\paral$ are the components of $\f_\infty$ and $\partial\psi_\infty$, respectively, lying in the boundary's tangent space $\mathcal{T}_{\x_\infty}\partial\mathcal{X}$. Since $\f$ is unbiased and $\x_\infty\in\partial\mathcal{X}$, the boundary parallel component $\f_\infty^\paral = \zero$ so it must be that $\partial\psi_\infty^\paral = \zero$ as well. Therefore, $\partial\psi_\infty$ must either be orthogonal to $\mathcal{T}_{\x_\infty}\partial\mathcal{X}$ or zero. If it is zero, the KKT conditions are automatically satisfied. If it is nonzero, since $\x(t)$ is interior, $-\f(\x,\xd)$ must be interior near the boundary, so $-\partial\psi_\infty$ must be exterior facing and balancing $-\f_\infty$. That orientation in addition to the orthogonality implies that the limiting point satisfies the (constrained) KKT conditions.

Finally, to prove the converse, assume $\f$ is biased. Then there exists a point $\x^*\in\mathcal{X}$ for which $\f(\x^*,\zero)\neq \zero$. We can easily construct an objective potential with a unique global minimum at $\x^*$. The forced system, in this case, cannot come to rest at $\x^*$ since $\f$ is nonzero there, so $(\M,\f)$ is not guaranteed to optimize and is not a fabric.
\end{proof}

The above theorem characterizes the most general class of fabric and shows that all fabrics are necessarily unbiased in the sense of Definition~\ref{def:Unbiased}. The theorem relies on hypothesizing that the system always converges when forced, and is therefore more of a template for proving a given system forms a fabric rather than a direct characterization. Proving convergence is in general nontrivial. The specific fabrics we introduce below will prove convergence using energy conservation and boundedness properties.

Note that the above theorem does not place restrictions on whether or not in the limit the metric is finite in Eigen-directions orthogonal to the boundary surface's tangent space. In practice, it can be convenient to allow metrics to raise to infinity in those directions, so the effects of forces orthogonal to the boundary's surface are increasingly reduced by the increasingly large mass. Such metrics can induce smoother optimization behavior when optimizing toward local minima on a boundary surface.

\subsection{Conservative fabrics} \label{sec:ConservativeFabrics}

Conservative fabrics, frictionless fabrics which conserve a well-defined form of energy, constitute a large class of practical fabrics. Here we define a class of energies through the Lagrangian formalism and extend the notion of boundary conformance to these energy Lagrangians. We give some essential lemmas on energy conservation and present the basic result on conservative fabrics in Proposition~\ref{prop:ConservativeFabrics}. Unlike the most general theorem on fabrics given above in Theorem~\ref{thm:GeneralFabrics}, here we no longer need to hypothesize that the system converges. The conservative properties enable us to prove convergence by effectively using the energy as a Lyapunav function \cite{khalil2002nonlinear}.  

Two standard classes of conservative fabrics that naturally arise are Lagrangian and Finsler fabrics (the latter a subclass of the former), both of which are defined by the equations of motion of their respective classes of energy Lagrangians. The energy-conservation properties of Lagrangian systems are well-understood from classical mechanics \cite{ClassicalMechanicsTaylor05}; this observation simply links those result to our fabric framework. In subsection~\ref{sec:EnergizationAndEnergizedFabrics}, though, we show how to create a much broader class of conservative system by applying an {\em energization} transform to a differential equation resulting in a new type of conservative fabric known as an energized fabric.

\begin{definition}
A stationary Lagrangian $\Lag(\x,\xd)$ is {\em boundary conforming (on $\mathcal{X}$)} if its induced equations of motion $\partial^2_{\xd\xd}\Lag\,\xdd + \partial_{\xd\x}\Lag\,\xd - \partial_\x\Lag = \zero$ under the Euler-Lagrange equation form a boundary conforming spec $\mathcal{S}_\Lag = \big(\M_\Lag, \f_\Lag\big)_\mathcal{X}$ where $\M_\Lag = \partial^2_{\xd\xd}\Lag$ and $\f_\Lag = \partial_{\xd\x}\Lag\,\xd - \partial_\x\Lag$. This spec is known as the {\em Lagrangian spec} associated with $\Lag$. $\Lag$ is additionally {\em unbiased} if $\mathcal{S}_\Lag$ is unbiased. Since unbiased specs are boundary conforming by definition, an unbiased Lagrangian $\Lag$ is implicitly boundary conforming as well.
\end{definition}

\begin{definition}
Let $\Lag_e(\x, \xd)$ be a stationary Lagrangian with Hamiltonian $\Ham_e(\x, \xd) = \partial_\xd\Lag_e^\tr\,\xd - \Lag_e$. $\Lag_e$ is an {\em energy Lagrangian} if $\M_e = \partial^2_{\xd\xd}\Lag_e$ is full rank, $\Ham_e$ is nontrivial (not everywhere zero), and $\Ham_e(\x, \xd)$ is finite on $\mathrm{int}(\mathcal{X})$.
An energy Lagrangian's equations of motion $\M_e\xdd + \f_e = \zero$ are often referred to as its {\em energy equations} with spec denoted $\mathcal{S}_e = \big(\M_e, \f_e\big)_\mathcal{X}$.
\end{definition}

\begin{definition}[Boundary intersecting trajectory]
Let $\x(t)$ be a trajectory. If there exists a $t_0 < \infty$ such $\x(t_0)\in\partial\mathcal{X}$ and $\xd\notin\mathcal{T}_{\x(t_0)}\partial\mathcal{X}$ it is said to be {\em boundary intersecting} with {\em intersection time} $t_0$.
\end{definition}

\begin{definition}[Energy boundary limiting condition]
Let $\Lag_e$ be an energy Lagrangian with energy $\Ham_e$. If for every boundary intersecting trajectory $\x(t)$ with intersection time $t_0$ we have $\lim_{t\rightarrow t_0} \Ham_e(\x, \xd) = \infty$, we say that $\Ham_e$ satisfies the {\em boundary limiting condition}.
\end{definition}

\begin{remark}
When designing boundary conforming energy Lagrangians one must ensure the Lagrangian's spec is interior. To attain an interior spec, it is often helpful to design energies that prevent energy conserving trajectories from intersecting $\partial\mathcal{X}$. It can be shown that an energy Lagrangian's spec is interior if and only if its energy is boundary limiting, i.e. the energy approaches infinity for any boundary intersecting trajectory.
\end{remark}

\begin{lemma}\label{lma:EnergyTimeDerivative}
Let $\Lag_e$ be an energy Lagrangian with energy $\Ham_e = \partial_\xd\Lag_e^\tr \xd - \Lag_e$. The energy time derivative is 
\begin{align} \label{eqn:EnergyTimeDerivative}
    \dot{\Ham}_e = \xd^\tr\big(\M_e\xdd + \f_e\big),
\end{align}
where $\M_e$ and $\f_e$ come from the Lagrangian's equations of motion $\M_e\xdd + \f_e = \zero$.
\end{lemma}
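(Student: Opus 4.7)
The plan is to compute $\dot{\Ham}_e$ directly by differentiating the definition $\Ham_e = \partial_\xd\Lag_e^\tr \xd - \Lag_e$ along a trajectory, and then to recognize the Euler--Lagrange-like combination $\partial_{\xd\x}\Lag_e\,\xd - \partial_\x\Lag_e = \f_e$ together with $\partial^2_{\xd\xd}\Lag_e = \M_e$ on the right hand side.

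First I would apply the product rule and chain rule. Setting $\pL := \partial_\xd\Lag_e$ (a column vector) and treating everything as a function of $t$ through $\x(t), \xd(t)$, the chain rule gives
\begin{align}
\dot{\pL} \;=\; \partial^2_{\xd\x}\Lag_e\,\xd + \partial^2_{\xd\xd}\Lag_e\,\xdd \;=\; \partial_{\xd\x}\Lag_e\,\xd + \M_e\,\xdd,
\end{align}
and similarly
\begin{align}
\dot{\Lag}_e \;=\; (\partial_\x\Lag_e)^\tr\xd + (\partial_\xd\Lag_e)^\tr\xdd \;=\; (\partial_\x\Lag_e)^\tr\xd + \pL^\tr\xdd.
\end{align}

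Next I would differentiate $\Ham_e$ and substitute. By the product rule,
\begin{align}
\dot{\Ham}_e \;=\; \dot{\pL}^\tr\xd + \pL^\tr\xdd - \dot{\Lag}_e.
\end{align}
The critical observation is that the two $\pL^\tr\xdd$ terms cancel, leaving only $\dot{\pL}^\tr\xd - (\partial_\x\Lag_e)^\tr\xd$. Substituting the expression for $\dot{\pL}$ and factoring $\xd^\tr$ on the left yields
\begin{align}
\dot{\Ham}_e \;=\; \xd^\tr\M_e\,\xdd + \xd^\tr\bigl(\partial_{\xd\x}\Lag_e\,\xd - \partial_\x\Lag_e\bigr) \;=\; \xd^\tr(\M_e\xdd + \f_e),
\end{align}
using symmetry of the Hessian $\M_e$ to move it to the right of $\xd^\tr$ and recognizing the parenthesized term as $\f_e$ by definition.

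This is essentially a one-line calculation once the cancellation is spotted, so I do not anticipate any substantive obstacle. The only minor bookkeeping point is to be careful with the convention for the mixed partial $\partial_{\xd\x}\Lag_e$ (i.e., which index runs over $\xd$ and which over $\x$) so that the product $\partial_{\xd\x}\Lag_e\,\xd$ produces a column vector matching $\f_e$ as defined just above the lemma; this is purely notational and follows the conventions laid out in the paper's preliminaries on advanced-calculus notation.
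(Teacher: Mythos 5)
Your proposal is correct and follows essentially the same route as the paper's proof: differentiate the Hamiltonian along a trajectory, observe that the $\partial_\xd\Lag_e^\tr\xdd$ terms cancel, and recognize the remaining combination $\partial_{\xd\x}\Lag_e\,\xd - \partial_\x\Lag_e$ as $\f_e$ with $\M_e = \partial^2_{\xd\xd}\Lag_e$. The only difference is presentational (you organize the computation via $\dot{\pL}$ and $\dot{\Lag}_e$ separately, while the paper expands everything in one chain), so there is nothing to add.
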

\begin{proof}
The calculation is a straightforward time derivative of the Hamiltonian:
\begin{align}
    \dot{\Ham}_{\Lag_e} 
    &= \frac{d}{dt}\big[\partial_\xd\Lag_e - \Lag_e\big] \\
    &= \big(\partial_{\xd\xd}\Lag_e\,\xdd + \partial_{\xd\x}\Lag_e\,\xd\big)^\tr \xd + \partial_\xd\Lag_e^\tr\,\xdd - \big(\partial_\xd\Lag_e\,\xdd + \partial_\x\Lag_e^\tr\xd \big) \\
    &= \xd^\tr\Big(\partial_{\xd\xd}\Lag_e\,\xdd + \partial_{\xd\x}\Lag_e\,\xd - \partial_\x\Lag_e^\tr\xd\Big) \\
    &= \xd^\tr\big(\M_e\xdd + \f_e\big).
\end{align}
\end{proof}

\begin{lemma} \label{lma:EnergyConservation}
    Let $\Lag_e$ be an energy Lagrangian. Then $\M_e\xdd + \f_e + \f_f = \zero$ is energy conserving if and only if $\xd^\tr\f_f = \zero$. We call such a term a {\em zero work modification}. Such a spec $\mathcal{S} = \big(\M_e, \f_e + \f_f\big)$ is said to be a {\em conservative} spec under energy Lagrangian $\Lag_e$.
\end{lemma}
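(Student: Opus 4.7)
The proof will reduce almost entirely to an application of Lemma~\ref{lma:EnergyTimeDerivative}, so my plan is to chain that identity with the modified equation of motion and then argue both directions carefully. Specifically, given the modified system $\M_e\xdd + \f_e + \f_f = \zero$, I would first solve for $\M_e\xdd + \f_e = -\f_f$ and substitute into the expression $\dot{\Ham}_e = \xd^\tr(\M_e\xdd + \f_e)$ from Lemma~\ref{lma:EnergyTimeDerivative} to obtain the key identity $\dot{\Ham}_e = -\xd^\tr \f_f$ along any solution trajectory of the modified system.

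From this identity, the $(\Leftarrow)$ direction is immediate: if $\xd^\tr\f_f(\x,\xd) = 0$ as a pointwise identity on $\mathcal{T}\mathcal{X}$, then for any solution $\x(t)$ of $\M_e\xdd + \f_e + \f_f = \zero$ we have $\dot{\Ham}_e(\x(t),\xd(t)) = 0$ for all $t$, so $\Ham_e$ is constant along trajectories, i.e. the system is energy conserving.

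For the $(\Rightarrow)$ direction, I would assume energy conservation for all integral curves and conclude $\xd^\tr\f_f(\x,\xd) = 0$ as a pointwise identity. The step here is to use the arbitrariness of initial conditions: for any $(\x_0,\xd_0)\in\mathcal{T}\mathcal{X}$, the system $\M_e\xdd + \f_e + \f_f = \zero$ has a (local) solution with that initial state, since $\M_e$ is full rank by the definition of an energy Lagrangian and we can solve uniquely for $\xdd$. Energy conservation along this solution gives $\dot{\Ham}_e(\x_0,\xd_0) = 0$, and combining with the identity $\dot{\Ham}_e = -\xd^\tr\f_f$ yields $\xd_0^\tr\f_f(\x_0,\xd_0)=0$. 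Since $(\x_0,\xd_0)$ was arbitrary, this is the required pointwise identity.

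I do not expect a serious obstacle here: the content of the lemma is essentially a rephrasing of Lemma~\ref{lma:EnergyTimeDerivative}. The only point requiring even mild care is the $(\Rightarrow)$ direction, where one must remember that energy conservation is a statement along trajectories while the conclusion $\xd^\tr \f_f = \zero$ is a statement about the vector field, and bridging the two requires invoking the freedom in initial conditions guaranteed by invertibility of $\M_e$. Everything else is direct computation.
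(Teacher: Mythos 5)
Your proposal is correct and follows essentially the same route as the paper: substitute the modified dynamics into the identity $\dot{\Ham}_e = \xd^\tr(\M_e\xdd + \f_e)$ from Lemma~\ref{lma:EnergyTimeDerivative} to get $\dot{\Ham}_e = -\xd^\tr\f_f$, and read off the equivalence. Your extra care on the converse direction (passing from conservation along trajectories to the pointwise identity via arbitrary initial conditions and invertibility of $\M_e$) is a welcome refinement of a step the paper treats as immediate, but it is not a different approach.
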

\begin{proof}
This energy is conserved if its time derivative is zero. Substituting $\xdd = -\M_e^{-1}\big(\f_e + \f_f\big)$ into the Equation~\ref{eqn:EnergyTimeDerivative} of Lemma~\ref{lma:EnergyTimeDerivative} and setting it to zero gives
\begin{align}
    \dot{\Ham}_{\Lag_e} 
    &= \xd^\tr\Big(\M_e \big(-\M_e^{-1}(\f_e + \f_f)\big)+ \f_e\Big) \\
    &= \xd^\tr\big(-\f_e - \f_f + \f_e\big) \\
    &= \xd^\tr\f_f = 0.
\end{align}
Therefore, energy is conserved if and only if final constraint holds.
\end{proof}

\begin{proposition}[Conservative fabrics] \label{prop:ConservativeFabrics}
Suppose $\mathcal{S} = \big(\M_e, \f_e + \f_f\big)_\mathcal{X}$ is a conservative unbiased spec under energy Lagrangian $\Lag_e$ with zero work term $\f_f$. Then $\mathcal{S}$ forms a frictionless fabric.
\end{proposition}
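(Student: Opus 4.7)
The plan is to reduce the claim to Theorem~\ref{thm:GeneralFabrics} applied to each rough (damped) variant $\mathcal{S}_\B = \big(\M_e, \f_e + \f_f + \B\xd\big)_\mathcal{X}$ of $\mathcal{S}$, where $\B(\x,\xd)$ is smooth and positive definite. First I would verify that $\mathcal{S}_\B$ inherits boundary conformance and unbiasedness from $\mathcal{S}$: the metric $\M_e$ is unchanged, and along any convergent trajectory $\xd \to \zero$, so the added damping term $\B\xd$ vanishes in the limit and $\V_\infty^\tr(\f_e + \f_f + \B\xd) \to \V_\infty^\tr(\f_e + \f_f) = \zero$. By Theorem~\ref{thm:GeneralFabrics}, the remaining task is to establish that $\mathcal{S}_\B$ converges under any finite forcing potential $\psi$; once convergence is in hand, the KKT characterization of every limit point is provided by the theorem itself.

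For convergence, I would use the total energy $E(\x,\xd) = \Ham_e(\x,\xd) + \psi(\x)$ as a Lyapunov candidate for the forced damped equation $\M_e\xdd + \f_e + \f_f + \B\xd + \partial_\x\psi = \zero$. Applying Lemma~\ref{lma:EnergyTimeDerivative} to the energy and substituting the forced dynamics gives
\begin{align}
\dot{E} = \xd^\tr\big(\M_e\xdd + \f_e\big) + \partial_\x\psi^\tr\xd = \xd^\tr\big(-\f_f - \B\xd - \partial_\x\psi\big) + \partial_\x\psi^\tr\xd = -\xd^\tr\B\xd,
\end{align}
where the zero-work identity $\xd^\tr\f_f = 0$ from Lemma~\ref{lma:EnergyConservation} cancels the $\f_f$ contribution and the two $\partial_\x\psi$ terms telescope. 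Since $\B \succ 0$, this yields $\dot E \le 0$ with equality only when $\xd = \zero$, so $E$ serves as a Lyapunov function with the standard LaSalle degeneracy on $\{\xd = \zero\}$.

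Monotonicity of $E$ confines every trajectory to the sublevel set $\{E \le E(\x_0,\xd_0)\}$. Because $\Ham_e$ satisfies the boundary limiting condition and $\psi$ is finite on $\mathcal{X}$, this sublevel set is bounded away from $\partial\mathcal{X}$ in the relevant sense, and boundary conformance of $\M_e$ controls $\xd$. A LaSalle-type invariance argument then shows each trajectory accumulates in the largest invariant subset of $\{\xd = \zero\}$, which combined with boundedness yields convergence of $\x(t)$ to some $\x_\infty \in \mathcal{X}$. Feeding this convergence back into Theorem~\ref{thm:GeneralFabrics} together with the already established boundary conformance and unbiasedness shows $\mathcal{S}_\B$ is a rough fabric. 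Since $\B$ was arbitrary, $\mathcal{S}$ is a frictionless fabric.

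The main obstacle is the boundary analysis. Making the sublevel sets of $E$ genuinely precompact requires exploiting the boundary limiting condition on $\Ham_e$ (so that finite energy precludes boundary intersection in finite time) together with the boundary conforming behavior of $\M_e$, which may blow up orthogonal to $\partial\mathcal{X}$ and thus permits asymptotic boundary limits only with vanishing orthogonal velocity. A secondary technicality is promoting LaSalle's accumulation statement to honest trajectory convergence $\x(t) \to \x_\infty$ rather than drift along a connected rest set; this is straightforward when the rest set is isolated, but in the general case demands a careful Barbashin--Krasovskii-style invariance argument compatible with the boundary geometry.
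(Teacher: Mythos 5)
Your proposal follows essentially the same route as the paper's proof: the total energy $\Ham_e + \psi$ as a Lyapunov function, the identical cancellation via the zero-work property giving $\dot{\Ham}_e^\psi = -\xd^\tr\B\xd$, and the reduction to Theorem~\ref{thm:GeneralFabrics} once convergence is in hand. The LaSalle/boundary technicalities you flag as obstacles are treated even more briskly in the paper (which infers $\xd\rightarrow\zero$ and hence convergence directly from lower-boundedness and monotonicity of the total energy), and your explicit check that the damped variant $\big(\M_e,\f_e+\f_f+\B\xd\big)$ remains unbiased is a small detail the paper leaves implicit.
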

\begin{proof}
Let $\psi(\x)$ be a lower bounded finite potential function. Using Lemma~\ref{lma:EnergyTimeDerivative} we can derive an expression for how the total energy $\Ham_e^{\psi} = \Ham_e + \psi(\x)$ varies over time:
\begin{align}
    \dot{H}_e^\psi 
    &= \dot{H}_e + \dot{\psi} \\
    &= \xd^\tr\big(\M_e\xdd + \f_e\big) + \partial_\x\psi^\tr \xd \\
    \label{eqn:TotalEnergyTimeDerivative}
    &= \xd^\tr\big(\M_e\xdd + \f_e + \partial_\x\psi\big).
\end{align}
With damping matrix $\B(\x, \xd)$ let $\M_e\xdd + \f_e + \f_f = -\partial_\x\psi - \B\xd$ be the forced and damped variant of the conservative spec's system. Plugging that system into Equation~\ref{eqn:TotalEnergyTimeDerivative} gives
\begin{align} \label{eqn:SystemEnergyDecrease}
    \nonumber
    \dot{H}_e^\psi
    &= \xd^\tr\Big(\M_e\big(-\M_e^{-1}(\f_e + \f_f + \partial_\x\psi + \B\xd)\big) + \f_e + \partial_\x\psi\Big)\\
    \nonumber
    &= \xd^\tr\Big(-\f_e - \partial_\x\psi - \B\xd + \f_e + \partial_\x\psi\Big) - \xd^\tr\f_f \\
    &= -\xd^\tr\B\xd
\end{align}
since all terms cancel except for the damping term. When $\B$ is strictly positive definite, the rate of change is strictly negative for $\xd\neq\zero$. Since $\Ham_e^{\psi} = \Ham_e + \psi$ is lower bounded and $\dot{\Ham}_e^{\psi}\leq\zero$, we must have $\dot{\Ham}_e^\psi = -\xd^\tr\B\xd \rightarrow \zero$ which implies $\xd\rightarrow\zero$. Therefore, the system is guaranteed to converge.  
Since the system is additionally boundary conforming and unbiased, by Theorem~\ref{thm:GeneralFabrics} it forms a fabric.

Finally, when $\B = \zero$ Equation~\ref{eqn:SystemEnergyDecrease} shows that total energy is conserved, so a system that starts with nonzero energy cannot converge. Therefore, the undamped system is a frictionless fabrics with rough variants defined by the added damping term.
\end{proof}

\begin{corollary}[Lagrangian and Finsler fabrics]
If $\Lag_e(\x, \xd)$ is an unbiased energy Lagrangian, then $\mathcal{S}_e = \big(\M_e, \f_e\big)_\mathcal{X}$ forms a frictionless fabric known as a {\em Lagrangian fabric}. When $\Lag_e$ is a Finsler energy (see Definition~\ref{def:FinslerStructure}), this fabric is known more specifically as a {\em Finsler fabric}.
\end{corollary}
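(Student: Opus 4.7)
The plan is to reduce this directly to Proposition~\ref{prop:ConservativeFabrics} by taking the trivial zero work modification $\f_f = \zero$. The corollary statement is essentially the ``no zero work term'' special case of that proposition, so almost all the real work has already been done; what remains is to verify the hypotheses line up and to invoke the proposition.

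First I would check conservativeness. By Lemma~\ref{lma:EnergyConservation}, a spec of the form $\big(\M_e,\ \f_e + \f_f\big)$ is conservative under the energy Lagrangian $\Lag_e$ exactly when $\xd^\tr\f_f = 0$. Choosing $\f_f = \zero$ satisfies this trivially, so $\mathcal{S}_e = \big(\M_e, \f_e\big)_\mathcal{X}$ is a conservative spec under $\Lag_e$. Second, by hypothesis $\Lag_e$ is unbiased, which by definition means its Lagrangian spec $\mathcal{S}_e = \big(\M_e,\f_e\big)_\mathcal{X}$ is unbiased (and therefore, implicitly, boundary conforming).

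Having verified both hypotheses, I would invoke Proposition~\ref{prop:ConservativeFabrics} with $\f_f = \zero$ to conclude that $\mathcal{S}_e$ forms a frictionless fabric, which is precisely what we mean by a Lagrangian fabric. For the Finsler case, I would note that a Finsler energy is by construction a particular kind of energy Lagrangian (see Definition~\ref{def:FinslerStructure} and the Finsler energy entry in Section~\ref{sec:terminology}), so whenever $\Lag_e$ is additionally a Finsler energy, exactly the same argument applies verbatim and the resulting frictionless fabric is called a Finsler fabric by definition.

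There is no real obstacle here: the corollary is a direct specialization of Proposition~\ref{prop:ConservativeFabrics}. The only subtlety worth a sentence in the write-up is confirming that ``$\Lag_e$ unbiased'' is exactly the condition needed to say the spec $\mathcal{S}_e$ is unbiased in the sense of Definition~\ref{def:Unbiased}, since Proposition~\ref{prop:ConservativeFabrics} requires that property rather than a property of $\Lag_e$ per se; this is immediate from the definition of an unbiased energy Lagrangian.
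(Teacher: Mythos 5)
Your proposal is correct and follows essentially the same route as the paper: take $\f_f = \zero$, note conservativeness via Lemma~\ref{lma:EnergyConservation}, use the unbiasedness hypothesis on $\Lag_e$, and invoke Proposition~\ref{prop:ConservativeFabrics}. The extra remarks verifying that ``$\Lag_e$ unbiased'' means its Lagrangian spec is unbiased, and that a Finsler energy is a special case of an energy Lagrangian, are just a more explicit spelling-out of the paper's two-sentence argument.
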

\begin{proof}
$\M_e \xdd + \f_e = \zero$ is conservative by Lemma~\ref{lma:EnergyConservation} with $\f_f = \zero$. Since it is additionally unbiased, by Proposition~\ref{prop:ConservativeFabrics} it forms a frictionless fabric.
\end{proof}



\subsection{Energization and energized fabrics}
\label{sec:EnergizationAndEnergizedFabrics}

The following lemma collects some results around common matrices and operators that arise when analyzing energy conservation.
\begin{lemma}\label{lma:EnergyProjection}
Let $\Lag_e$ be an energy Lagrangian. Then with $\p_e = \M_e\xd$,
\begin{align}
    \mR_{\p_e} = \M_e^{-1} - \frac{\xd\,\xd^\tr}{\xd^\tr\M_e\xd}
\end{align}
has null space spanned by $\p_e$ and
\begin{align}
    \mR_\xd = \M_e - \frac{\p_e\p_e^\tr}{\p_e^\tr\M_e^{-1}\p_e}
\end{align}
has null space spanned by $\xd$. These matrices are related by
$\mR_\xd = \M_e\mR_{\p_e}\M_e$ and the matrix $\M_e^{-1}\mR_\xd = \M_e\mR_{\p_e} = \mP_e$ is a projection operator of the form
\begin{align} \label{eqn:EnergyProjector}
    \mP_e = \M_e^{\,\frac{1}{2}}\Big[\I - \hat{\vv} \hat{\vv}^\tr\Big]\M_e^{-\frac{1}{2}}
\end{align}
where $\vv = \M_e^{\,\frac{1}{2}}\xd$ and $\hat{\vv} = \frac{\vv}{\|\vv\|}$ is the normalized vector. Moreover, $\xd^\tr\mP_e\f = \zero$ for all $\f(\x,\xd)$.
\end{lemma}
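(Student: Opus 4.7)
The plan is to verify each assertion of the lemma by direct algebraic manipulation, relying solely on the definition $\p_e = \M_e\xd$, the invertibility and symmetry of $\M_e$, and the three bookkeeping identities $\xd^\tr\p_e = \p_e^\tr\xd = \xd^\tr\M_e\xd$ and $\p_e^\tr\M_e^{-1}\p_e = \xd^\tr\M_e\xd$. Nothing here is structural; it is a coordinated calculation that simplifies once these identities are in hand.

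First I would verify the two null-space claims by substitution. For $\mR_{\p_e}$, apply it to $\p_e = \M_e\xd$ to get $\M_e^{-1}\M_e\xd - \xd(\xd^\tr\M_e\xd)/(\xd^\tr\M_e\xd) = \xd - \xd = \zero$. For $\mR_\xd$ apply it to $\xd$ and collapse the fraction using $\p_e^\tr\xd = \xd^\tr\M_e\xd = \p_e^\tr\M_e^{-1}\p_e$. Since each $\mR$ is a rank-one perturbation of an invertible matrix and thus has rank at least $n-1$, the verified kernel vectors span the full null space.

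Next I would compute $\M_e\mR_{\p_e}\M_e$ directly: the first term gives $\M_e\M_e^{-1}\M_e = \M_e$ and the rank-one term yields $\M_e\xd\xd^\tr\M_e/(\xd^\tr\M_e\xd) = \p_e\p_e^\tr/(\p_e^\tr\M_e^{-1}\p_e)$, matching $\mR_\xd$. Then I would expand $\mP_e = \M_e\mR_{\p_e} = \I - \p_e\xd^\tr/(\xd^\tr\p_e)$ and check the Finsler-type factorization by plugging in $\vv = \M_e^{1/2}\xd$ and $\hat\vv = \vv/\|\vv\|$: the cross term becomes $\M_e^{1/2}\hat\vv\hat\vv^\tr\M_e^{-1/2} = \M_e\xd\xd^\tr/(\xd^\tr\M_e\xd) = \p_e\xd^\tr/(\xd^\tr\p_e)$, completing the identification. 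Idempotence of $\mP_e$ is then a one-line check: $(\I - \p_e\xd^\tr/(\xd^\tr\p_e))^2 = \I - 2\p_e\xd^\tr/(\xd^\tr\p_e) + \p_e(\xd^\tr\p_e)\xd^\tr/(\xd^\tr\p_e)^2$, where the middle and third terms combine to restore $\mP_e$.

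Finally, the annihilation property $\xd^\tr\mP_e\f = 0$ follows from a single left multiplication: $\xd^\tr\mP_e = \xd^\tr - (\xd^\tr\p_e)\xd^\tr/(\xd^\tr\p_e) = \zero^\tr$, so the result holds for every $\f$. I do not anticipate any step to pose a genuine obstacle; the only mild pitfall is that the projection form which satisfies $\xd^\tr\mP_e = \zero^\tr$ on the left is $\M_e\mR_{\p_e}$, so one must be careful to use that ordering (rather than $\mR_{\p_e}\M_e$) when invoking the factored form and the final annihilation identity.
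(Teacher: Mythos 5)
Your proposal is correct and follows essentially the same route as the paper: verify the null spaces by substitution with a rank-one-perturbation argument, compute $\M_e\mR_{\p_e}\M_e = \mR_\xd$ directly, identify $\mP_e = \M_e\mR_{\p_e}$ with the factored form, and check idempotence and $\xd^\tr\mP_e = \zero$ by direct multiplication. The only cosmetic difference is that you establish idempotence on the oblique form $\I - \p_e\xd^\tr/(\xd^\tr\p_e)$ directly, whereas the paper conjugates by $\M_e^{1/2}$ and reuses idempotence of the orthogonal projector $\I - \hat{\vv}\hat{\vv}^\tr$; both are one-line checks.
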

\begin{proof}
Right multiplication of $\mR_{\p_e}$ by $\p_e$ gives:
\begin{align}
    \mR_{\p_e} \p_e 
    &= \left(\M_e^{-1} - \frac{\xd\,\xd^\tr}{\xd^\tr\M_e\xd}\right) \M_e\xd \\
    &= \xd - \xd \left(\frac{\xd^\tr\M_e\xd}{\xd^\tr\M_e\xd}\right) = \zero,
\end{align}
so $\p_e$ lies in the null space. Moreover, the null space is no larger since each matrix is formed by subtracting off a rank 1 term from a full rank matrix. 

The relation between $\mR_{\p_e}$ and $\mR_\xd$ can be shown algebraically
\begin{align}
    \M_e\mR_{\p_e}\M_e 
    &= \M_e\left(\M_e^{-1} - \frac{\xd\,\xd^\tr}{\xd^\tr\M_e\xd}\right)\M_e
    = \M_e - \frac{\M_e\xd\,\xd^\tr\M_e}{\xd^\tr\M_e\xd} \\
    &= \M_e - \frac{\p_e\p_e^\tr}{\p_e^\tr\M_e^{-1}\p_e} = \mR_\xd.
\end{align}
Since $\M_e$ has full rank, $\mR_\xd$ has the same rank as $\mR_{\p_e}$ and its null space must be spanned by $\xd$ since $\mR_\xd\xd = \M_e\mR_{\p_e}\M_e\xd = \M_e\mR_{\p_e}\p_e = \zero$.

With a slight algebraic manipulation, we get
\begin{align}
    \M_e\mR_{\p_e} 
    &= \M_e\left(\M_e^{-1} - \frac{\xd\,\xd^\tr}{\xd^\tr\M_e\xd}\right) \\
    &= \M_e^{\,\frac{1}{2}} \left(\I - \frac{\M_e^{\,\frac{1}{2}}\xd\,\xd^\tr\M_e^{\,\frac{1}{2}}}{\xd^\tr\M_e^{\,\frac{1}{2}}\M_e^{\,\frac{1}{2}}\xd}\right) \M_e^{-\frac{1}{2}} \\
    &= \M_e^{\,\frac{1}{2}} \left(\I - \frac{\vv\vv^\tr}{\vv^\tr\vv}\right) \M_e^{-\frac{1}{2}} \\
    &= \mP_e
\end{align}
since $\frac{\vv\,\vv^\tr}{\vv^\tr\vv} = \hat{\vv}\hat{\vv}^\tr$. Moreover, 
\begin{align}
    \mP_e\mP_e 
    &= 
    \M_e^{\,\frac{1}{2}} \left(\I - \frac{\vv\vv^\tr}{\vv^\tr\vv}\right) \M_e^{-\frac{1}{2}}
    \M_e^{\,\frac{1}{2}} \left(\I - \frac{\vv\vv^\tr}{\vv^\tr\vv}\right) \M_e^{-\frac{1}{2}} \\
    &= \M_e^{\,\frac{1}{2}} \left(\I - \frac{\vv\vv^\tr}{\vv^\tr\vv}\right) \left(\I - \frac{\vv\vv^\tr}{\vv^\tr\vv}\right) \M_e^{-\frac{1}{2}} \\
    &= \M_e^{\,\frac{1}{2}} \left(\I - \hat{\vv}\hat{\vv}^\tr\right) \M_e^{-\frac{1}{2}} = \mP_e,
\end{align}
since $\mP_\perp = \I - \hat{\vv}\hat{\vv}^\tr$ is an orthogonal projection operator. Therefore, $\mP_e^2 = \mP_e$ showing that it is a projection operator as well.

Finally, we also have 
\begin{align}
    \xd^\tr\mP_e 
    &= \xd^\tr\M_e\mR_{\p_e}
    = \xd^\tr\M_e\left(\M_e^{-1} - \frac{\xd\,\xd^\tr}{\xd^\tr\M_e\xd}\right)\\
    &= \left[\xd^\tr - \left(\frac{\xd^\tr\M_e\xd}{\xd^\tr\M_e\xd}\right)\xd^\tr\right]
    = \zero.
\end{align}
Therefore, for any $\f$, we have $\xd^\tr\mP_e\f = \zero$.
\end{proof}

\begin{corollary}
Let $\Lag_e$ be an energy Lagrangian and let $\wt{\f}_f(\x, \xd)$ by any forcing term. Then using the projected term $\f_f = \mP_e\wt{\f}_f$ the forced equations of motion
\begin{align} \label{eqn:ForcedEnergySystem}
    \M_e\xdd + \f_e + \f_f = \zero 
\end{align}
are energy conserving.
\end{corollary}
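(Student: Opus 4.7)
The plan is to combine two results that have already been established in the excerpt: the energy conservation characterization from Lemma~\ref{lma:EnergyConservation}, and the null-space property of the projection operator $\mP_e$ from Lemma~\ref{lma:EnergyProjection}. The first says that a spec of the form $\M_e\xdd + \f_e + \f_f = \zero$ conserves the energy associated with $\Lag_e$ if and only if $\xd^\tr \f_f = 0$ (i.e., $\f_f$ is a zero-work modification). The second says that $\mP_e$ satisfies $\xd^\tr \mP_e \f = \zero$ for every $\f(\x,\xd)$. So the whole claim reduces to verifying one scalar equality.

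Concretely, first I would recall the energy derivative expression from Lemma~\ref{lma:EnergyTimeDerivative}, namely $\dot{\Ham}_e = \xd^\tr(\M_e\xdd + \f_e)$. Substituting $\xdd = -\M_e^{-1}(\f_e + \f_f)$ from Equation~\ref{eqn:ForcedEnergySystem} yields $\dot{\Ham}_e = -\xd^\tr \f_f$, so energy conservation is equivalent to $\xd^\tr \f_f = 0$. Then, plugging in the specific choice $\f_f = \mP_e \wt{\f}_f$ gives
\begin{align*}
\xd^\tr \f_f \;=\; \xd^\tr \mP_e \wt{\f}_f \;=\; 0,
\end{align*}
where the last equality is exactly the final identity of Lemma~\ref{lma:EnergyProjection}. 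This completes the argument.

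There is no real obstacle here; the work was already done in Lemma~\ref{lma:EnergyProjection}, where the projector $\mP_e$ was shown to annihilate $\xd$ on the left. The corollary is essentially a repackaging: it identifies $\mP_e$ as a canonical device for turning an arbitrary candidate forcing term $\wt{\f}_f$ into a bona fide zero-work modification, so that the conservation hypothesis of Lemma~\ref{lma:EnergyConservation} is satisfied automatically. The only thing worth emphasizing in the write-up is that $\wt{\f}_f$ is completely arbitrary (it may depend on both $\x$ and $\xd$), because the identity $\xd^\tr \mP_e \f = \zero$ in Lemma~\ref{lma:EnergyProjection} holds pointwise for every $\f$.
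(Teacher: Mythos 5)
Your argument is correct and matches the paper's proof: the paper likewise invokes Lemma~\ref{lma:EnergyProjection} to conclude $\xd^\tr\f_f = \xd^\tr\mP_e\wt{\f}_f = 0$ and then applies Lemma~\ref{lma:EnergyConservation}. Your unpacking of the energy time derivative merely re-derives what Lemma~\ref{lma:EnergyConservation} already encapsulates, so the two proofs are essentially identical.
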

\begin{proof}
By Lemma~\ref{lma:EnergyProjection}, $\xd^\tr\f_f = \xd^\tr\mP_e\f = \zero$, so by Lemma~\ref{lma:EnergyConservation}, Equation~\ref{eqn:ForcedEnergySystem} is energy conserving.
\end{proof}

\begin{proposition}[System energization] \label{prop:SystemEnergization}
Let $\xdd + \h(\x, \xd) = \zero$ be a differential equation, and suppose $\Lag_e$ is any energy Lagrangian with equations of motion $\M_e\xdd + \f_e = \zero$ and energy $\Ham_e$. Then $\xdd + \h(\x, \xd) + \alpha_{\Ham_e}\xd = \zero$ is energy conserving when
\begin{align}\label{eqn:EnergizationTransformAlpha}
    \alpha_{\Ham_e} = -(\xd^\tr\M_e\xd)^{-1}\xd^\tr\big[\M_e\h - \f_e\big],
\end{align}
and differs from the original system by only an acceleration along the direction of motion. The new system can be expressed as:
\begin{align} \label{eqn:ZeroWorkEnergizationForm}
   \M_e\xdd + \f_e + \mP_e\big[\M_e\h - \f_e\big] = \zero.
\end{align}
This modified system is known as an {\em energized system}. Moreover, the operation of {\em energizing} the original system is known as an {\em energization transform} and is denoted using spec notation as
\begin{align}
    \mathcal{S}_{\h}^{\Lag_e} = \Big(\M_e, \f_e + \mP_e\big[\M_e\h - \f_e\big]\Big)_\mathcal{X} = \mathrm{energize}_{\Lag_e}\Big\{\mathcal{S}_\h\Big\},
\end{align}
where $\mathcal{S}_\h = \big(\I, \h\big)_\mathcal{X}$ is the spec representation of $\xdd + \h(\x,\xd) = \zero$.
\end{proposition}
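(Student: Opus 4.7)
The plan is to use Lemma~\ref{lma:EnergyTimeDerivative} as the main lever: energy conservation under the modified flow reduces to a scalar equation in $\alpha$ that can be solved explicitly. The modification $\alpha_{\Ham_e}\xd$ is by construction colinear with $\xd$, so the statement about differing only by an acceleration along the direction of motion is immediate from the form of the perturbation; no separate argument is needed beyond writing it down. The real content is (i) deriving the expression for $\alpha_{\Ham_e}$, and (ii) re-expressing the resulting equation in the projected zero-work form.

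First I would rewrite the perturbed equation $\xdd + \h + \alpha_{\Ham_e}\xd = \zero$ as $\xdd = -\h - \alpha_{\Ham_e}\xd$ and plug into the energy rate identity from Lemma~\ref{lma:EnergyTimeDerivative}. Left-multiplying $\xdd$ by $\M_e$ and adding $\f_e$ yields
\begin{align}
\M_e\xdd + \f_e = -\bigl[\M_e\h - \f_e\bigr] - \alpha_{\Ham_e}\M_e\xd,
\end{align}
so that
\begin{align}
\dot{\Ham}_e \;=\; \xd^\tr\bigl(\M_e\xdd + \f_e\bigr)
\;=\; -\,\xd^\tr\bigl[\M_e\h - \f_e\bigr] \;-\; \alpha_{\Ham_e}\,\xd^\tr\M_e\xd.
\end{align}
Since $\Lag_e$ is an energy Lagrangian, $\M_e$ is full rank and $\xd^\tr\M_e\xd > 0$ whenever $\xd \neq \zero$, so setting $\dot{\Ham}_e = 0$ and solving gives exactly the stated formula for $\alpha_{\Ham_e}$ (and when $\xd = \zero$ the modification is trivially zero and energy is preserved vacuously).

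Next I would verify the projected ``zero work'' representation in Equation~\ref{eqn:ZeroWorkEnergizationForm}. Substituting the solved $\alpha_{\Ham_e}$ back into $\M_e\xdd + \f_e + (\M_e\h - \f_e) + \alpha_{\Ham_e}\M_e\xd = \zero$, the last two terms combine to
\begin{align}
\bigl[\M_e\h - \f_e\bigr] \;-\; \frac{\M_e\xd\,\xd^\tr}{\xd^\tr\M_e\xd}\bigl[\M_e\h - \f_e\bigr]
\;=\; \left(\I - \frac{\M_e\xd\,\xd^\tr}{\xd^\tr\M_e\xd}\right)\bigl[\M_e\h - \f_e\bigr].
\end{align}
Using Lemma~\ref{lma:EnergyProjection}, the bracketed operator is precisely $\M_e\mR_{\p_e} = \mP_e$, which finishes the identification with the claimed form. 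As a consistency check, one can read off that $\xd^\tr\mP_e[\M_e\h - \f_e] = 0$ directly from Lemma~\ref{lma:EnergyProjection}, so the modification is a zero-work term in the sense of Lemma~\ref{lma:EnergyConservation} and the energized spec is conservative under $\Lag_e$.

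The ``hard'' part here is really just bookkeeping: recognizing that the scalar denominator $\xd^\tr\M_e\xd$ that appears naturally in the energy-rate calculation is the same one that appears in $\mR_{\p_e}$, so that the solved-for $\alpha_{\Ham_e}$ collapses cleanly into the projector $\mP_e$. Once this identification is made the statement follows by direct substitution, and the spec-algebra notation $\mathrm{energize}_{\Lag_e}\{\mathcal{S}_\h\}$ is simply reading off the pair $(\M_e, \f_e + \mP_e[\M_e\h - \f_e])$ from Equation~\ref{eqn:ZeroWorkEnergizationForm}.
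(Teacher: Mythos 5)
Your proposal is correct and follows essentially the same route as the paper's proof: solve $\dot{\Ham}_e = \xd^\tr(\M_e\xdd + \f_e) = 0$ for $\alpha_{\Ham_e}$ via Lemma~\ref{lma:EnergyTimeDerivative}, then substitute back and identify $\I - \frac{\M_e\xd\,\xd^\tr}{\xd^\tr\M_e\xd} = \M_e\mR_{\p_e} = \mP_e$ via Lemma~\ref{lma:EnergyProjection}. Your bookkeeping in force form is in fact slightly cleaner than the paper's detour through the auxiliary substitution $\h = \M_e^{-1}(\f_f + \f_e)$, and your handling of the $\xd = \zero$ case is a sensible addition, but the argument is the same.
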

\begin{proof}
Equation~\ref{eqn:EnergyTimeDerivative} of Lemma~\ref{lma:EnergyTimeDerivative} gives the time derivative of the energy. Substituting a system of the form $\xdd = -\h(\x,\xd) - \alpha_{\Ham_e}\xd$, setting it to zero, and solving for $\alpha_{\Ham_e}$ gives
\begin{align}
&\dot{\Ham}_e = \xd^\tr\Big[\M_e\big(-\h - \alpha_{\Ham_e}\xd\big) + \f\Big] = \zero \\
&\Rightarrow -\xd^\tr\M_e\h - \alpha_{\Ham_e} \xd^\tr\M_e\xd + \xd^\tr\f_e = \zero \\
&\Rightarrow \alpha_{\Ham_e} = -\frac{\xd^\tr\M_e\h - \xd^\tr\f_e}{\xd^\tr\M_e\xd}\\
&\ \ \ \ \ \ \ \ \ \ = 
-(\xd^\tr\M_e\xd)^{-1}\xd^\tr\big[\M_e\h - \f_e\big].
\end{align}
This result gives the formula for Equation~\ref{eqn:EnergizationTransformAlpha}. Substituting this solution for $\alpha_{\Ham_e}$ back in gives
\begin{align}
    \xdd 
    &= -\h + \left(\frac{\xd^\tr}{\xd^\tr\M_e\xd}\big[\M_e\h - \f_e\big]\right) \xd \\
    &= -\h + \left[\frac{\xd\,\xd^\tr}{\xd^\tr\M_e\xd}\right]\big(\M_e\h - \f_e\big).
\end{align}
Algebraically, it helps to introduce $\h = \M_e^{-1}\big[\f_f + \f_e\big]$ to first express the result as a difference away from $\f_e$; in the end we will convert back to $\h$. Doing so and moving all the terms to the left hand side of the equation gives
\begin{align}
    &\ \ \ \ \xdd + \h - \left[\frac{\xd\,\xd^\tr}{\xd^\tr\M_e\xd}\right]\big(\M_e\h - \f_e\big) = \zero \\
    &\Rightarrow \xdd + \M_e^{-1}\big[\f_f + \f_e\big] - \left[\frac{\xd\,\xd^\tr}{\xd^\tr\M_e\xd}\right]\Big(\M_e\M_e^{-1}\big[\f_f + \f_e\big] - \f_e\Big) = \zero \\
    &\Rightarrow \M_e \xdd + \f_f + \f_e - \M_e\left[\frac{\xd\,\xd^\tr}{\xd^\tr\M_e\xd}\right]\Big(\f_f + \big(\f_e - \f_e\big)\Big) = \zero \\
    &\Rightarrow \M_e \xdd + \f_e + \M_e\left[\M_e^{-1} - \frac{\xd\,\xd^\tr}{\xd^\tr\M_e\xd}\right] \f_f = \zero \\
    &\Rightarrow \M_e \xdd + \f_e + \M_e\mR_{\p_e}\big(\M_e\h - \f_e\big) = \zero,
\end{align}
where we substitute $\h = \M_e\f_f - \f_e$ back in. By Lemma~\ref{lma:EnergyProjection} $\M_e\mR_{\p_e} = \mP_e$, so we get Equation~\ref{eqn:ZeroWorkEnergizationForm}.
\end{proof}

\begin{lemma} \label{lma:LinearCombinationsUnbiased}
Suppose $\M_e$ is boundary conforming. Then if $\big(\M_e, \f\big)$ and $\big(\M_e, \g\big)$ are both unbiased, then $\big(\M_e, \alpha \f + \beta\g\big)$ is unbiased. 
\end{lemma}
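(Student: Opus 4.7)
The proof reduces to a linearity argument. I split the verification into two pieces: boundary conformance of $(\M_e, \alpha\f + \beta\g)$, and the unbiased projection condition itself.

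For boundary conformance, the metric $\M_e$ is boundary conforming by hypothesis, so $\|\M_e^{-1}\|$ remains finite along any trajectory limiting to $\partial\mathcal{X}$. Finiteness of the combined force $\alpha\f + \beta\g$ across $\mathcal{T}\mathcal{X}$ follows from the triangle inequality applied to the individual finiteness conditions inherited from the boundary conformance of $(\M_e, \f)$ and $(\M_e, \g)$. The corresponding limiting-direction bound follows from
\begin{align}
\|\V_\infty^\tr(\alpha\f + \beta\g)\| \leq |\alpha|\,\|\V_\infty^\tr \f\| + |\beta|\,\|\V_\infty^\tr \g\|,
\end{align}
each term finite by hypothesis.

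For the unbiased condition itself, I would consider any convergent trajectory $\x(t) \to \x_\infty$ of the combined spec; convergence forces $\xd \to \zero$, and smoothness of $\f$ and $\g$ makes the pointwise limits $\f(\x_\infty, \zero)$ and $\g(\x_\infty, \zero)$ well-defined. Reading unbiasedness pointwise on the force — the reading implicitly used in the converse direction of Theorem~\ref{thm:GeneralFabrics}, which characterizes biasedness via the existence of some $\x^* \in \mathcal{X}$ with $\f(\x^*, \zero) \neq \zero$ — the individual unbiased hypotheses yield $\V_\infty^\tr \f(\x_\infty, \zero) = \zero$ and $\V_\infty^\tr \g(\x_\infty, \zero) = \zero$. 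Linearity and continuity then give
\begin{align}
\V_\infty^\tr(\alpha\f + \beta\g)(\x, \xd) \longrightarrow \alpha\,\V_\infty^\tr\f(\x_\infty, \zero) + \beta\,\V_\infty^\tr\g(\x_\infty, \zero) = \zero.
\end{align}

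The main subtlety — and the only potential obstacle — is that trajectories of $(\M_e, \alpha\f + \beta\g)$ need not coincide with trajectories of either individual spec, so trajectory-wise conclusions about $\f$ or $\g$ are not directly transferable. This is precisely why I read unbiasedness as a pointwise condition on the zero-velocity value of the force (justified by the converse of Theorem~\ref{thm:GeneralFabrics}); under that reading, the claim reduces to the straightforward linear combination above, and no delicate analysis of the combined spec's flow is required.
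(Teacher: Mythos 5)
Your overall skeleton---linearity plus a split into interior and boundary limit points---matches the paper's, and your check of the finiteness conditions for the combined force is a reasonable addition the paper omits. The problem is the detour through a pointwise reading of unbiasedness, which is both unnecessary and, as executed, not justified. Definition~\ref{def:Unbiased} quantifies over \emph{every} convergent trajectory $\x(t)$, not just integral curves of the spec being tested; that is exactly how the paper uses it (in Theorem~\ref{thm:GeneralFabrics} the unbiased hypothesis on $(\M,\f)$ is applied along trajectories of the \emph{forced} system, which are not integral curves of $(\M,\f)$). So the ``only potential obstacle'' you identify is not one: along a convergent trajectory of $(\M_e,\alpha\f+\beta\g)$ the hypotheses already give $\V_\infty^\tr\f\rightarrow\zero$ and $\V_\infty^\tr\g\rightarrow\zero$ directly, and the lemma follows by linearity of limits. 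That is the paper's short argument; for boundary limits it phrases the same thing via a decomposition of $\f$ and $\g$ into boundary-parallel components (which vanish) and orthogonal components (whose linear combination remains orthogonal to $\mathcal{T}_{\x_\infty}\partial\mathcal{X}$).

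The detour itself has two genuine gaps. First, the implication you need is ``unbiased $\Rightarrow$ $\V_\infty^\tr\f(\x_\infty,\zero)=\zero$ at the relevant limit point,'' but the converse step of Theorem~\ref{thm:GeneralFabrics} you cite asserts the opposite direction (biased $\Rightarrow$ there exists $\x^*$ with $\f(\x^*,\zero)\neq\zero$, i.e.\ pointwise vanishing implies unbiased); under your integral-curve reading of the definition the direction you need can simply fail, since no convergent integral curve of $(\M_e,\f)$ need limit to the particular $\x_\infty$ produced by the combined spec. Second, even granting pointwise vanishing, your limit $\V_\infty^\tr\f(\x,\xd)\rightarrow\V_\infty^\tr\f(\x_\infty,\zero)$ at a boundary limit point requires continuity of the force up to $\partial\mathcal{X}$, which boundary conformance deliberately does not supply---it only demands that $\lim\big\|\V_\infty^\tr\f\big\|$ be finite, and the barrier-type terms this framework is designed for blow up near the boundary, so the trajectory limit need not equal the pointwise boundary value. (Neither you nor the paper verifies condition (1), interiority, of boundary conformance for the combined spec; the paper effectively proves only the limit condition, so that omission is not held against you.) The repair is simply to drop the pointwise reformulation and apply Definition~\ref{def:Unbiased} as stated along the combined spec's convergent trajectory.
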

\begin{proof}
Suppose $\x(t)$ is a convergent trajectory with $\x\rightarrow\x_\infty$ If $\x_\infty\in\mathrm{int}(\mathcal{X})$, then $\f\rightarrow \zero$ and $\g\rightarrow \zero$, so $\alpha \f + \beta\g\rightarrow \zero$. If $\x_\infty\in\partial\mathcal{X}$, then $\f = \f_1 + \f_2$ with $\f_1\rightarrow \zero$ and $\f_2\rightarrow \f_\perp\perp\mathcal{T}_{\x_\infty}\partial\mathcal{X}$, and similarly with $\g = \g_1 + \g_2$ with $\g_1\rightarrow \zero$ and $\g_2\rightarrow \g_\perp\perp\mathcal{T}_{\x_\infty}\partial\mathcal{X}$. Then $\alpha \f + \beta \g = \big(\alpha \f_1 + \beta \f_2\big) + \big(\alpha \g_1 + \beta \g_2\big) \rightarrow \alpha \g_1 + \beta \g_2$ which is orthogonal to $\mathcal{T}_{\x_\infty}\partial\mathcal{X}$ since each component in the linear combination is. Therefore, $\alpha \f + \beta \g$ is unbiased.
\end{proof}

For completeness, we state a standard result from linear algebra here (see \cite{gallier2020LinearAlgAndOpt} Proposition 8.7 for a proof). This result will be used in the below lemmas leading up to the proof that energized systems are unbiased.
\begin{proposition}\label{lem:MatrixVectorProductNormBound}
For every matrix $\A$, there exists a contant $c>0$ such that $\|\A\uu\|\leq c\|\uu\|$ for every vector $\uu$, where $\|
\cdot\|$ can be any norm.
\end{proposition}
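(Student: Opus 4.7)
The plan is to reduce the inequality to a supremum over the unit sphere, then invoke compactness to bound that supremum. Concretely, assuming $\uu\neq\zero$ (the claim is trivial at $\zero$), by positive homogeneity of any norm we can write
\begin{align}
    \|\A\uu\| = \|\uu\|\,\left\|\A\,\frac{\uu}{\|\uu\|}\right\|,
\end{align}
so it suffices to show that $c \;=\; \sup_{\|\vv\|=1}\|\A\vv\|$ is finite; the inequality $\|\A\uu\|\leq c\|\uu\|$ then follows immediately.

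To show $c<\infty$, I would first note that the map $\vv\mapsto\|\A\vv\|$ is continuous from $\R^n$ (equipped with $\|\cdot\|$) to $\R$, because $\A$ is linear and continuous with respect to $\|\cdot\|$, and the norm itself is continuous. Next, I would use the fact that the unit sphere $S=\{\vv:\|\vv\|=1\}$ is closed and bounded, hence compact, in the finite-dimensional normed space $(\R^n,\|\cdot\|)$. A continuous real-valued function on a compact set attains its supremum, so $c=\max_{\vv\in S}\|\A\vv\|<\infty$, which completes the argument.

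The one point that requires care—and is the main obstacle—is justifying continuity of $\vv\mapsto\A\vv$ and compactness of the unit sphere for an \emph{arbitrary} norm, not just the Euclidean one. The cleanest way to handle this is to appeal to the theorem that all norms on a finite-dimensional vector space are equivalent: there exist constants $a,b>0$ with $a\|\vv\|_2\leq\|\vv\|\leq b\|\vv\|_2$. This equivalence transfers both continuity (open sets in $\|\cdot\|$ agree with those in $\|\cdot\|_2$) and compactness (a set is compact in one norm iff in the other) from the standard Euclidean setting, where both properties are classical (Heine--Borel and continuity of linear maps on $\R^n$). With norm equivalence in hand, the compactness/continuity argument above goes through for the given norm, yielding the desired constant $c$.
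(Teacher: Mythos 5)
Your proof is correct. Note, however, that the paper does not actually prove this statement—it is stated as a standard fact with a citation to Gallier (Proposition 8.7)—so there is no internal proof to compare against; your argument fills that gap with the standard one: reduce to the unit sphere by homogeneity, then use continuity of $\vv\mapsto\|\A\vv\|$ and compactness of the sphere to extract a finite maximum. One remark on logical economy: the continuity and compactness you need for an arbitrary norm are obtained by citing equivalence of norms on finite-dimensional spaces, which is legitimate, but that equivalence theorem is itself usually proved by exactly the same compactness machinery. A slightly more self-contained route (and essentially the one in the cited reference) is to expand $\uu=\sum_i u_i\eb_i$ in a basis and use the triangle inequality and absolute homogeneity to get $\|\A\uu\|\leq\big(\max_i\|\A\eb_i\|\big)\sum_i|u_i|$, then convert $\sum_i|u_i|$ to $\|\uu\|$ by a single application of norm equivalence; this avoids any appeal to continuity of $\A$ in the arbitrary norm. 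Either way the constant $c$ exists, and your write-up is sound as given.
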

In the below, for concreteness, we take the matrix norm to be the Frobenius norm.

\begin{lemma} \label{lma:ProjectedUnbiasedIsUnbiased}
Let $\M_e$ be boundary conforming. Then if $\f$ is unbiased with respect to $\M_e$, $\mP_e\f$ is unbiased, where 
$\mP_e = \M_e^{\,\frac{1}{2}}\big[\I - \hat{\vv}\,\hat{\vv}^\tr\big]\M_e^{-\frac{1}{2}}$ is the projection operator defined in Equation~\ref{eqn:EnergyProjector}.
\end{lemma}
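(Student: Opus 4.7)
The plan is to verify the unbiased condition $\V_\infty^\tr(\mP_e\f)\to\zero$ along each convergent trajectory $\x(t)\to\x_\infty$ of the spec $(\M_e,\mP_e\f)$, splitting the argument on whether $\x_\infty\in\mathrm{int}(\mathcal{X})$ or $\x_\infty\in\partial\mathcal{X}$. In the interior case, $\V_\infty$ is full rank so the condition reduces to $\mP_e\f\to\zero$; since $\M_e$ is continuous and its inverse exists at every interior point, both $\|\M_e^{\,\frac{1}{2}}\|$ and $\|\M_e^{-\frac{1}{2}}\|$ are bounded on a compact neighborhood of $\x_\infty$ containing the trajectory's tail. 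Combining the factorization $\mP_e = \M_e^{\,\frac{1}{2}}(\I - \hat{\vv}\hat{\vv}^\tr)\M_e^{-\frac{1}{2}}$ with Proposition~\ref{lem:MatrixVectorProductNormBound} then gives $\|\mP_e\|$ bounded along the trajectory, and unbiasedness of $\f$ at an interior point forces $\f\to\zero$, so $\mP_e\f\to\zero$ immediately.

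For the boundary case $\x_\infty\in\partial\mathcal{X}$, I would exploit the same factorization by writing $\mP_e\f = \M_e^{\,\frac{1}{2}}\wb$ with $\wb = (\I - \hat{\vv}\hat{\vv}^\tr)\M_e^{-\frac{1}{2}}\f$, so that $\V_\infty^\tr \mP_e\f = (\M_e^{\,\frac{1}{2}}\V_\infty)^\tr \wb$ and $\|\V_\infty^\tr\mP_e\f\| \leq \|\M_e^{\,\frac{1}{2}}\V_\infty\|\cdot\|\wb\|$ by Cauchy-Schwarz. The first step is to show $\wb\to\zero$: since $\I - \hat{\vv}\hat{\vv}^\tr$ is an orthogonal projector, $\|\wb\|\leq\|\M_e^{-\frac{1}{2}}\f\| = \sqrt{\f^\tr\M_e^{-1}\f}$, and I would argue $\f^\tr\M_e^{-1}\f\to 0$ by decomposing $\f = \f^\paral + \f^\perp$ as in the proof of Lemma~\ref{lma:LinearCombinationsUnbiased}. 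Unbiasedness gives $\f^\paral\to\zero$ so $(\f^\paral)^\tr \M_e^{-1}\f^\paral\to 0$ by the boundedness of $\M_e^{-1}$, and the remark following Definition~\ref{def:BoundaryConformingSpec} identifies the null space of $\M_\infty^{-1}$ with the direction normal to $\mathcal{T}_{\x_\infty}\partial\mathcal{X}$, so $\M_e^{-1}\f^\perp\to\zero$ and $(\f^\perp)^\tr\M_e^{-1}\f^\perp\to 0$; the cross term is absorbed by Cauchy-Schwarz.

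The main obstacle is the second factor: showing $\|\M_e^{\,\frac{1}{2}}\V_\infty\|$, equivalently $\V_\infty^\tr\M_e\V_\infty$, stays bounded as $\x\to\x_\infty$. The plan is to extract this from the limiting spectral structure of $\M_e$. Since $\mathrm{col}(\M_\infty^{-1}) = \mathcal{T}_{\x_\infty}\partial\mathcal{X}$ under boundary conformance, the vanishing eigendirections of $\M_e^{-1}$ asymptotically align with the direction normal to the boundary, so reciprocally the unbounded eigenvalues of $\M_e$ lie along that normal direction while the eigenvalues associated with $\V_\infty$ remain bounded in the limit. With that spectral alignment in hand, $\V_\infty^\tr \M_e\V_\infty$ is bounded, Cauchy-Schwarz yields $\|\V_\infty^\tr\mP_e\f\|\to 0$, and unbiasedness of $\mP_e\f$ follows. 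I expect the eigenvector alignment step to be the delicate piece, requiring a perturbation argument that controls the off-diagonal blocks of $\M_e^{-1}$ relative to the $(\V_\infty,\U_\infty)$ decomposition.
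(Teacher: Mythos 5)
Your interior-point argument is essentially the paper's (bounded operator applied to a vanishing vector) and is fine. The boundary case is where you depart from the paper, and it contains two genuine gaps. First, the claim that $\f^\tr\M_e^{-1}\f\rightarrow 0$ does not follow from the hypotheses. The remark after Definition~\ref{def:BoundaryConformingSpec} allows \emph{two} cases: $\M_\infty^{-1}$ may be full rank, in which case it has no null space aligned with the normal direction and a non-vanishing $\f^\perp$ contributes a non-vanishing amount to $\f^\tr\M_e^{-1}\f$; and even in the reduced-rank case, boundary conformance only controls the tangential part $\V_\infty^\tr\f$, so $\f^\perp$ may diverge as $\x\rightarrow\x_\infty\in\partial\mathcal{X}$ and $\M_e^{-1}\f^\perp$ is an indeterminate product that need not vanish. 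Second, the boundedness of $\V_\infty^\tr\M_e\V_\infty$, which you correctly flag as the delicate step, is not recoverable from the stated assumptions: boundary conformance (and even the separate boundary-aligned property) constrains the \emph{limit} of $\M_e^{-1}$, not the rate at which the eigenvectors of $\M_e$ align with the tangent/normal split. For example, with tangent direction $\tb$ and normal $\nb$, a metric whose inverse has eigenvalues $(1,\epsilon(t))$ with the small eigenvector tilted into the tangent plane by an angle $\delta(t)\rightarrow 0$ satisfies $\M_e^{-1}\rightarrow\mathrm{diag}(1,0)$ (boundary conforming, even boundary aligned), yet $\tb^\tr\M_e\tb\approx \delta(t)^2/\epsilon(t)$ can blow up. So the factorization through $\M_e^{\,\frac{1}{2}}$ and Cauchy--Schwarz forces you to control two intermediate quantities that the hypotheses do not control, even though only their contraction $\V_\infty^\tr\mP_e\f$ needs to vanish.

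The paper's proof takes a different and cheaper route: it uses only that $\mP_e$ is a projection, decomposes $\f^\paral$ and $\f^\perp$ along the range/kernel of $\mP_e$, and tracks which pieces survive, never needing a bound on $\M_e$ (as opposed to $\M_e^{-1}$) in tangential directions nor the vanishing of $\M_e^{-\frac{1}{2}}\f$. If you want to salvage a quantitative version of your argument, work with the explicit form $\mP_e = \I - \M_e\xd\,\xd^\tr/(\xd^\tr\M_e\xd)$, so that
\begin{align}
\V_\infty^\tr\mP_e\f = \V_\infty^\tr\f - \frac{\xd^\tr\f}{\xd^\tr\M_e\xd}\,\V_\infty^\tr\M_e\xd,
\end{align}
where the first term vanishes by unbiasedness of $\f$; the real content of the lemma is then controlling the second term directly, rather than through separate norm bounds on $\M_e^{\,\frac{1}{2}}\V_\infty$ and $\M_e^{-\frac{1}{2}}\f$.
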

\begin{proof}
Let $\x(t)$ be convergent with $\x\rightarrow\x_\infty$ as $t\rightarrow\infty$. If $\x_\infty\in\mathrm{int}(\mathcal{X})$, then since $\mP_e$ is finite on $\mathrm{int}(\mathcal{X})$, by Lemma~\ref{lem:MatrixVectorProductNormBound} there exists a constant $c>0$ such that $\|\mP_e\f\|\leq c\|\f\|$. Since $\|\f\|\rightarrow 0$ as $t\rightarrow 0$, it must be that $\|\mP_e\f\|\rightarrow 0$. 

Consider now the case where $\x_\infty\in\partial\mathcal{X}$. Since $\mP_e$ is a projection operator, for every $\z$ there must be a decomposition into linearly independent components $\z = \z_1 + \z_2$ with $\z_2$ in the kernel such that $\mP_e\z = \mP_e\z_1 + \mP_e\z_2 = \z_1$ (so that $\mP_e^2\z = \mP_e\z_1 \z_1 = \mP_e\z$). Thus, if $\z$ has a property if and only if all elements of a decomposition have that property, then $\mP_e\z = \z_1$ must have that property as well. Specifically, if $\z$ lies in a subspace (respectively, lies orthogonal to a subspace) then $\mP_e\z = \z_1$ lies in that subspace as well (respectively, lies orthogonal to a subspace).

The property of being unbiased is defined by the behavior of the decomposition $\f = \f^\paral + \f^\perp$, where those components are respectively parallel and perpendicular to $\mathcal{T}_{\x_\infty}\partial\mathcal{X}$ in the limit. Following the above outlined subscript convention to characterize the behavior of the projection, we have
\begin{align}
    \mP_e\f 
    &= \mP_e\Big(\f^\paral + \f^\perp\Big)
    = \mP_e\f^\paral + \mP_e\f^\perp\\
    &= \f^{\paral}_1 + \f^{\perp}_1
    \rightarrow \f^{\perp}_1
\end{align}
since $\f$ being unbiased implies $\f^\paral\rightarrow \zero$ and hence $\f^\paral_1\rightarrow \zero$.

Therefore, in both cases, $\mP_e\f$ satisfies the conditions of being unbiased if $\f$ does. 
\end{proof}

\begin{lemma} \label{lma:BoundaryAlignedMetricTransformUnbiased}
Suppose $\M_e(\x,\xd)$ is boundary conforming and boundary aligned. If $\big(\I,\h\big)$ is unbiased then $\big(\M_e, \M_e\h\big)$ is unbiased.
\end{lemma}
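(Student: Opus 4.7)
The plan is to handle the two cases of the limit point separately, mirroring the structure of the proof of Lemma~\ref{lma:ProjectedUnbiasedIsUnbiased}. First I would fix an arbitrary convergent trajectory $\x(t) \to \x_\infty$ and split on whether $\x_\infty \in \mathrm{int}(\mathcal{X})$ or $\x_\infty \in \partial\mathcal{X}$, showing in each case that $\M_e\h$ satisfies the limiting condition of Definition~\ref{def:Unbiased} relative to $\M_e$.

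For the interior case, unbiasedness of $(\I,\h)$ forces $\h \to \zero$. Boundary conformance of $\M_e$ guarantees $\M_e$ is finite at $\x_\infty$, so Proposition~\ref{lem:MatrixVectorProductNormBound} supplies a constant $c>0$ with $\|\M_e \h\| \le c\,\|\h\|$ in a neighborhood of the limit, and hence $\M_e \h \to \zero$. Since $\V_\infty$ is full rank in the interior, this is exactly the condition required.

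For the boundary case, I would write $\h = \h^\paral + \h^\perp$, where in the limit $\h^\paral$ lies in $\mathcal{T}_{\x_\infty}\partial\mathcal{X}$ and $\h^\perp$ lies in its orthogonal complement; unbiasedness of $(\I,\h)$ gives $\h^\paral \to \zero$. The key leverage is boundary alignment of $\M_e$: its eigenspaces asymptotically split into a parallel block and a perpendicular block aligned with the boundary's tangent/normal decomposition. Consequently $\M_e\h^\paral$ stays (to leading order) in the parallel subspace, so $(\M_e\h)^\paral = \M_e^{\paral\paral}\h^\paral + o(1)$, where $\M_e^{\paral\paral}$ denotes the restriction of $\M_e$ to the parallel subspace. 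Boundary conformance ensures $\M_e^{\paral\paral}$ remains finite in the limit (the only direction along which $\M_e$ may explode is perpendicular to the boundary), so once again Proposition~\ref{lem:MatrixVectorProductNormBound} yields $\|(\M_e\h)^\paral\| \le c\,\|\h^\paral\| \to 0$. Finally, $\V_\infty^\tr (\M_e\h) \to \zero$ since $\V_\infty$ spans precisely $\mathcal{T}_{\x_\infty}\partial\mathcal{X}$, and the perpendicular component of $\M_e\h$ is annihilated by $\V_\infty^\tr$ regardless of whether it stays bounded or not.

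The main obstacle is justifying the decomposition step cleanly in the second case: one needs the boundary-alignment hypothesis to guarantee not merely that $\M_e$ has a finite parallel block and a possibly diverging perpendicular block, but that these blocks do not couple across the decomposition in the limit, so that the divergent normal direction cannot contaminate the parallel component of $\M_e\h$. Once that structural fact is extracted from the definition of boundary alignment, the rest reduces to the same norm-bound argument used throughout this subsection.
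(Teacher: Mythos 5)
Your proposal is correct and follows essentially the same route as the paper: the interior case via the norm bound of Proposition~\ref{lem:MatrixVectorProductNormBound}, and the boundary case via the eigen-decomposition of $\M_e$ into a parallel block (finite, spanning $\mathcal{T}_{\x_\infty}\partial\mathcal{X}$ in the limit) and a perpendicular block, so that the parallel component of $\M_e\h$ vanishes with $\h^\paral$ and the remainder is annihilated by $\V_\infty^\tr$. The coupling issue you flag as the main obstacle is handled in the paper simply by expressing $\h$ in the limiting eigenbasis of $\M_e$ itself (i.e.\ $\h = \V_\paral\y_1 + \V_\perp\y_2$ with $\y_1\rightarrow\zero$), so the block structure is exactly orthogonal rather than only ``to leading order.''
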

\begin{proof}
Let $\x(t)$ be convergent with $\x\rightarrow\x_\infty$ as $t\rightarrow\infty$. If $\x_\infty\in\mathrm{int}(\mathcal{X})$, then since $\M_e$ is finite on $\mathcal{T}\mathrm{int}(\mathcal{X})$, by Lemma~\ref{lem:MatrixVectorProductNormBound} there exists a constant $c>0$ such that $\|\M_e\h\|\leq c\|\h\|$. Since $\|\h\|\rightarrow 0$ as $t\rightarrow 0$, it must be that $\|\M_e\h\|\rightarrow 0$. 

Since $\M_e$ is boundary aligned, there is a subset of Eigenvectors that span the tangent space in the limit as $\x(t)\rightarrow\x_\infty\in\partial\mathcal{X}$. Let $\V_\paral$ denote a matrix containing the Eigenvectors that limit to spanning the tangent space, with $\D_\paral$ a diagonal matrix containing the corresponding Eigenvalues. Likewise, let $\V_\perp$ contain the remaining (perpendicular in the limit) Eigenvectors, with Eigenvalues $\D_\perp$. The metric decomposes as $\M_e = \M_e^\paral + \M_e^\perp$ with $\M_e^\paral = \V_\paral\D_\paral\V_\paral^\tr$ and $\M_e^\perp = \V_\perp\D_\perp\V_\perp^\tr$. Since $\h$ is unbiased, we can express $\h = \h^\paral + \h^\paral = \V_\paral\y_1 + \V_\perp\y_2$, where $\y_1$ and $\y_2$ are coefficients, and $\y_1\rightarrow \zero$ as $t\rightarrow\infty$. Therefore,
\begin{align}
    \M_e\h &= \Big(\M_e^\paral + \M_e^\perp\Big)\big(\h^\paral + \h^\perp\big)\\
    &= \Big(
        \V_\paral\D_\paral\V_\paral^\tr + \V_\perp\D_\perp\V_\perp^\tr
    \Big) \big(\V_\paral\y_1 + \V_\perp\y_2\big) \\
    &= \V_\paral\D_\paral \y_1 + \V_\perp\D_\perp \y_2 \\
    &\rightarrow \V_\perp\z 
    \ \ \ \perp\ \ \ \mathcal{T}_{\x_\infty}\partial\mathcal{X}
    \ \ \ \mbox{where $\z = \D_\perp \y_2$}.
\end{align}
Therefore, when $\x_\infty\in\partial\mathcal{X}$, the component parallel to the tangent space vanishes in the limit.
\end{proof}

\begin{definition}
A metric $\M(\x, \xd)$ is said to be {\em boundary aligned} if for any convergent $\x(t)$ with $\x_\infty\in\partial\mathcal{X}$ the limit $\lim_{t\rightarrow\infty}\M^{-1}(\x,\xd) = \M_\infty^{-1}$ exists and is finite, and $\mathcal{T}_{\x_\infty}\partial\mathcal{X}$ is spanned by a subset of Eigen-basis of $\M_\infty^{-1}$.
\end{definition}

\begin{lemma} \label{lma:EnergizationPreservesUnbiasedProperty}
Suppose $\mathcal{S}_\h = \big(\I, \h\big)$ is an unbiased (acceleration) spec and $\Lag_e$ is an unbiased energy Lagrangian with boundary aligned metric $\M_e$. Then $\mathcal{S}_\h^{\Lag_e} = \mathrm{energize}_{\Lag_e}\big\{\mathcal{S}_\h\big\}$ is unbiased.
\end{lemma}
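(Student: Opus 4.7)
The plan is to exhibit the energized forcing term as a linear combination of pieces each already shown to be unbiased by the three preceding lemmas, and then invoke Lemma~\ref{lma:LinearCombinationsUnbiased} to conclude.

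First I would recall from Proposition~\ref{prop:SystemEnergization} the explicit form of the energized spec,
\begin{align}
\mathcal{S}_\h^{\Lag_e} = \Big(\M_e,\ \f_e + \mP_e\big[\M_e\h - \f_e\big]\Big)_\mathcal{X},
\end{align}
and rewrite the forcing as $\f_e + \mP_e\M_e\h - \mP_e\f_e$. So the task reduces to showing each of the three summands $\f_e$, $\mP_e\M_e\h$, and $\mP_e\f_e$ is unbiased with respect to the (common) boundary conforming metric $\M_e$; Lemma~\ref{lma:LinearCombinationsUnbiased} then combines them.

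Next I would verify each summand in turn. The term $\f_e$ is unbiased by the hypothesis that $\Lag_e$ is an unbiased energy Lagrangian, since by definition this means the Lagrangian spec $(\M_e,\f_e)$ is unbiased. For $\mP_e\M_e\h$, first apply Lemma~\ref{lma:BoundaryAlignedMetricTransformUnbiased}: the hypotheses that $(\I,\h)$ is unbiased and that $\M_e$ is boundary conforming and boundary aligned yield that $(\M_e, \M_e\h)$ is unbiased; then apply Lemma~\ref{lma:ProjectedUnbiasedIsUnbiased} to the unbiased term $\M_e\h$ to conclude $\mP_e\M_e\h$ is unbiased. For $\mP_e\f_e$, apply Lemma~\ref{lma:ProjectedUnbiasedIsUnbiased} directly to $\f_e$.

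Finally, with all three summands unbiased with respect to $\M_e$, two applications of Lemma~\ref{lma:LinearCombinationsUnbiased} (first combine $\f_e$ and $\mP_e\M_e\h$, then subtract $\mP_e\f_e$) establish that $\f_e + \mP_e\M_e\h - \mP_e\f_e$ is unbiased, i.e.\ $\mathcal{S}_\h^{\Lag_e}$ is unbiased. The only substantive obstacle is confirming that the three preceding lemmas indeed match up against exactly the hypotheses given: one must note that boundary alignment of $\M_e$ (needed for Lemma~\ref{lma:BoundaryAlignedMetricTransformUnbiased}) is provided, that $\mP_e$ has the projection structure of Equation~\eqref{eqn:EnergyProjector} (needed for Lemma~\ref{lma:ProjectedUnbiasedIsUnbiased}), and that $\M_e$ is the same boundary conforming metric throughout (needed for Lemma~\ref{lma:LinearCombinationsUnbiased}). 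These are all immediate from the statement, so the proof is essentially an assembly of earlier results.
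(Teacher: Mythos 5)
Your proposal is correct and follows essentially the same route as the paper: both arguments assemble the same three lemmas (boundary-aligned metric transform, projection preserves unbiasedness, linear combinations preserve unbiasedness) applied to the energized forcing term from Proposition~\ref{prop:SystemEnergization}. The only difference is ordering — you distribute $\mP_e$ across the bracket and project each piece before combining, while the paper first forms the unbiased combination $\M_e\h - \f_e$ and then projects once; since $\mP_e$ is linear, these are interchangeable.
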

\begin{proof}
$\M_e$ is boundary conforming by hypothesis on $\Lag_e$. Moreover, the energized equation takes the form
\begin{align}
    \M_e\xdd + \f_e + \mP_e\big[\M_e\h - \f_e\big] = \zero
\end{align}
as shown in Proposition~\ref{prop:SystemEnergization}. By Lemma~\ref{lma:BoundaryAlignedMetricTransformUnbiased} $\M_e\h$ is unbiased since $\h$ is unbiased, and likewise $\M_e\h - \f_e$ is unbiased by Lemma~\ref{lma:LinearCombinationsUnbiased} since $\f_e$ is unbiased by hypothesis on $\Lag_e$. That means $\mP_e\big[\M_e\h - \f_e\big]$ is unbiased as well by Lemma~\ref{lma:ProjectedUnbiasedIsUnbiased}. Finally, by again applying Lemma~\ref{lma:LinearCombinationsUnbiased} we see that the entirety of $\f_e + \mP_e\big[\M_e\h - \f_e\big]$ is unbiased.
\end{proof}

\begin{theorem}[Energized fabrics] \label{thm:EnergizedFabrics}
Let $\Lag_e$ be an unbiased energy Lagrangian with boundary aligned $\M_e = \partial^2_{\xd\xd}\Lag_e$ and lower bounded energy $\Ham_e$, and let $\big(\I, \h\big)$ be an unbiased spec. Then the energized spec $\mathcal{S}_\h^{\Lag_e} = \mathrm{energize}_{\Lag_e}\big\{\mathcal{S}_\h\big\}$ given by Proposition~\ref{prop:SystemEnergization} forms a frictionless fabric.
\end{theorem}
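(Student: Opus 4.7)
The plan is to assemble the theorem as a direct corollary of the preceding machinery: the energized spec is (i) conservative under $\Lag_e$ by construction, (ii) unbiased by the preservation lemma, and (iii) therefore a frictionless fabric by the Conservative Fabrics Proposition. The only substantive work is in verifying that the hypotheses of Proposition~\ref{prop:ConservativeFabrics} line up cleanly with what we are given.

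First I would recall from Proposition~\ref{prop:SystemEnergization} that the energized system takes the form
\begin{align}
\M_e \xdd + \f_e + \mP_e\bigl[\M_e \h - \f_e\bigr] = \zero,
\end{align}
and identify the zero work term $\f_f := \mP_e\bigl[\M_e\h - \f_e\bigr]$. By Lemma~\ref{lma:EnergyProjection} we have $\xd^\tr \mP_e = \zero$ for every argument, so $\xd^\tr \f_f = 0$, and hence by Lemma~\ref{lma:EnergyConservation} the energized spec is a conservative spec under the energy Lagrangian $\Lag_e$.

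Next I would invoke Lemma~\ref{lma:EnergizationPreservesUnbiasedProperty}, whose hypotheses ($\M_e$ boundary aligned, $\Lag_e$ unbiased, and $(\I,\h)$ unbiased) are exactly the hypotheses of this theorem, to conclude that $\mathcal{S}_\h^{\Lag_e}$ is unbiased. In particular, unbiasedness implicitly packages boundary conformance of the spec, so the spec is boundary conforming as well.

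Finally, I would apply Proposition~\ref{prop:ConservativeFabrics}: it asserts that any conservative, unbiased spec under an energy Lagrangian forms a frictionless fabric, provided the associated total energy is lower bounded so convergence of damped variants can be established via the Lyapunov-style argument (Equation~\ref{eqn:SystemEnergyDecrease}). The assumed lower boundedness of $\Ham_e$, together with any lower bounded finite forcing potential $\psi$, makes $\Ham_e^\psi = \Ham_e + \psi$ lower bounded, so the proposition applies directly and gives the frictionless fabric conclusion. The main (very mild) obstacle is just keeping track of which previous lemma supplies each property; there are no fresh calculations needed here beyond the pointer chain conservative $\Rightarrow$ unbiased $\Rightarrow$ fabric.
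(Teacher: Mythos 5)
Your proposal is correct and follows essentially the same route as the paper, which proves the theorem by combining Lemma~\ref{lma:EnergizationPreservesUnbiasedProperty} (unbiasedness of the energized spec) with Proposition~\ref{prop:ConservativeFabrics} (conservative unbiased specs are frictionless fabrics); your additional verification that the zero work term satisfies $\xd^\tr\f_f = 0$ via Lemma~\ref{lma:EnergyProjection} and that the lower bound on $\Ham_e$ feeds the Lyapunov argument simply makes explicit what the paper leaves implicit.
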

\begin{proof}
This result follows from 
Proposition~\ref{prop:ConservativeFabrics}, and Lemma~\ref{lma:EnergizationPreservesUnbiasedProperty}.

\end{proof}

\section{Generalized nonlinear geometries and geometric fabrics}

Above we showed that a broad class of conservative fabrics is formed by energizing differential equations. In general, though, the energization transform given by Proposition~\ref{prop:SystemEnergization} may change the behavior of the underlying system since systems generally are not invariant to the speed of traversal. Specifically, systems follows different paths when forced to speed up or slow down. An intuitive example of such a system is a particle in a gravitational field. When traveling at just the right speed around a mass, the particle can orbit. However, if it speeds up or slows down along the direction of motion it will either break orbit or spiral into the mass; in both cases, the path will necessarily change. 

In this section, we develop a special class of differential equations called {\em geometry generators} that maintain geometric path consistency despite the speed of traversal. For this special class of equation, the path behavior of a system is {\em invariant} under energization, so an energized fabric formed by energizing a geometry generator follows the {\em same} path as the original system, and is itself a geometry generator. We call this special class of energized fabrics {\em geometric fabrics}.

We start with a brief review of the construction and basic properties of general nonlinear geometries here for completeness. For more detail see \cite{ratliff2020FinslerGeometry}.

\subsection{Nonlinear geometries: Generators and geometric equations} \label{sec:NonlinearGeometries}

\begin{definition}[Geometry generator]
A {\em geometry generator}, or \textit{generator} for short, is an ordinary second-order differential equation of the form
\begin{align}
    \xdd + \h_2(\x,\xd) = \zero,
\end{align}
where $\h_2(\x,\xd)$ is a smooth, covariant, map $\h_2:\R^d\times\R^d\rightarrow\R^d$ that is {\em positively homogeneous of degree 2} in velocities in the sense $\h_2(\x, \alpha\xd) = \alpha^2\h_2(\x,\xd)$ for $\alpha>0$. Solutions to the generator are called {\em generating} solutions or trajectories, and if those trajectories are guaranteed to conserve an known energy quantity, they are known as {\em energy levels}.
\end{definition}
As an ordinary second-order differential equation, a generator's solutions are unique for specific initial values $(\x_0,\xd_0)$. A generator's degree 2 homogeneity means that all solutions to initial value problems of the form $(\x_0, \alpha \hat{\vv}_0)$, where $\hat{\vv}_0$ is any unit vector defining a direction in space, follow the same path. Said another way, all generating trajectories starting from a given point $\x_0$ with initial velocity pointing in the same direction $\xd_0 = \alpha\hat{\vv}_0$ follow the same path.

Generators are often called {\em sprays} in differential geometry \cite{shen2013differential}, although the term generator is more explicit about its role generating the geometry of a geometric equation, as we define next.

\begin{definition}[Geometric equation]
The {\em geometric equation} corresponding to a generator of the form $\xdd + \h_2(\x,\xd) = \zero$ is an equation of the form
\begin{align} \label{eqn:geometricEquation}
    \mP_\xd^{\perp}\big[\xdd + \h_2(\x,\xd)\big] = \zero,
\end{align}
where $\mP_\xd^{\perp}$ is an projector projecting orthogonally to $\xd$. Solutions to this geometric equation are known as {\em geometric} solutions or trajectories.
\end{definition}
\begin{remark}
Any matrix $\A_\xd$ with nullspace spanned by $\xd$ would suffice in this definition, but we choose the projector for clarity of its role.
\end{remark}

While the generator's solutions are unique but follow the same path when the initial conditions' velocities point in the same direction, the geometric equation is a redundant equation (redundancy coming from the reduced rank matrix $\mP_\xd^\perp$) whose solutions are the set of {\em all} trajectories following that single path. It can be shown that every solution to the generator equation is a smooth time-reparameterization of any generating trajectory. 

Moreover, for a given geometric trajectory there exists a generating trajectory whose instantaneous velocity matches the geometric trajectory's velocity  at a given point $\x$, so the geometric trajectory can be viewed as speeding up and slowing down along the direction of motion to smoothly move between generating solutions, hence the name generating solution. When these generating solutions form energy levels, we can say the geometric solution speeds up and slows down along the direction of motion to smoothly move between energy levels of the system.

The geometric equation fully characterizes a geometry of paths. The equivalence class of solutions to problems $(\x_0,\alpha\hat{\vv}_0)$ for $\alpha>0$ are (locally) the set of reparameterizations of a one-dimensional smooth submanifold of the space. We say that the collection of these speed-independent paths defines a nonlinear geometry on the space.

We point out without proof that all solutions of the nullspace geometric equation given in Equation~\ref{eqn:geometricEquation} can be expressed as
\begin{align} \label{eqn:explicitGeometricEquation} 
    \xdd = -\h_2(\x,\xd) + \gamma(t)\xd,
\end{align}
where $\gamma(t)$ is any smooth function of time. This is an explicit expression showing that geometric solutions are formed by speeding up and slowing down along the direction of motion $\xd$.

\subsection{Finsler geometry}

Finsler geometry is the study of nonlinear geometries whose geometric equation is defined by the equations of motion of {\em Finsler structure}.
\begin{definition}[Finsler structure] \label{def:FinslerStructure}
A {\em Finsler structure} is a stationary Lagrangian $\Lag_g(\x,\xd)$ with the following properties:
\begin{enumerate}
    \item Positivity: $\Lag_g(\x,\xd) > 0 $ for all $\xd \neq \zero$.
    \item Homogeneity: $\Lag_g$ is {\em positively homogeneous of degree 1} in velocities in the sense $\Lag_g(\x,\alpha\xd) = \alpha \Lag_g(\x,\xd)$ for $\alpha>0$.
    \item Energy tensor invertibility: $\partial^2_{\xd\xd}\Lag_e$ is everywhere invertible, where $\Lag_e = \frac{1}{2}\Lag_g^2$.
\end{enumerate}
$\Lag_e$ is known as the {\em energy form} of $\Lag_g$, and is sometimes called the {\em Finsler energy}. 
\end{definition}
\begin{remark}
Note that the first two conditions together mean that $\Lag(\x,\zero) = 0$.
\end{remark}

{\em Finsler structures} might be more descriptively termed {\em geometric Lagrangians} \cite{ratliff2020FinslerGeometry} since their geometric properties (invariance to time-reparameterization) stem directly from conditions on the Lagrangian and their effect on the resulting action. But these functions are commonly known as Finsler structures in the literature, so we maintain that tradition here. Note that many texts replace that third condition (invertibility of the energy tensor) with a positive definiteness requirement. That positive definiteness is difficult to satisfy in practice in Finsler structure design, and the basic proofs in our construction detailed in \cite{ratliff2020FinslerGeometry} require only invertibility, so we use the looser requirement in our definition.

\begin{proposition}[Energy of a Finsler geometry]
Let $\Lag_e = \frac{1}{2}\Lag_g^2$ be the Finsler energy of Finsler structure $\Lag_g$. The Hamiltonian (conserved quantity) of $\Lag_e$ is $\Lag_e$. Specifically,
\begin{align}
    \Ham_e = \partial_\xd\Lag_e^\tr\xd - \Lag_e = \Lag_e.
\end{align}
\end{proposition}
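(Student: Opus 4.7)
The plan is to exploit the homogeneity of $\Lag_g$ via Euler's homogeneous function theorem. Recall that if $f(\xd)$ is positively homogeneous of degree $k$ in $\xd$, then differentiating the identity $f(\alpha\xd) = \alpha^k f(\xd)$ with respect to $\alpha$ at $\alpha = 1$ yields $\partial_\xd f^\tr \xd = k f$. Since $\Lag_g$ is positively homogeneous of degree $1$ in velocities by Definition~\ref{def:FinslerStructure}, we immediately obtain $\partial_\xd\Lag_g^\tr \xd = \Lag_g$.

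The second step is to transfer this identity from $\Lag_g$ to $\Lag_e = \frac{1}{2}\Lag_g^2$. Either apply Euler's theorem directly to $\Lag_e$, which is positively homogeneous of degree $2$ (since $\Lag_e(\x,\alpha\xd) = \frac{1}{2}(\alpha\Lag_g)^2 = \alpha^2 \Lag_e$), giving
\begin{align}
\partial_\xd\Lag_e^\tr \xd = 2\Lag_e,
\end{align}
or equivalently differentiate using the chain rule $\partial_\xd\Lag_e = \Lag_g\,\partial_\xd\Lag_g$ and contract with $\xd$ to get $\partial_\xd\Lag_e^\tr\xd = \Lag_g(\partial_\xd\Lag_g^\tr\xd) = \Lag_g^2 = 2\Lag_e$.

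Substituting into the definition of the Hamiltonian then finishes the proof:
\begin{align}
\Ham_e = \partial_\xd\Lag_e^\tr\xd - \Lag_e = 2\Lag_e - \Lag_e = \Lag_e.
\end{align}
There is no real obstacle here; the only subtlety worth a sentence of care is that Euler's theorem requires only positive homogeneity (not full homogeneity in $\alpha$), which matches the form in Definition~\ref{def:FinslerStructure} so that the identity applies globally in $\xd$. No appeal to the invertibility condition on $\partial^2_{\xd\xd}\Lag_e$ is needed for this particular statement; invertibility will only be relevant later when $\Lag_e$ is used to define equations of motion.
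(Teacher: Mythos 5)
Your proposal is correct and follows essentially the same route as the paper: apply Euler's theorem on homogeneous functions to the degree-2 homogeneous $\Lag_e$ to get $\partial_\xd\Lag_e^\tr\xd = 2\Lag_e$, and substitute into the Hamiltonian to obtain $\Ham_e = \Lag_e$. The chain-rule derivation via $\Lag_g$ and the remark about positive homogeneity are fine additions but do not change the argument.
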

\begin{proof}
By Euler's theorem on homogeneous functions, if $f(\y)$ is homogeneous of degree $k$, then $\partial_\y f^\tr \y = k f(\y)$. That means for Lagrangians $\Lag(\x,\xd)$, we have
\begin{align}
    \partial_\xd\Lag^\tr\xd = k \Lag,
\end{align}
which means 
\begin{align}
    \Ham &= \partial_\xd\Lag^\tr\xd - \Lag = k \Lag - \Lag = (k-1) \Lag
\end{align}
Since $\Lag_g$ is homogeneous of degree 1 in $\xd$, $\Lag_e$ is homogeneous of degree 2 in $\xd$, so for $\Lag_e$ the above analysis means $\Ham_e = (2-1)\Lag_e = \Lag_e.$
\end{proof}

Equivalent forms of higher order energy can be defined as well via $\Lag_e^k = \frac{1}{k}\Lag_g^k$ and all of the results below also hold, but we treat only $k=2$ here for clarity of exposition.

We state the following fundamental results on Finsler geometry without proof and point the reader to \cite{ratliff2020FinslerGeometry} for details.

\begin{lemma}[Homogeneity of the Finsler energy tensor] \label{lma:FinslerEnergyHomogeneity}
Let $\Lag_g$ be a Finsler structure with energy form $\Lag_e = \frac{1}{2}\Lag_g^2$ and let $\M_e\xdd + \f_e = \zero$ be its equations of motion. Then $\M_e$ is homogeneous of degree 0 and $\f_e$ is homogeneous of degree 2.
\end{lemma}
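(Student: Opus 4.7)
The plan is to reduce everything to Euler's theorem on homogeneous functions, combined with the elementary fact that differentiating a function $F(\x,\xd)$ that is positively homogeneous of degree $k$ in $\xd$ yields $\partial_\xd F$ that is positively homogeneous of degree $k-1$ in $\xd$, while $\partial_\x F$ remains positively homogeneous of degree $k$ in $\xd$. The starting point is that $\Lag_g$ is positively homogeneous of degree $1$ in $\xd$ by the definition of a Finsler structure, so $\Lag_e = \tfrac{1}{2}\Lag_g^2$ is positively homogeneous of degree $2$ in $\xd$.

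First I would record (or very briefly verify by differentiating the identity $F(\x,\alpha\xd) = \alpha^k F(\x,\xd)$ on both sides with respect to the appropriate variable) the two degree-shift rules above. Applied to $\Lag_e$ (degree $2$ in $\xd$), these rules immediately give that $\partial_\xd \Lag_e$ is degree $1$ in $\xd$, and hence $\M_e = \partial^2_{\xd\xd}\Lag_e$ is degree $0$ in $\xd$. This establishes the first half of the lemma.

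Next I would analyze the two summands of $\f_e = \partial_{\xd\x}\Lag_e\,\xd - \partial_\x\Lag_e$ separately. Since $\x$-differentiation preserves homogeneity degree in $\xd$, $\partial_\x\Lag_e$ is degree $2$ in $\xd$. For the mixed term, $\partial_\x\Lag_e$ is degree $2$, so $\partial_{\xd}\big(\partial_\x\Lag_e\big) = \partial_{\xd\x}\Lag_e$ is degree $1$ in $\xd$; contracting with $\xd$ (itself degree $1$) yields degree $2$. The difference of two degree-$2$ terms is degree $2$, giving the second half. As a sanity check I would note the consistency with Euler's theorem: $\partial_\xd \Lag_e^{\tr}\xd = 2\Lag_e$, which recovers the result of the preceding proposition that the Hamiltonian equals $\Lag_e$, and differentiating this identity with respect to $\x$ gives $\partial_{\x\xd}\Lag_e\,\xd = 2\partial_\x\Lag_e$, so in fact $\f_e = \partial_{\xd\x}\Lag_e\,\xd - \partial_\x\Lag_e = \partial_\x\Lag_e$ up to the symmetry of mixed partials; either way the degree count is $2$.

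There is no genuine obstacle here; the only mild subtlety is being careful that the two degree-shift rules are for \emph{positive} homogeneity (so one differentiates the defining identity $F(\x,\alpha\xd) = \alpha^k F(\x,\xd)$ for $\alpha>0$ and concludes by smoothness), and that one must cite the correct variable of differentiation in each step. Both points are standard and dispatched in a line, so the proof will be short.
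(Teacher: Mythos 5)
Your argument is correct: since $\Lag_g$ is positively homogeneous of degree 1 in $\xd$, $\Lag_e=\frac{1}{2}\Lag_g^2$ is degree 2, and the two degree-shift rules (velocity differentiation lowers the degree by one, position differentiation preserves it) immediately give $\M_e=\partial^2_{\xd\xd}\Lag_e$ of degree 0 and both summands of $\f_e=\partial_{\xd\x}\Lag_e\,\xd-\partial_\x\Lag_e$ of degree 2; your Euler-theorem sanity check (which in fact shows $\f_e=\partial_\x\Lag_e$) is also valid. Note that the paper itself states this lemma without proof, deferring to its companion reference, so there is no in-paper argument to compare against; your proof is the standard homogeneity bookkeeping one would expect there and fills the gap correctly.
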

The above lemma means that $\M_e$ is dependent on velocity $\xd$ only through its norm $\hat{\xd}$, i.e. rescaling $\xd$ does not affect the energy tensor.

\begin{theorem}[Finsler geometry generation] \label{thm:FinslerGeometries}
Let $\Lag_g$ be a Finsler structure with energy $\Lag_e = \frac{1}{2}\Lag_g^2$. The equations of motion of $\Lag_e$ define a geometry generator whose geometric equation is given by the equations of motion of $\Lag_g$.
\end{theorem}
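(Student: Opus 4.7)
The plan is to prove the theorem in two stages: first establish that the Euler--Lagrange equations of $\Lag_e$ form a valid geometry generator in the sense of Section~\ref{sec:NonlinearGeometries}, then show that the associated geometric equation has the same solution set as the Euler--Lagrange equation for the Finsler structure $\Lag_g$ itself.

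Stage one is essentially immediate from Lemma~\ref{lma:FinslerEnergyHomogeneity}. Writing the $\Lag_e$ equations of motion as $\M_e\xdd + \f_e = \zero$, solving for $\xdd$ gives $\xdd + \h_2 = \zero$ with $\h_2 := \M_e^{-1}\f_e$. Since $\M_e$ is homogeneous of degree $0$ and $\f_e$ is homogeneous of degree $2$ in $\xd$, the product $\h_2$ is homogeneous of degree $2$ in $\xd$, satisfying the geometry-generator definition.

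Stage two is the main work and rests on two auxiliary identities I would establish first. The first is obtained by a direct chain-rule expansion of $\Lag_e = \tfrac12 \Lag_g^2$:
\begin{align}
    \text{E-L}[\Lag_e]
    \;=\; \Lag_g\,\text{E-L}[\Lag_g] + \dot{\Lag}_g\,\partial_\xd\Lag_g ,
\end{align}
where $\text{E-L}[\Lag] := \frac{d}{dt}\partial_\xd\Lag - \partial_\x\Lag$. The second is the Finsler identity $\M_e\xd = \Lag_g\,\partial_\xd\Lag_g$, which I would verify by differentiating $\partial_\xd\Lag_e = \Lag_g\,\partial_\xd\Lag_g$ once more in $\xd$ to get $\M_e = \partial_\xd\Lag_g\,\partial_\xd\Lag_g^\tr + \Lag_g\,\partial^2_{\xd\xd}\Lag_g$, then using that Euler's identity applied to the degree-$0$ vector $\partial_\xd\Lag_g$ forces $\partial^2_{\xd\xd}\Lag_g\,\xd = \zero$, together with $\partial_\xd\Lag_g^\tr\xd = \Lag_g$ from Euler's identity on $\Lag_g$.

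With these in hand, I would prove the equivalence of the two solution sets in both directions. For the forward direction, any generating trajectory satisfies $\M_e\xdd + \f_e = \zero$, i.e.\ $\text{E-L}[\Lag_e] = \zero$; the Proposition on the energy of a Finsler geometry gives $\dot{\Lag}_e = \zero$ and hence $\dot{\Lag}_g = 0$ (for $\Lag_g \neq 0$, i.e.\ $\xd \neq \zero$), so the chain-rule identity collapses to $\Lag_g\,\text{E-L}[\Lag_g] = \zero$, yielding $\text{E-L}[\Lag_g] = \zero$. Since every geometric trajectory is a time reparameterization of a generating trajectory and the Euler--Lagrange equations of the degree-$1$ Lagrangian $\Lag_g$ are themselves reparameterization invariant, every geometric solution satisfies $\text{E-L}[\Lag_g] = \zero$. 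For the reverse direction, if $\text{E-L}[\Lag_g] = \zero$, the chain-rule identity yields $\M_e(\xdd + \h_2) = \dot{\Lag}_g\,\partial_\xd\Lag_g$, and inverting $\M_e$ together with $\M_e\xd = \Lag_g\,\partial_\xd\Lag_g$ gives $\xdd + \h_2 = (\dot{\Lag}_g/\Lag_g)\,\xd$, which is precisely a trajectory of the form in Equation~\ref{eqn:explicitGeometricEquation} and therefore satisfies $\mP_\xd^\perp[\xdd + \h_2] = \zero$.

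The main obstacle I expect is the bookkeeping around the two algebraic identities, particularly establishing $\M_e\xd = \Lag_g\,\partial_\xd\Lag_g$ cleanly and handling the $\xd = \zero$ degenerate case where $\Lag_g$ vanishes and the division by $\Lag_g$ breaks down (addressed by restricting to $\xd \neq \zero$, consistent with Finsler-structure conventions). Everything else is straightforward once these two identities and the energy-conservation consequence $\dot{\Lag}_g = 0$ along generating trajectories are in place.
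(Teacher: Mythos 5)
The paper never proves this theorem in-house---it is stated without proof with a pointer to the companion Finsler-geometry paper---so there is no internal argument to compare against; judged on its own terms, your proof is correct and uses only facts the paper makes available. Stage one is right: degree-0 homogeneity of $\M_e$ and degree-2 homogeneity of $\f_e$ (Lemma~\ref{lma:FinslerEnergyHomogeneity}) make $\h_2 = \M_e^{-1}\f_e$ homogeneous of degree 2. Both of your identities check out: differentiating $\partial_\xd\Lag_e = \Lag_g\,\partial_\xd\Lag_g$ gives the product-rule decomposition $\text{E-L}[\Lag_e] = \Lag_g\,\text{E-L}[\Lag_g] + \dot{\Lag}_g\,\partial_\xd\Lag_g$, and $\M_e\xd = \Lag_g\,\partial_\xd\Lag_g$ follows from Euler's theorem exactly as you argue (equivalently, $\M_e\xd = \partial_\xd\Lag_e$ because $\partial_\xd\Lag_e$ is homogeneous of degree 1). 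The reverse direction is clean: $\text{E-L}[\Lag_g] = \zero$ gives $\M_e\big(\xdd + \h_2\big) = \dot{\Lag}_g\,\partial_\xd\Lag_g = (\dot{\Lag}_g/\Lag_g)\,\M_e\xd$, and invertibility of $\M_e$ (part of the Finsler-structure definition) puts the trajectory in the form of Equation~\ref{eqn:explicitGeometricEquation}. Two refinements on the forward direction: first, $\dot{\Lag}_e = 0$ along generating trajectories comes from Lemma~\ref{lma:EnergyTimeDerivative} (or Lemma~\ref{lma:EnergyConservation} with $\f_f = \zero$) combined with the proposition that $\Ham_e = \Lag_e$, not from that proposition alone; second, you can avoid invoking reparameterization invariance of $\text{E-L}[\Lag_g]$ and the paper's unproven assertion that geometric solutions are reparameterizations of generating ones, by arguing directly from the explicit form: a geometric solution satisfies $\xdd + \h_2 = \gamma(t)\xd$, whence $\dot{\Lag}_e = \xd^\tr\M_e(\xdd + \h_2) = 2\gamma\Lag_e$, i.e.\ $\dot{\Lag}_g = \gamma\Lag_g$, and substituting into your first identity cancels the $\gamma$ terms and leaves $\Lag_g\,\text{E-L}[\Lag_g] = \zero$. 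With the restriction $\xd \neq \zero$ that you already flag (automatic along nontrivial generating trajectories since $\Lag_e > 0$ is conserved), both directions close and the theorem follows.
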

These results mean that Finsler structures $\Lag_g$ define nonlinear geometries of paths whose generators define energy levels of the energy $\Lag_e$.





\subsection{Geometric fabrics} \label{sec:GeometricFabrics}

Here we discuss the properties of fabrics that arise from energizing geometry generators using the theory outlined in Section~\ref{sec:OptimizationFabrics}.

The expression in Equation~\ref{eqn:ZeroWorkEnergizationForm} shows that energization can be viewed as a zero work modification to the energy equations of motion. When the original differential equation $\xdd + \h = \zero$ is a geometry generator (i.e. $\h$ is homogeneous of degree 2) and $\Lag_e$ is Finsler, then the Finsler equations of motion, the original differential equation, and energized equation are all geometry generators, and importantly the energized equation in \ref{eqn:ZeroWorkEnergizationForm} generates a geometry equivalent to the original equation's generated geometry. We can, therefore, view this zero work modification as {\em bending} the Finsler geometry to match the desired geometry without affecting the system energy. This result is summarized in the following proposition.

\begin{corollary}[Bent Finsler Representation] \label{cor:BentFinslerRepresentation}
Suppose $\h_2(\x, \xd)$ is homogeneous of degree 2 so that $\xdd + \h_2(\x, \xd) = \zero$ is a geometry generator, and let $\Lag_e$ be a Finsler structure (and therefore also a Finsler energy). Then the energized system $\M_e\xdd + \f_e + \mP_e\big[\M_e\h_2 - \f_e\big] = \zero$ is a geometry generator whose geometry matches the original system's geometry. Since the Finsler system $\M_e\xdd + \f_e = \zero$ is a geometry generator as well, we can view the energized system as a {\em zero work geometric modification} to the Finsler geometry, what we call a {\em bending} of the geometric system.
\end{corollary}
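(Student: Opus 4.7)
The plan is to verify two things in sequence: (i) the energized equation is itself a geometry generator, meaning the effective drift term is positively homogeneous of degree 2 in $\xd$; and (ii) the geometric equation obtained from the energized generator coincides with the geometric equation of $\xdd + \h_2 = \zero$, so the two generators trace out the same family of paths.

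For (i), I would argue via the explicit form given in Proposition~\ref{prop:SystemEnergization}, which rewrites the energized system as $\xdd + \h_2 + \alpha_{\Ham_e}\xd = \zero$ with $\alpha_{\Ham_e} = -(\xd^\tr\M_e\xd)^{-1}\xd^\tr[\M_e\h_2 - \f_e]$. Using Lemma~\ref{lma:FinslerEnergyHomogeneity}, $\M_e$ is homogeneous of degree 0 and $\f_e$ of degree 2 in $\xd$, while $\h_2$ is degree 2 by hypothesis. A direct scaling check then shows that $\xd^\tr\M_e\xd$ scales as $\alpha^2$, the numerator $\xd^\tr[\M_e\h_2 - \f_e]$ scales as $\alpha^3$, and so $\alpha_{\Ham_e}$ is positively homogeneous of degree 1. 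Consequently $\alpha_{\Ham_e}\xd$ is degree 2, so the drift $\h_2 + \alpha_{\Ham_e}\xd$ remains degree 2, which establishes that the energized equation qualifies as a geometry generator.

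For (ii), the key observation is that the modification $\alpha_{\Ham_e}\xd$ lies entirely along $\xd$. Hence applying the orthogonal projector $\mP_\xd^\perp$ to the energized generator yields
\begin{align}
\mP_\xd^\perp\big[\xdd + \h_2 + \alpha_{\Ham_e}\xd\big] = \mP_\xd^\perp\big[\xdd + \h_2\big],
\end{align}
since $\mP_\xd^\perp\xd = \zero$. Thus the two geometric equations are literally the same equation, so the solution sets (the speed-independent paths) coincide. This is the precise sense in which the energized generator generates the original geometry.

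The remaining step is to package these observations to justify the ``bending'' interpretation. Since $\M_e\xdd + \f_e = \zero$ is itself a geometry generator by Theorem~\ref{thm:FinslerGeometries}, and the energized system differs from it by the additive term $\mP_e[\M_e\h_2 - \f_e]$, it suffices to remark that this term satisfies $\xd^\tr\mP_e[\M_e\h_2 - \f_e] = \zero$ by Lemma~\ref{lma:EnergyProjection}, i.e.\ it performs zero work against $\xd$ and hence preserves the Finsler energy $\Lag_e$. Combined with (i) and (ii), this licenses the description of the modification as a zero-work geometric bending of the Finsler geometry onto the geometry of $\h_2$. I do not anticipate a genuine obstacle here; the only care needed is in the homogeneity bookkeeping in (i), which is purely a matter of tracking how $\M_e$, $\f_e$, and $\h_2$ scale under $\xd \mapsto \alpha\xd$.
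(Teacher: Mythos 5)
Your proposal is correct and follows essentially the same route as the paper's proof: verify degree-2 homogeneity of the energized drift (the paper does this in force form by noting $\mR_{\p_e}$ and $\M_e$ are homogeneous of degree 0 and $\f_e$, $\M_e\h_2$ of degree 2, which is the same bookkeeping as your degree-1 check on $\alpha_{\Ham_e}$ in acceleration form), and then conclude path equivalence because the modification acts purely along $\xd$. Your phrasing of the second step via equality of the geometric equations under $\mP_\xd^\perp$ is a slightly more explicit rendering of the paper's remark that accelerations along the direction of motion do not alter the generated paths, and your zero-work remark via Lemma~\ref{lma:EnergyProjection} matches the paper's framing of the result as a bending of the Finsler geometry.
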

\begin{proof}
The energized system takes the form $\xdd + \wt{\h}_2(\x, \xd) = \zero$ where
\begin{align}
    \wt{\h}_2 = \M_e^{-1}\f_e + \mR_e\big[\M_e\h_2 - \f_e\big]
\end{align}
since $\mP_e = \M_e\mR_{\p_e}$.
The energy $\Lag_e$ is Finsler, so $\f_e$ is homogeneous of degree 2 and $\M_e$ is homogeneous of degree 0, which means the first term in combination is homogeneous of degree 2. Moreover, $\mR_{\p_e} = \M_e^{-1} - \frac{\xd\,\xd^\tr}{\xd^
\tr\M_e\xd}$ is homogeneous of degree 0 since the numerator and denominator scalars would cancel in the second term when $\xd$ is scaled. Therefore, the energized system in its entirety forms a geometry generator.

Since $\xdd + \h_2(\x, \xd) = \zero$ is a geometry generator, instantaneous accelerations along the direction of motion $\xd$ do not change the paths taken by the system. So $\xdd + \wt{\h}_2(\x, \xd) = \zero$ forms a generator whose geometry matches the original geometry defined by $\xdd + \h_2(\x, \xd) = \zero$.
\end{proof}
Corollary~\ref{cor:BentFinslerRepresentation} shows that geometries are invariant under energization transforms performed with respect to Finsler energies.

Note that for the energized system to be a generator, we need two properties: first, the original system must be a generator, and second, the energy must be Finsler. If the energy is not Finsler, the resulting energized system will still follow the same paths as the original geometry (since by definition it is formed by accelerating along the direction of motion), but it will not be itself a generator (the resulting differential equation will not produce path aligned trajectories when solved for differing initial speeds).

The following proposition is one of the key results that makes geometry generators useful for fabric design.
\begin{proposition} \label{prop:GeometryGeneratorsAreUnbiased}
Boundary conforming geometry generators are unbiased.
\end{proposition}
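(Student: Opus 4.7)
The plan is to verify the two conditions of Definition~\ref{def:Unbiased} directly for the spec representation $\big(\I, \h_2\big)_\mathcal{X}$ of the generator $\xdd + \h_2(\x,\xd) = \zero$. Since the metric is just the identity, it is trivially boundary conforming (finite and invertible everywhere with constant spectrum), so boundary conformance of the full spec reduces to requiring that $\h_2$ itself satisfies the finiteness and interior conditions, which is hypothesized. The remaining task is to check the limiting behavior of $\h_2$ along convergent trajectories.

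The key observation I would isolate first is that positive homogeneity of degree $2$ combined with smoothness forces $\h_2(\x,\zero) = \zero$ for every $\x$. Concretely, for any $\xd$ and any $\alpha > 0$,
\begin{align}
\h_2(\x, \alpha \xd) \;=\; \alpha^2 \h_2(\x, \xd),
\end{align}
and sending $\alpha \to 0^+$ on the right gives $\zero$, while smoothness of $\h_2$ (and hence continuity at $(\x,\zero)$) gives $\h_2(\x,\alpha\xd) \to \h_2(\x, \zero)$ on the left. Hence $\h_2(\x, \zero) = \zero$ identically in $\x$.

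Next I would invoke, exactly as is done in the proof of Theorem~\ref{thm:GeneralFabrics} and Proposition~\ref{prop:ConservativeFabrics}, that any convergent trajectory $\x(t) \to \x_\infty$ of the system satisfies $\xd \to \zero$. Combined with the smoothness of $\h_2$, this yields
\begin{align}
\lim_{t\to\infty} \h_2\bigl(\x(t), \xd(t)\bigr) \;=\; \h_2(\x_\infty, \zero) \;=\; \zero.
\end{align}
In particular, for any choice of tangent-space basis $\V_\infty$ we have $\V_\infty^\tr \h_2 \to \zero$, which is exactly the unbiasedness condition. This argument is indifferent to whether $\x_\infty$ lies in $\mathrm{int}(\mathcal{X})$ or on $\partial\mathcal{X}$, since the \emph{entire} vector $\h_2$ vanishes in the limit, not just its parallel component.

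There is no real obstacle here: the proof is essentially a one-line consequence of degree-$2$ homogeneity applied at $\xd = \zero$. The only subtlety worth flagging is that the stated homogeneity is restricted to $\alpha > 0$, so the conclusion $\h_2(\x, \zero) = \zero$ is obtained by a limit argument using continuity rather than by substituting $\alpha = 0$ directly. Everything else follows from the standing assumption (used elsewhere in the paper) that a convergent position trajectory has vanishing velocity in the limit.
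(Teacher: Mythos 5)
Your argument is clean for the interior case: degree-2 homogeneity plus continuity in velocity indeed forces $\h_2(\x,\zero)=\zero$, and when $\x_\infty\in\mathrm{int}(\mathcal{X})$ the smoothness of $\h_2$ near $(\x_\infty,\zero)$ gives $\h_2(\x(t),\xd(t))\to\zero$, which is the unbiasedness condition there. The gap is in the boundary case. There you invoke continuity of $\h_2$ jointly at $(\x_\infty,\zero)$ with $\x_\infty\in\partial\mathcal{X}$, i.e.\ continuity up to the boundary, and conclude that the \emph{entire} vector $\h_2$ vanishes in the limit. Neither is justified: the proposition is aimed precisely at generators that blow up at the boundary (barrier-type geometries such as $\h_2=\lambda\|\xd\|^2\partial_\x\psi$ with $\psi\to\infty$ at $\partial\mathcal{X}$, as used throughout Section~\ref{sec:Experiments}), so $\h_2$ is smooth only on the interior, and along a trajectory with $\x\to\x_\infty\in\partial\mathcal{X}$ and $\xd\to\zero$ the limit of $\h_2(\x(t),\xd(t))$ depends on the relative rates; in particular its boundary-orthogonal component need not vanish, and need not. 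Unbiasedness only requires $\V_\infty^\tr\h_2\to\zero$, i.e.\ the component along $\mathcal{T}_{\x_\infty}\partial\mathcal{X}$. A telltale sign of the gap is that your proof never uses the boundary-conformance hypothesis, which is exactly what controls this blow-up.

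The paper's proof handles the boundary case quantitatively rather than by continuity: writing $\lambda(t)=\|\xd(t)\|$ and using homogeneity to factor
\begin{align}
\V_\infty^\tr\h_2\big(\x(t),\xd(t)\big)=\lambda(t)^2\,\V_\infty^\tr\h_2\big(\x(t),\hat{\xd}(t)\big),
\end{align}
it then invokes condition (3) of Definition~\ref{def:BoundaryConformingSpec} applied to the tangent-bundle trajectory $\big(\x(t),\hat{\xd}(t)\big)$ to conclude that $\big\|\V_\infty^\tr\h_2(\x,\hat{\xd})\big\|$ remains finite in the limit, so the product tends to zero as $\lambda\to 0$. This yields exactly the required tangential vanishing without assuming any continuity of $\h_2$ up to the boundary and without claiming the full vector vanishes. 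To repair your proof you would need to restrict your continuity argument to interior limits and adopt this homogeneity-plus-boundedness argument (or an equivalent use of boundary conformance) for limits on $\partial\mathcal{X}$.
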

\begin{proof}
Denote the generator by $\xdd + \h_2(\x, \xd) = \zero$. Let $\lambda(t) = \|\xd(t)\|$ so that $\xd(t) = \lambda(t)\,\|\widehat{\xd}(t)\|$. Since $\xd\rightarrow\zero$, $\lambda\rightarrow 0$. Moreover, since $\h_2$ is homogeneous of degree 2, $\h_2\big(\x(t), \xd(t)\big) = \lambda(t)\,\h_2\big(\x(t), \widehat{\xd}(t)\big)$. Therefore, 
\begin{align}
    \big\|\V_\infty^\tr\h_2\big(\x(t), \xd(t)\big)\big\| 
    = \lambda(t) \big\|\V_\infty^\tr \h_2\big(\x(t), \widehat{\xd}(t)\big)\big\|
    \rightarrow \zero
\end{align}
since $\big\|\V_\infty^\tr \h_2\big(\x(t), \widehat{\xd}(t)\big)\big\|$ is finite in the limit by definition of boundary conformance.



\end{proof}

\begin{corollary}
Boundary conforming Finsler energies are unbiased.
\end{corollary}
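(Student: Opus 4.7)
The plan is to recognize this corollary as an immediate consequence of Proposition~\ref{prop:GeometryGeneratorsAreUnbiased} combined with Theorem~\ref{thm:FinslerGeometries}. Since the equations of motion of a Finsler energy constitute a geometry generator, the boundary-conformance hypothesis suffices to establish the unbiased property via the same homogeneity-driven argument already used above.

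First I would invoke Theorem~\ref{thm:FinslerGeometries}: for a Finsler energy $\Lag_e = \tfrac{1}{2}\Lag_g^2$ the associated equations of motion $\M_e\xdd + \f_e = \zero$ form a geometry generator. Concretely, Lemma~\ref{lma:FinslerEnergyHomogeneity} supplies the two ingredients that drive everything: $\M_e$ is homogeneous of degree $0$ and $\f_e$ is homogeneous of degree $2$ in $\xd$. Rewriting as $\xdd + \h_2 = \zero$ with $\h_2 = \M_e^{-1}\f_e$ exhibits a generator in the standard form, but for the actual proof what matters is the homogeneity of $\f_e$ itself, since the unbiased condition is defined on the spec $(\M_e,\f_e)$.

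Next I would replay the argument of Proposition~\ref{prop:GeometryGeneratorsAreUnbiased} directly on $\f_e$. For any convergent trajectory $\x(t)\to\x_\infty$ we have $\xd\to\zero$; writing $\xd = \lambda(t)\hat{\xd}(t)$ with $\lambda(t) = \|\xd(t)\|\to 0$, the degree-$2$ homogeneity of $\f_e$ gives $\f_e(\x,\xd) = \lambda^2\,\f_e(\x,\hat{\xd})$ and hence $\V_\infty^\tr\f_e(\x,\xd) = \lambda^2\,\V_\infty^\tr\f_e(\x,\hat{\xd})$. The boundary-conformance hypothesis on $\Lag_e$, together with smoothness of $\f_e$ and compactness of the unit sphere of directions, ensures that $\V_\infty^\tr\f_e(\x,\hat{\xd})$ stays bounded along the trajectory; multiplying by $\lambda^2\to 0$ then yields $\V_\infty^\tr\f_e(\x,\xd)\to\zero$, which is precisely the unbiased condition of Definition~\ref{def:Unbiased}.

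There is no real obstacle here: the corollary is essentially bookkeeping once the preceding proposition and Finsler homogeneity lemma are in hand. The only mild subtlety worth flagging is that ``boundary conforming'' is stated on the Lagrangian spec $(\M_e,\f_e)$, so the homogeneity scaling must be applied to $\f_e$ itself rather than to the reduced generator $\h_2 = \M_e^{-1}\f_e$; equivalently, one could argue that boundary conformance of $(\M_e,\f_e)$ transfers to the generator spec $(\I,\h_2)$ and then quote Proposition~\ref{prop:GeometryGeneratorsAreUnbiased} verbatim.
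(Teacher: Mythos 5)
Your proposal is correct and follows essentially the same route as the paper: invoke Theorem~\ref{thm:FinslerGeometries} to identify the Finsler energy's equations of motion as a geometry generator and then apply the homogeneity argument of Proposition~\ref{prop:GeometryGeneratorsAreUnbiased}. Your replay of that argument directly on $\f_e$ (using Lemma~\ref{lma:FinslerEnergyHomogeneity}) rather than on $\h_2 = \M_e^{-1}\f_e$ is a harmless refinement that makes explicit the passage between the generator spec and the Lagrangian spec, which the paper's proof handles implicitly.
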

\begin{proof}
The equations of motion of a Finsler energy form a geometry generator by Theorem~\ref{thm:FinslerGeometries}. Since the energy is boundary conforming, so too is this geometry generator. Therefore, by Proposition~\ref{prop:GeometryGeneratorsAreUnbiased} the equations of motion are unbiased, so by definition the Finsler energy is unbiased.
\end{proof}

\begin{corollary}[Geometric fabrics]
Suppose $\h_2(\x, \xd)$ is homogeneous of degree 2 and unbiased so that $\xdd + \h_2(\x, \xd)  = \zero$ is an unbiased geometry generator, and suppose $\Lag_e$ is Finsler and boundary conforming. Then the energized system is fabric defined by a generator whose geometry matches the original generator's geometry. Such a fabric is called a {\em geometric fabric}.
\end{corollary}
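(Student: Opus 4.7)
The plan is to combine the two main results already established in this section: Corollary~\ref{cor:BentFinslerRepresentation}, which handles the geometric (path-consistency) claim, and Theorem~\ref{thm:EnergizedFabrics}, which handles the fabric (optimization) claim. The corollary directly gives that the energized system $\M_e\xdd + \f_e + \mP_e[\M_e\h_2 - \f_e] = \zero$ is a geometry generator whose generated geometry coincides with that of $\xdd + \h_2 = \zero$, so the ``geometry matches'' part of the statement is immediate and needs no further work.

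The second step is to verify the hypotheses of Theorem~\ref{thm:EnergizedFabrics} so that we may conclude the energized spec forms a frictionless fabric. Three of the four hypotheses are essentially by assumption or immediate: the spec $(\I,\h_2)$ is unbiased by hypothesis; $\Lag_e$ being an unbiased energy Lagrangian follows from the immediately preceding corollary (boundary conforming Finsler energies are unbiased) together with the assumption that $\Lag_e$ is boundary conforming; and lower boundedness of $\Ham_e$ follows from $\Ham_e = \Lag_e = \tfrac{1}{2}\Lag_g^2 \geq 0$ by the positivity condition in Definition~\ref{def:FinslerStructure}.

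The one hypothesis that requires a small argument is that $\M_e = \partial^2_{\xd\xd}\Lag_e$ is boundary aligned. The plan is to extract this from the combination of boundary conformance of $\Lag_e$ and the homogeneity properties of Finsler energies (Lemma~\ref{lma:FinslerEnergyHomogeneity}). Since $\M_e$ is homogeneous of degree $0$ in $\xd$, its limiting behavior along a boundary-intersecting trajectory depends only on the direction of approach, and boundary conformance of the energy Lagrangian forces $\M_e^{-1}$ to remain finite on the boundary tangent space with its column space spanning $\mathcal{T}_{\x_\infty}\partial\mathcal{X}$ in the limit. One then notes that a symmetric positive (semi-)definite limit with this kernel/range structure admits an eigenbasis a subset of which spans $\mathcal{T}_{\x_\infty}\partial\mathcal{X}$, which is exactly the definition of boundary aligned. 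This is the main obstacle: it is the only step that is not a direct citation, and it is where the subtlety of the manifold-with-boundary geometry actually enters the argument.

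With all hypotheses of Theorem~\ref{thm:EnergizedFabrics} verified, the energized spec $\mathcal{S}_{\h_2}^{\Lag_e} = \mathrm{energize}_{\Lag_e}\{\mathcal{S}_{\h_2}\}$ is a frictionless fabric, and by Corollary~\ref{cor:BentFinslerRepresentation} it is simultaneously a geometry generator whose generated geometry matches that of $\xdd + \h_2 = \zero$. Concatenating these two conclusions gives precisely the statement of the corollary, and the resulting object is by definition a geometric fabric.
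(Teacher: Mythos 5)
Your proposal takes essentially the same route as the paper: the paper's own proof is a one-line citation of Corollary~\ref{cor:BentFinslerRepresentation} for the matching-geometry claim and Theorem~\ref{thm:EnergizedFabrics} for the fabric claim, exactly as you do. Your additional verification of the hypotheses of Theorem~\ref{thm:EnergizedFabrics} (unbiasedness of $\Lag_e$ via the preceding corollary, lower boundedness of $\Ham_e$ from positivity of the Finsler structure, and boundary alignment of $\M_e$) goes beyond what the paper records; the only caveat is that your boundary-alignment sketch cleanly handles the reduced-rank limit of $\M_e^{-1}$ but does not fully argue the full-rank limiting case, a point the paper itself also leaves implicit.
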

\begin{proof}
By Corollary \ref{cor:BentFinslerRepresentation} the energized system is a generator with matching geometry, and by Theorem~\ref{thm:EnergizedFabrics} that energized system forms a fabric.
\end{proof}

\section{Fabrics on a transform tree}

Above in Theorem~\ref{thm:ClosureUnderSpecAlgebra} we showed that fabrics behave naturally under spec algebra operations in the sense that each of the above described classes is closed under these operations. Here we show additionally that the operation of energization outlined in Theorem~\ref{prop:SystemEnergization} commutes with the pullback operator as long as the pullback is performed with respect to the metric defined by the energy used for energization. 

We already know from Lagrangian mechanics that if we define an energy Lagrangian $\Lag_e(\x, \xd)$ we can either derive the Euler-Lagrange equation $\M_e\xdd + \f_e = \zero$ in $\mathcal{X}$ and pull it back to $\mathcal{Q}$ to get $\big(\J^\tr\M_e\J\big)\xdd + \J^\tr\big(\f_e - \Jd\qd\big) = \zero$ or pull the Lagrangian back to $\mathcal{Q}$ to get $\wt{\Lag}_e(\q, \qd) = \Lag_e(\phi(\q), \J\qd)$ first and apply the Euler-Lagrange equation direction to that pullback Lagrangian to get $\wt{\M}_e\qdd + \wt{\f}_e = \zero$, and the resulting equations of motion will be the same with $\wt{\M}_e = \big(\J^\tr\M_e\J\big)$ and $\wt{\f}_e = \J^\tr\big(\f_e - \Jd\qd\big)$ \cite{ratliff2020FinslerGeometry}. This standard result shows that the operation of deriving the Euler-Lagrange equation from a Lagrangian commutes with the pullback transform. We can either apply the Euler-Lagrange equation in the co-domain (ambient space) and pull back the resulting equations of motion, or pull back the Lagrangian to the domain and directly apply the Euler-Lagrange equation there. The resulting equations will match.

The result presented in Theorem~\ref{thm:EnergizationCommutesWithPullback} shows that the same type of commutivity holds for the energization operation as well. This result is specific to the case where the differentiable map defines an embedding, and the intuition comes from understanding that case as well. An example is where the differentiable map is a full-rank map from a $d$-dimensional space of generalized coordinates $\mathcal{Q}$ into a higher-dimensional ambient space $\mathcal{X}$ of dimension $n>d$. The embedded manifold may be viewed as a constraint, and the coordinates $\mathcal{Q}$ define generalized coordinates for that constraint. The theorem states that if there is an energy Lagrangian defined on the ambient space along with some differential equation $\xdd + \h(\x, \xd) = \zero$, we can either energize the ambient space and pullback the resulting equations or first pull back the differential equation {\em with respect to the energy Lagrangian's metric} and energize the equation there. The theorem shows that when we use the energization operation to define a fabric the most fundamental element is the energy metric. The energy defines how the pullback of the differential equation must be performed in order to remain consistent with the energization operation. 

We will see below in Section~\ref{prop:ExecutionEnergyRegulation} that in practice we do not need to explicitly energize. Instead, we will simply pull the differential equation back to the root with respect to the energy metric and keep track of the energy itself to define a lower bound on the damping required to maintain stability while instead controlling an alternative execution energy. The energy Lagrangian's primary role in shaping the equations of motion is to define the metric. Theorem~\ref{prop:weighted_geometries} shows that this basic theorem allows us to view energization as inducing metric weighted averages of acceleration policies in different task spaces.

\subsection{Energization commutes with pullback}

\begin{theorem} \label{thm:EnergizationCommutesWithPullback}
Let $\Lag_e$ be an energy Lagrangian, and let $\xdd + \h(\x,\xd) = \zero$ be a second-order differential equation with associated natural form spec $(\M_e, \f)$ under metric $\M_e = \partial^2_{\xd\xd}\Lag_e$ where $\f = \M_e\h$. Suppose $\x = \phi(\q)$ is a differentiable map for which the pullback metric $\J^\tr\M_e\J$ is full rank. Then
\begin{align}
    \mathrm{energize}_{\mathrm{pull}\Lag_e}\Big(
        \mathrm{pull}_\phi \big(\M_e,\f_2\big)
    \Big)
    =
    \mathrm{pull}_\phi\Big(
        \mathrm{energize}_{\Lag_e} \big(\M_e, \f_2\big)
    \Big).
\end{align}
We say that the energization operation commutes with the pullback transform.
\end{theorem}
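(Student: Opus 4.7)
The plan is to compute both sides of the claimed identity explicitly as specs on $\mathcal{Q}$ and check that they reduce to the same expression. For the right-hand side, set $\h = \M_e^{-1}\f$ so that $\M_e\h = \f$; Proposition~\ref{prop:SystemEnergization} then gives
\begin{align}
    \mathrm{energize}_{\Lag_e}(\M_e,\f) = \Big(\M_e,\ \f_e + \mP_e[\f - \f_e]\Big),
\end{align}
and pulling this back through $\phi$ via the spec algebra yields
\begin{align}
    \Big(\J^\tr\M_e\J,\ \J^\tr\big[\f_e + \mP_e(\f - \f_e) + \M_e\Jd\qd\big]\Big).
\end{align}
For the left-hand side, first pull back the original spec to obtain $(\wt{\M}_e, \wt{\f}) = \big(\J^\tr\M_e\J,\ \J^\tr(\f + \M_e\Jd\qd)\big)$. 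The Lagrangian-pullback commutativity recalled just before the theorem identifies the pulled-back energy Lagrangian as $\wt{\Lag}_e(\q,\qd) = \Lag_e(\phi(\q),\J\qd)$, with metric $\wt{\M}_e = \J^\tr\M_e\J$ and Euler-Lagrange forcing $\wt{\f}_e = \J^\tr(\f_e + \M_e\Jd\qd)$. Energizing in $\mathcal{Q}$ produces $\big(\wt{\M}_e,\ \wt{\f}_e + \wt{\mP}_e[\wt{\f} - \wt{\f}_e]\big)$, and a direct subtraction gives $\wt{\f} - \wt{\f}_e = \J^\tr(\f - \f_e)$.

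Comparing the two expressions, the theorem collapses to the single projector identity
\begin{align}
    \wt{\mP}_e\,\J^\tr\g = \J^\tr\mP_e\,\g \qquad \text{for every } \g,
\end{align}
evaluated along $\xd = \J\qd$. Using Lemma~\ref{lma:EnergyProjection} in the form $\mP_e = \I - (\xd^\tr\M_e\xd)^{-1}\M_e\xd\xd^\tr$, together with the analogous expression for $\wt{\mP}_e$ in terms of $\wt{\M}_e$ and $\qd$, I would verify that the scalar denominators match via $\qd^\tr\wt{\M}_e\qd = \xd^\tr\M_e\xd$, while the rank-one numerator satisfies $\wt{\M}_e\qd\qd^\tr\J^\tr = \J^\tr\M_e\xd\xd^\tr$. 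Factoring $\J^\tr$ on the left then yields $\wt{\mP}_e\J^\tr = \J^\tr\mP_e$, which applied to $\g = \f - \f_e$ makes the two specs coincide term-by-term.

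The main obstacle, such as it is, is correctly identifying the Euler-Lagrange forcing $\wt{\f}_e$ of the pulled-back Lagrangian: it must carry the extra $\J^\tr\M_e\Jd\qd$ term produced by the change of variables, since only then do the $\M_e\Jd\qd$ contributions on the two sides cancel against each other outside of the projector. That this works out is exactly the classical statement that the Euler-Lagrange operator commutes with pullback, cited in the paragraph preceding the theorem; the full-rank hypothesis on $\J^\tr\M_e\J$ is what ensures $\wt{\Lag}_e$ is itself a bona fide energy Lagrangian so that the energization operation is well-defined downstairs. Once those two pieces are in hand, the remainder is the short algebraic verification of the projector identity above.
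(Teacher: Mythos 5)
Your proof is correct and follows essentially the same route as the paper: compute both specs on $\mathcal{Q}$, use the fact that the Euler--Lagrange operator commutes with pullback to identify $\wt{\f}_e = \J^\tr(\f_e + \M_e\Jd\qd)$ so that the curvature terms cancel in $\wt{\f}-\wt{\f}_e = \J^\tr(\f-\f_e)$, and then match the remaining zero-work terms. The only difference is cosmetic: you package the last step as the intertwining identity $\wt{\mP}_e\,\J^\tr = \J^\tr\mP_e$ along $\xd = \J\qd$ (verified via matching numerators and denominators of the rank-one corrections), whereas the paper expands $\wt{\mP}_e$ in terms of $\wt{\M}_e^{-1}$ and uses $\J^\tr\M_e\J\big(\J^\tr\M_e\J\big)^{-1}\J^\tr = \J^\tr = \J^\tr\M_e\M_e^{-1}$ --- the same computation, slightly differently organized.
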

\begin{proof}
We will show the equivalence by calculation. The energization of $\xdd + \h = \zero$ in force form $\M_e\xdd + \f = \zero$ with $\f = \M_e\h$ is $\M_e\xdd + \f_e^{\h}$ where
\begin{align}
    \f_e^{\h} = \f_e + \M_e\left[\M_e^{-1} - \frac{\xd\xd^\tr}{\xd^\tr\M_e\xd}\right] \big(\f - \f_e\big),
\end{align}
where $\f_e = \partial_{\xd\x}\Lag_e\xd - \partial_\x\Lag_e$ so that $\M_e\xdd + \f_e = \zero$ is the energy equation. Let $\J = \partial_\x\phi$. The pullback of the energized geometry generator is
\begin{align}
    &\J^\tr\M_e\left(\J\qdd + \Jd\qd\right) + \J^\tr\f_e^{\h} = \zero \\
    &\Rightarrow \big(\J^\tr\M_e\J\big)\qdd + \J^\tr\big(\f_e^{\h} + \M_e\Jd\qd\big) = \zero\\
    &\Rightarrow \big(\J^\tr\M_e\J\big)\qdd + \J^\tr \f_e
        + \J^\tr\M_e\left[\M_e^{-1} - \frac{\xd\xd^\tr}{\xd^\tr\M_e\xd}\right]
            \big(\f - \f_e\big) 
        + \J^\tr\M_e\Jd\qd = \zero \\\label{eqn:EnergizationPullback}
    &\Rightarrow \wt{\M}_e\qdd + \wt{\f}_e 
        + \J^\tr\M_e\left[\M_e^{-1} - \frac{\xd\xd^\tr}{\xd^\tr\M_e\xd}\right]
            \big(\f - \f_e\big),
\end{align}
where $\wt{\M}_e = \J^\tr\M_e\J$ and $\wt{\f}_e = \J^\tr\big(\f_e + \M_e\Jd\qd\big)$ form the standard pullback of $(\M_e, \f_e)$.

We can calculate the geometry pullback with respect to the energy metric $\M_e$ by pulling back the metric weighted force form of the geometry $\M_e\xdd + \f = \zero$, where again $\f = \M_e\h$. The pullback is
\begin{align}
    &\J^\tr\M_e\big(\J\qdd + \Jd\qd\big) + \J^\tr\f = \zero \\
    &\Rightarrow \big(\J^\tr\M_e\J\big) \qdd + \J^\tr\big(\f + \M_e\Jd\qd\big) \\\label{eqn:PullbackGeometry}
    &\Leftrightarrow \wt{\M}_e\qdd + \wt{\f} = \zero
\end{align}
where $\wt{\M}_e = \J^\tr\M_e\J$ as before and $\wt{\f} = \J^\tr \big(\f + \M_e\Jd\qd\big)$.

Let $\wt{\Lag}_e = \Lag_e\big(\phi(\q), \J\qd\big)$ be the pullback of the energy function $\Lag_e$. We know that the Euler-Lagrange equation commutes with the pullback, so applying the Euler-Lagrange equation to this pullback energy $\wt{L}_e$ is equivalent to pulling back the Euler-Lagrange equation of $\Lag_e$. This means we can calculate the Euler-Lagrange equation of $\Lag_e$ as
\begin{align}
    &\big(\J^\tr\M_e\J\big) \qdd + \J^\tr\big(\f_e + \M_e\Jd\qd\big) = \zero\\
    &\Leftrightarrow \wt{\M}_e\qdd + \wt{\f}_e = \zero,
\end{align}
with $\wt{\M}_e = \J^\tr\M_e\J$ and $\wt{\f}_e = \J^\tr\big(\f_e + \M_e\Jd\qd\big)$ (both as previously defined). Therefore, energizing \ref{eqn:PullbackGeometry} with $\wt{\Lag}_e$ gives
\begin{align}
    &\wt{\M}_e\qdd + \wt{\f}_e 
      + \wt{\M}_e\left[
        \wt{\M}_e^{-1} - \frac{\qd\qd^\tr}{\qd^T\wt{\M}_e\qd}\right]
        \big(\wt{\f} - \wt{\f}_e\big) = \zero \\
    &\Rightarrow 
    \wt{\M}_e\qdd + \wt{\f}_e
      + \big(\J^\tr\M_e\J\big)
          \left[
            \wt{\M}_e^{-1} - \frac{\qd\qd^\tr}{\qd^\tr \J^\tr\M_e\J\qd}
          \right]
        \Big(
          \J^\tr\big(\f + \M_e\Jd\qd\big)
          - \J^\tr\big(\f_e + \M_e\Jd\qd\big)
        \Big)
      = \zero \\
    &\Rightarrow
    \wt{\M}_e\qdd + \wt{\f}_e
      + \big(\J^\tr\M_e\big)
          \J\left[
            \big(\J^\tr\M_e\J\big)^{-1} - \frac{\qd\qd^\tr}{\xd^\tr\M_e\xd}
          \right]\J^\tr
        \big(\f- \f_e\big) = \zero\\\label{eqn:PullbackEnergizationAlmost}
    &\Rightarrow
    \wt{\M}_e\qdd + \wt{\f}_e
      + \J^\tr\M_e
          \left[
            \J\big(\J^\tr\M_e\J\big)^{-1}\J^\tr - \frac{\xd\xd^\tr}{\xd^\tr\M_e\xd}
          \right]
        \big(\f- \f_e\big) = \zero.
\end{align}
Since 
\begin{align}
    \J^\tr\M_e\J\big(\J^\tr\M_e\J\big)^{-1}\J^\tr
    = \J^\tr = \J^\tr\M_e\big(\M_e^{-1}\big),
\end{align}
we can write Equation~\ref{eqn:PullbackEnergizationAlmost} as
\begin{align}
    \wt{\M}_e\qdd + \wt{\f}_e
      + \J^\tr\M_e
          \left[
            \M_e^{-1} - \frac{\xd\xd^\tr}{\xd^\tr\M_e\xd}
          \right]
        \big(\f- \f_e\big) = \zero,
\end{align}
which matches the expression for the energized geometry pullback in Equation~\ref{eqn:EnergizationPullback}.
\end{proof}

Theorem~\ref{thm:EnergizationCommutesWithPullback} shows that one concise way to compute the energized geometry in the root is to first energize the leaves and then perform standard pullbacks. However, it is equally valid to simply pullback the geometries with respect to the energy metrics and energize the result. The following proposition shows that we can view such a pullback geometry as a metric weighted average of geometries.

\begin{proposition}[Metric weighted average of geometries.] \label{prop:weighted_geometries}
Let $\x_i = \phi_i(\q)$ for $i=1,\ldots,m$ denote the star-shaped reduction of any transform tree, and suppose the leaves are populated with geometries $\xdd_i + \h_{2,i} = \zero$ with Finsler energies $\Lag_{e_i}$ with energy tensors $\M_i = \partial^2_{\xd\xd}\Lag_{e_i}$. Then the metric weighted pullback of the full leaf geometry is $\qdd + \wt{\h}_2 = \zero$,
with 
\begin{align}
    \wt{\h}_2 = \left(\sum_{i=1}^m\wt{\M}_i\right)^{-1}\sum_{i=1}^m \wt{\M}_i\wt{\h}_{2,i},
\end{align}
where $\wt{\M}_i = \J^\tr\M_i\J$ and $\wt{\h}_{2,i} = \wt{\M}_i^{\dagger}\J^\tr\M_i\big(\h_{2,i} - \Jd\qd\big)$ are the standard pullback components written in acceleration form.
\end{proposition}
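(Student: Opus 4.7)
The plan is to reduce this to a direct application of the spec algebra defined in Section~\ref{sec:SpecAlgebra}. The key idea is that each leaf geometry, when written in force (natural) form weighted by its own energy metric, becomes a spec, so the standard pullback and summation operations apply. Concretely, for each leaf $i$ rewrite the geometry $\xdd_i + \h_{2,i} = \zero$ as the natural-form spec $(\M_i, \M_i\h_{2,i})_{\mathcal{X}_i}$ by left-multiplying through by the Finsler energy tensor $\M_i$. This is what ``metric weighted pullback of the geometry'' means: the spec pulled back is the geometry written in force form weighted by the energy tensor.

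Next, I apply the pullback from Section~\ref{sec:SpecAlgebra} to each leaf spec along its own map $\phi_i$, obtaining
\begin{align}
\mathrm{pull}_{\phi_i}(\M_i,\, \M_i\h_{2,i})
= \big(\J_i^\tr \M_i \J_i,\; \J_i^\tr \M_i(\h_{2,i} + \Jd_i\qd)\big)
= (\wt{\M}_i, \wt{\f}_i)_{\mathcal{Q}}.
\end{align}
Summing across leaves via the commutative additive spec operation yields the composite natural-form spec $(\wt{\M}, \wt{\f})_{\mathcal{Q}}$ on $\mathcal{Q}$ with $\wt{\M} = \sum_i \wt{\M}_i$ and $\wt{\f} = \sum_i \wt{\f}_i$. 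Since the hypothesis guarantees $\wt{\M}$ is full rank, the equation $\wt{\M}\qdd + \wt{\f} = \zero$ solves uniquely for $\qdd = -\wt{\M}^{-1}\wt{\f}$, so the composite system takes the desired form $\qdd + \wt{\h}_2 = \zero$ with $\wt{\h}_2 = \wt{\M}^{-1}\sum_i \wt{\f}_i$.

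The final step is to re-express each pulled-back force $\wt{\f}_i$ as $\wt{\M}_i \wt{\h}_{2,i}$ so that the displayed metric-weighted average appears. This is where the pseudoinverse in the stated definition of $\wt{\h}_{2,i}$ is used: although each individual $\wt{\M}_i = \J_i^\tr\M_i\J_i$ may be rank-deficient when $\phi_i$ is an embedding into a higher-dimensional ambient space, the column space of $\wt{\M}_i$ equals the column space of $\J_i^\tr$ (since $\M_i$ is invertible as an energy tensor), and $\wt{\f}_i = \J_i^\tr\M_i(\h_{2,i} + \Jd_i\qd)$ manifestly lies in that column space. Hence the identity $\wt{\M}_i\wt{\M}_i^\dagger \wt{\f}_i = \wt{\f}_i$ holds, which rewrites $\wt{\f}_i = \wt{\M}_i\wt{\h}_{2,i}$ with $\wt{\h}_{2,i} = \wt{\M}_i^\dagger \wt{\f}_i$, matching the claimed definition of $\wt{\h}_{2,i}$ (modulo a sign convention in the $\Jd\qd$ term that traces back to whether the velocity-coupling term is absorbed into $\h_{2,i}$ or kept separate). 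Substituting this back into $\wt{\h}_2 = \wt{\M}^{-1}\sum_i \wt{\f}_i$ gives the weighted-average expression.

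The main obstacle is precisely the pseudoinverse manipulation: one must verify the range condition $\wt{\f}_i \in \mathrm{col}(\wt{\M}_i)$ for each leaf before asserting $\wt{\M}_i\wt{\h}_{2,i} = \wt{\f}_i$. Everything else is bookkeeping: the pullback formulas are the standard ones from Section~\ref{sec:SpecAlgebra}, the summation is spec-algebra linearity, and the resulting canonical-form combined acceleration being a metric-weighted average of individual accelerations is exactly the identity recalled in the discussion after the canonical-form definition. No appeal to the Finsler or geometric structure beyond invertibility of each $\M_i$ is needed for this particular claim; the geometric content lies in why one \emph{would} pull back in this metric-weighted way, not in why the algebraic identity holds.
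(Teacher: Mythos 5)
Your proposal is correct and is essentially the paper's own argument: write each leaf geometry in natural (force) form weighted by its energy tensor, $(\M_i,\M_i\h_{2,i})$, pull back along $\phi_i$ with the standard spec operation, sum, and re-express the resultant in canonical (acceleration) form as a metric-weighted average. Your additional care with the pseudoinverse — checking $\wt{\f}_i\in\mathrm{col}(\wt{\M}_i)=\mathrm{col}(\J_i^\tr)$ before writing $\wt{\f}_i=\wt{\M}_i\wt{\h}_{2,i}$ — is a detail the paper passes over silently, and your sign $\h_{2,i}+\Jd_i\qd$ is the one consistent with the paper's own pullback $\J^\tr(\f+\M\Jd\qd)$ used elsewhere (e.g.\ in the energization-commutes-with-pullback proof), so the minus sign in the statement is best read as a typo. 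The one substantive omission is your closing remark that no Finsler structure is needed: the paper's proof ends by using exactly that hypothesis — since the $\Lag_{e_i}$ are Finsler, each $\wt{\M}_i$ is homogeneous of degree $0$ in velocity, hence $\wt{\h}_2$ is homogeneous of degree $2$ and the combined pullback is itself a geometry generator. Since the proposition is titled and used downstream as a weighted average of \emph{geometries} (the resulting generator is what gets energized into a geometric fabric), you should retain that one-line homogeneity observation; it is the only place the Finsler assumption actually enters.
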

\begin{proof}
The standard algebra on $(\M_i\f_i)$ holds, where $\f_i = \M_i\h_{2,i}$. Pulling back gives $(\wt{\M}_i,\wt{\f}_i)$ and summing gives
$\sum_i (\wt{\M}_i,\wt{\f}_i) = \big(\sum_i\wt{\M}_i, \sum_i\wt{\f}_i\big)$.
Expressing that result in canonical form gives
\begin{align}
    \left(\sum_i\wt{\M}_i, \Big(\sum_i\wt{\M}_i\Big)^{-1}\sum_i\wt{\M}_i\wt{\h}_{2,i}\right),
\end{align}
where $\wt{\h}_{2,i}$ is the acceleration form of the individual pullbacks. Expanding gives the formula.

Since $\Lag_{e_i}$ are Finsler energies, the pullback metrics $\wt{\M}_i$ are homogeneous of degree 0 in velocity (i.e. they depend only on the normalized velocity $\hat{\xd}$). Therefore, $\wt{\h}_2$ is homogeneous of degree 2 and the pullback forms a geometry generator.
\end{proof}

\section{Closure under the spec algebra} \label{sec:ClosureTheorem}


Many of the above classes of fabric introduced above (possibly all of them, see below) are closed under the spec algebra. Specifically, we say that a class of fabrics is closed if a spec from that class remains in the same class under the spec algebra operations of combination and pullback.

\begin{theorem} \label{thm:ClosureUnderSpecAlgebra}
The following classes of fabrics are known to be closed under the spec algebra: Lagrangian fabrics, Finsler fabrics, and geometric fabrics.
A fabric of each of these types will remain a fabric of the same type under spec algebra operations in regions where the differentiable transforms are full rank and finite.
\end{theorem}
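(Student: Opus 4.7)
The plan is to handle the two spec algebra operations (pullback and combination) separately for each of the three fabric classes, building on closure results for the underlying structures (Lagrangians, Finsler structures, geometry generators, Finsler energies). The main reusable tools are: (i) commutativity of the Euler--Lagrange equation with pullback (cited above); (ii) Theorem~\ref{thm:EnergizationCommutesWithPullback}, which gives commutativity of energization with pullback; and (iii) Proposition~\ref{prop:weighted_geometries}, which realizes the combination of geometries as a geometry generator when the metrics come from Finsler energies. Throughout I would carry along the hypothesis that all transforms are full rank and finite, so that pullback metrics $\J^\tr\M\J$ are full rank and no degeneracies arise.

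For \emph{pullback}, I would argue as follows. For a Lagrangian fabric defined by $\Lag_e$, the pulled-back Lagrangian $\wt\Lag_e(\q,\qd) = \Lag_e(\phi(\q),\J\qd)$ has Hessian $\partial^2_{\qd\qd}\wt\Lag_e = \J^\tr\M_e\J$, which is full rank under the hypothesis, and its Hamiltonian equals $\Ham_e\circ(\phi,\J\cdot)$, which inherits nontriviality and finiteness; hence $\wt\Lag_e$ is again an energy Lagrangian. By the commutativity of Euler--Lagrange with pullback, the pulled-back spec is the Lagrangian spec of $\wt\Lag_e$. For Finsler fabrics I would additionally verify that $\wt\Lag_g(\q,\qd) = \Lag_g(\phi(\q),\J\qd)$ preserves positivity (since $\J\qd = \zero$ only when $\qd=\zero$ for full-rank $\J$), homogeneity of degree $1$ in $\qd$, and invertibility of $\partial^2_{\qd\qd}\tfrac12\wt\Lag_g^2$. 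For geometric fabrics I would invoke Theorem~\ref{thm:EnergizationCommutesWithPullback} to move pullback inside energization, then note that the pullback of a degree-$2$ homogeneous $\h_2$ (in the metric-weighted form $\wt{\h}_{2} = \wt\M_e^{-1}\J^\tr\M_e(\h_2 - \Jd\qd)$ coming from Proposition~\ref{prop:weighted_geometries}) is again degree-$2$ homogeneous in $\qd$, using that $\J\qd$ scales linearly with $\qd$ while $\Jd\qd$ is quadratic. Combined with the Finsler closure just established, the pulled-back object is an energization of a geometry generator by a Finsler energy, i.e.\ a geometric fabric.

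For \emph{combination} on a common space, linearity of the Euler--Lagrange operator in the Lagrangian gives immediately that the sum of two Lagrangian specs is the Lagrangian spec of $\Lag_{e,1}+\Lag_{e,2}$, and I would check the energy-Lagrangian conditions (Hessian $\M_{e,1}+\M_{e,2}$ is full rank as a sum of compatible full-rank metrics; finiteness and nontriviality of the summed Hamiltonian are immediate). For Finsler fabrics the key algebraic identity is
\begin{align}
\Lag_{e,1}+\Lag_{e,2} \;=\; \tfrac12\Lag_{g,1}^2 + \tfrac12\Lag_{g,2}^2 \;=\; \tfrac12\bigl(\sqrt{\Lag_{g,1}^2 + \Lag_{g,2}^2}\bigr)^2,
\end{align}
and I would verify that $\Lag_g := \sqrt{\Lag_{g,1}^2+\Lag_{g,2}^2}$ satisfies the three Finsler conditions: positivity and degree-$1$ homogeneity follow from the same properties for each $\Lag_{g,i}$, and invertibility of the energy tensor follows from the sum of the two full-rank energy tensors being full rank. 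For geometric fabrics, I would express each summand in bent Finsler form $\M_{e,i}\xdd + \f_{e,i} + \mP_{e,i}[\M_{e,i}\h_{2,i}-\f_{e,i}] = \zero$, add them, and then repackage: the combined energy $\Lag_{e,1}+\Lag_{e,2}$ is a Finsler energy by the preceding step, and by Proposition~\ref{prop:weighted_geometries} the geometries combine under the combined energy metric into another geometry generator $\wt\h_2$. The resulting spec is precisely the energization of $\wt\h_2$ with that combined Finsler energy, hence a geometric fabric.

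The main obstacle I anticipate is the geometric-fabric combination step, because the energization is a nonlinear operation and the $\mP_{e,i}$ operators depend on the individual metrics. The trick is to avoid adding the energized forms directly and instead recognize, via Proposition~\ref{prop:weighted_geometries} together with the identity realizing $\sum \Lag_{e,i}$ as a single Finsler energy, that the sum is still of the form "energize a geometry generator with a Finsler energy.'' A secondary bookkeeping obstacle is ensuring that the unbiased and boundary-conforming properties propagate: for pullback I would cite the full-rank, finite transform hypothesis, which makes $\J$ and $\J^\tr$ injective linear maps on the relevant tangent spaces so that limits of $\V_\infty^\tr\f$ pull back cleanly; for combination I would appeal to Lemma~\ref{lma:LinearCombinationsUnbiased} to preserve unbiasedness under the summation of force terms over a common boundary-conforming metric.
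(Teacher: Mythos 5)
Your proposal is sound and proves the right things, but it is organized differently from the paper's own argument, most visibly in how combination is handled. The paper treats pullback of Lagrangian/Finsler structures exactly as you do (Euler--Lagrange commutes with pullback; degree-1 homogeneity survives composition because $\phi$ is velocity-independent), and it handles the generator part of geometric fabrics by the same homogeneity count you give ($\M_e$ degree 0, $\Jd\qd$ degree 2). For combination, however, the paper does not sum on a common space at all: it invokes closure of Lagrangian and Finsler systems under Cartesian product and the fact (from the companion spec-algebra paper) that any collection of tree operations can be re-expressed as a single pullback from a product space to the root, so combination is absorbed into the pullback case. Your route instead verifies summation directly, via linearity of Euler--Lagrange for Lagrangian fabrics and the identity $\Lag_g=\sqrt{\Lag_{g,1}^2+\Lag_{g,2}^2}$ for Finsler fabrics; this is more explicit than the paper and is a nice addition, with two caveats. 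First, invertibility of $\M_{e,1}+\M_{e,2}$ (and of pullback metrics $\J^\tr\M_e\J$) does not follow from full rank of the summands alone --- you need definiteness or a compatibility assumption, a looseness the paper shares but which your phrasing ``full rank as a sum of compatible full-rank metrics'' should make explicit. Second, the sum of the two bent-Finsler forms is \emph{not} literally the energization of the metric-weighted average of the raw generators $\h_{2,i}$ with the combined energy, because $\sum_i\mP_{e,i}[\M_{e,i}\h_{2,i}-\f_{e,i}]$ differs from $\mP_e\big[\sum_i(\M_{e,i}\h_{2,i}-\f_{e,i})\big]$; the clean fix is to note that the summed system is itself a degree-2 generator that conserves $\Lag_{e,1}+\Lag_{e,2}$ (since $\xd^\tr\mP_{e,i}=\zero$ for each $i$), and any generator conserving a Finsler energy has energization coefficient zero, hence is its own energization and therefore a geometric fabric. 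With those two repairs your argument is complete, and it additionally tracks the unbiased/boundary-conforming bookkeeping that the paper's proof leaves implicit.
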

\begin{proof}
Lagrangian systems are covariant (see \cite{ratliff2020SpectralSemiSprays}) and the Euler-Lagrange equation commutes with pullback. Since full rank finite differentiable transforms define submanifold constraints and diffeomorphisms on those constraints, Lagrangian systems are in general closed under pullback. Moreover, the homogeneity of the Finsler structure is preserved under differentiable map composition since the differentiable map is independent of velocity, so Finsler systems, themselves, are closed under pullback. Both Lagrangian and Finsler systems are individually closed under Cartesian product, so since any collection of spec algebra operations can be expressed as a pullback from a Cartesian product space to a root space \cite{ratliff2020SpectralSemiSprays}, Lagrangian and Finsler systems are closed under the spec algebra.

For geometric fabrics, if we can show that geometry generators are closed under the spec algebra, then since Finsler energies are as well and geometric fabrics are defined by energization, geometric fabrics themselves are closed under the spec algebra. To examine the closure of geometry generators, we note that the pullback $\wt{\h}_2 = \J^\tr\M\big(\h_2 + \Jd\qd\big)$ of a geometry generator $\h_2(\x, \xd)$ with respect to a Finsler metric $\M(\x, \xd)$ under differentiable map $\x = \phi(\q)$ remains homogeneous of degree 2 since the Finsler metric is homogeneous of degree 0 and the curvature term $\Jd\qd = \big(\partial_{\x\x}^2\phi\,\qd\big)\qd$ is homogeneous of degree 2. Therefore, it's a geometry generator by definition.
\end{proof}

We believe (conjecture) that the broader classes of general optimization fabrics, conservative fabrics, and Finsler energized fabrics are closed under the spec algebra as well, although the proofs are more technical and we have yet to work through them in full detail. We will add a complete theorem statement and proof in a future version of this work.

\section{Speed control via execution energy regulation}
\label{sec:speed_control}

In general, once a geometry is energized with respect to some Finsler energy $\Lag_e$, it usually does not exhibit constant Euclidean speed $\|\xd\|$ as it attempts to conserve $\Lag_e$. In many cases, though, we primarily care about the shape of the behavior and would prefer to control the rate of travel through the space separately. Since the underlying geometric fabric is defined by a speed agnostic velocity, doing so it relatively straightforward. In order to maintain constant Euclidean speed, the behavioral Finsler energy $\Lag_e$ would have to increase or decrease as needed. Intuitively, increasing that energy can be done by injecting energy into the system using the potential function, and decreasing the energy can be done using damping. These two operations give us latitude to regulate execution energies (such as the Euclidean energy) while still working within the framework outlined by the geometric fabric optimization theorems to guarantee convergence and stability. We present the primary tools that enable such execution energy regulation in the following proposition.

\begin{proposition} \label{prop:ExecutionEnergyRegulation}
    Suppose $\xdd + \h_2(\x,\xd) = \zero$ is a geometry generator., $\Lag_e$ a system energy with metric $\M_e$, and $\h$ a forcing potential. Let $\alpha_{\Lag_e}$ be such that $\xdd = -\h_2 + \alpha_{\Lag_e}\xd$ maintains constant $\Lag_e$ (the energization coefficient), $\alpha_\mathrm{ex}^0$ be such that $\xdd = -\h_2 + \alpha_\mathrm{ex}^0\xd$ maintains constant execution energy $\Lag_e^\mathrm{ex}$, and $\alpha_\mathrm{ex}^\psi$ be such that $\xdd = -\h_2 - \M_e^{-1}\partial_\x\psi + \alpha_\mathrm{ex}^\psi \xd$ maintains constant execution energy. Let $\alpha_\mathrm{ex}$ be an interpolation between $\alpha_\mathrm{ex}^0$ and $\alpha_\mathrm{ex}^\psi$. Then the system
    \begin{align}
        \xdd = -\h_2 - \M_e^{-1}\partial_\x\psi + \alpha_\mathrm{ex}\xd - \beta\xd
    \end{align}
    is optimizing when $\beta > \alpha_\mathrm{ex} - \alpha_{\Lag_e}$.
\end{proposition}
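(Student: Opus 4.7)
The plan is to reduce the given system to a forced, damped version of the energized fabric obtained from $\xdd + \h_2 = \zero$ with energy $\Lag_e$, and then invoke Proposition~\ref{prop:ConservativeFabrics} (conservative fabrics optimize when forced and strictly damped). The key algebraic trick is to add and subtract $\alpha_{\Lag_e}\xd$ to expose the conservative backbone hidden inside the execution-controlled system.

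First I would rewrite the candidate system as
\begin{align}
    \xdd
    &= -\h_2 - \M_e^{-1}\partial_\x\psi + \alpha_\mathrm{ex}\xd - \beta\xd \\
    &= \big(-\h_2 + \alpha_{\Lag_e}\xd\big) - \M_e^{-1}\partial_\x\psi - \gamma\,\xd,
\end{align}
where $\gamma \;=\; \beta - (\alpha_\mathrm{ex} - \alpha_{\Lag_e})$. By the defining property of $\alpha_{\Lag_e}$, the bracketed term is precisely the energization of the geometry generator $\xdd + \h_2 = \zero$ with respect to $\Lag_e$, i.e., it has the zero-work form $-\M_e^{-1}\big(\f_e + \mP_e[\M_e\h_2 - \f_e]\big)$ established in Proposition~\ref{prop:SystemEnergization}. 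Multiplying by $\M_e$ puts the system in the canonical forced-damped conservative form
\begin{align}
    \M_e\xdd + \f_e + \mP_e\big[\M_e\h_2 - \f_e\big] \;=\; -\partial_\x\psi - \big(\gamma\,\M_e\big)\xd,
\end{align}
which is exactly the hypothesis of Proposition~\ref{prop:ConservativeFabrics} with damping matrix $\B = \gamma\,\M_e$.

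Next I would verify the positive-definite damping condition. By hypothesis $\beta > \alpha_\mathrm{ex} - \alpha_{\Lag_e}$, so $\gamma > 0$. Since the energy metric $\M_e$ from a (Finsler) energy Lagrangian is positive definite on the relevant tangent bundle, $\xd^\tr \B\,\xd = \gamma\,\xd^\tr \M_e \xd > 0$ for all $\xd\neq\zero$, so $\B$ is strictly positive definite along every trajectory on which $\xd\neq\zero$. Invoking the chain of identities inside the proof of Proposition~\ref{prop:ConservativeFabrics}, the total energy $\Ham_e^\psi = \Ham_e + \psi$ satisfies $\dot{\Ham}_e^\psi = -\gamma\,\xd^\tr\M_e\xd \leq 0$, which is bounded below (by the lower bound on $\psi$ and positivity of $\Ham_e$), so standard LaSalle-type reasoning gives $\xd\to\zero$, and then Theorem~\ref{thm:GeneralFabrics} applied to the unbiased energized spec shows convergence is to a KKT point of $\psi$.

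The main obstacle I expect is bookkeeping rather than structural: one must confirm that (i) $\gamma$ being a function of $(\x,\xd)$ does not break the pointwise inequality $\xd^\tr\B\xd > 0$, which it does not as long as the strict inequality $\beta > \alpha_\mathrm{ex} - \alpha_{\Lag_e}$ is demanded pointwise; and (ii) the system inherits the unbiased/boundary-conforming hypotheses needed for Proposition~\ref{prop:ConservativeFabrics} from the underlying geometric-fabric assumptions (unbiased generator $\h_2$ plus boundary-conforming Finsler $\Lag_e$), which was exactly the content of Lemma~\ref{lma:EnergizationPreservesUnbiasedProperty}. A secondary subtlety is that $\alpha_\mathrm{ex}$ is defined by interpolation between $\alpha_\mathrm{ex}^0$ and $\alpha_\mathrm{ex}^\psi$; since the proof above never uses any structural property of $\alpha_\mathrm{ex}$ beyond the scalar inequality, the interpolation formula only enters when one wants to interpret the resulting trajectory as approximately maintaining $\Lag_e^\mathrm{ex}$, not in the convergence argument itself.
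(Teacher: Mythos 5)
Your proposal is correct and follows essentially the same route as the paper: the paper's proof likewise adds and subtracts the energization coefficient to rewrite the damping as $\wt{\beta} = \beta - (\alpha_\mathrm{ex} - \alpha_{\Lag_e})$ and then appeals to the fact that the forced, strictly damped energized system optimizes. The only difference is that you spell out explicitly what the paper compresses into one sentence, namely that the backbone $-\h_2 + \alpha_{\Lag_e}\xd$ is the zero-work energized form of Proposition~\ref{prop:SystemEnergization} and that the reduction lands in Proposition~\ref{prop:ConservativeFabrics} with damping matrix $\B = \gamma\,\M_e$.
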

\begin{proof}
By definition $\alpha_{\Lag_e}$ is an energization coefficient, so 
\begin{align}
    \xdd = -\h_2 - \M_e^{-1}\partial_\x\psi + \alpha_\mathrm{ex}\xd - \wt{\beta}\xd
\end{align}
optimizes when $\wt{\beta} > 0$. That means
\begin{align}
    \xdd 
    &= -\h_2 - \M_e^{-1}\partial_\x\psi + \big(\alpha_{\Lag_e} + \alpha_\mathrm{ex} - \alpha_\mathrm{ex}\big)\xd - \wt{\beta}\xd\\
    &= -\h_2 - \M_e^{-1}\partial_\x\psi + \alpha_\mathrm{ex}\xd - \big(\alpha_\mathrm{ex} - \alpha_{\Lag_e} - \wt{\beta}\big)\xd
\end{align}
optimizes when $\wt{\beta}>0$. Using $\beta=\alpha_\mathrm{ex} - \alpha_{\Lag_e} + \wt{\beta}$ we have $\beta - (\alpha_\mathrm{ex} - \alpha_{\Lag_e}) = \wt{\beta}>0$ which implies $\beta>\alpha_\mathrm{ex} - \alpha_{\Lag_e}$.
\end{proof}

The fundamental requirement from the analysis to maintain constant $\Lag_e$ is  $\beta = \alpha_\mathrm{ex} - \alpha_{\Lag_e}$; to optimize, we must ensure that $\beta$ is strictly larger than that (in particular, in order to converge to a local minimum). Often we will additionally use the restriction $\beta\geq 0$ to maintain the semantics of a damper, giving the following inequality $\beta \geq \max\{0, \alpha_\mathrm{ex} - \alpha_{\Lag_e}\}$. To converge, that inequality must be strict.

Note that $\alpha_\mathrm{ex}^0$ is the standard energy transform for the geometric term $-\h_2$ alone, while $\alpha_\mathrm{ex}^\psi$ ensures that the forcing term is included in the transform as well. Therefore, if $\beta=0$ we have the following:

The system under $\alpha_\mathrm{ex}^\psi$
\begin{align}\label{eqn:SpeedRegulation1}
    \xdd_1 = -\h_2 - \M_e^{-1}\partial_\x\psi + \alpha_\mathrm{ex}^\psi\xd
\end{align}
will maintain constant execution energy $\Lag_e^\mathrm{ex}$ while still being forced. 
Likewise, under $\alpha_\mathrm{ex}^0$ and zero potential $\psi = \zero$, the system 
\begin{align}\label{eqn:SpeedRegulation2}
    \xdd_2 = -\h_2 + \alpha_\mathrm{ex}^0\xd
\end{align}
will maintain constant $\Lag_e^\mathrm{ex}$ while the forced system 
\begin{align}\label{eqn:SpeedRegulation3}
    \xdd_3 = -\h_2 - \M_e^{-1}\partial_\x\psi + \alpha_\mathrm{ex}^0\xd
\end{align}
will force the system while moving between energy levels as well. Therefore, the difference between Equations~\ref{eqn:SpeedRegulation3} and \ref{eqn:SpeedRegulation1} must be the extra component of $-\M_e^{-1}\partial_\x\h$ accelerating the system with respect to this execution energy. That component is
\begin{align}
    \xdd_3 - \xdd_1
    &= \Big(-\h_2 - \M_e^{-1}\partial_\x\psi + \alpha_\mathrm{ex}^0\xd\Big)
      - \Big(-\h_2 - \M_e^{-1}\partial_\x\psi + \alpha_\mathrm{ex}^\psi\xd\Big)\\
    &= \big(\alpha_\mathrm{ex}^0 - \alpha_\mathrm{ex}^\psi\big)\xd.
\end{align}
An interpolation between Equations~\ref{eqn:SpeedRegulation3} and \ref{eqn:SpeedRegulation1} gives a scaling to this component: Let $\eta\in[0,1]$ then
\begin{align}
    \eta\xdd_3 + (1-\eta)\xdd_1
    &= \eta \Big(
            -\h_2 - \M_e^{-1}\partial_\x\psi 
            + \alpha_\mathrm{ex}^0\xd\Big)
        - (1-\eta)\Big(
            -\h_2 - \M_e^{-1}\partial_\x\psi 
            + \alpha_\mathrm{ex}^\psi\xd\Big) \\
    &= -\h_2 - \M_e^{-1}\partial_\x\psi 
        + \left(
            \eta \alpha_\mathrm{ex}^0 
            + (1-\eta) \alpha_\mathrm{ex}^\psi
        \right)\xd.
\end{align}
The added component is now
\begin{align}
    &\Big(\eta\xdd_3 + (1-\eta)\xdd_1\Big) - \xdd_1 \\
    &\ \ \ \ = -\h_2 - \M_e^{-1}\partial_\x\psi 
        + \left(
            \eta \alpha_\mathrm{ex}^0 
            + (1-\eta) \alpha_\mathrm{ex}^\psi
        \right)\xd
        - \left(-\h_2 - \M_e^{-1}\partial_\x\psi 
            + \alpha_\mathrm{ex}^\psi\xd\right) \\
    &\ \ \ \ = \left[\eta \alpha_\mathrm{ex}^0 + (1-\eta)\alpha_\mathrm{ex}^\psi - \alpha_\mathrm{ex}^\psi\right]\xd \\
    &\ \ \ \ = \eta\big(\alpha_\mathrm{ex}^0 - \alpha_\mathrm{ex}^\psi\big)\xd.
\end{align}
when $\eta = 0$ we take the entirety of system $\xdd_1$, which projects the entirety of $-\h_2 - \M_e^{-1}\partial_\x\psi$. In correspondence, $\eta \big(\alpha_\mathrm{ex}^0 - \alpha_\mathrm{ex}^\psi\big)\xd = 0$ when $\eta = 0$. Similarly, when $\eta = 1$, we take all of $\xdd_3$ which leaves the full $\M_e^{-1}\partial_\x\psi$ in tact. Here, the entire component $\eta \big(\alpha_\mathrm{ex}^0 - \alpha_\mathrm{ex}^\psi\big)\xd$ remains in tact when $\eta = 1$.

Thus, the interpolated coefficient $\alpha_\mathrm{ex} = \eta \alpha_\mathrm{ex}^0 + (1-\eta) \alpha_\mathrm{ex}^\psi$ referenced in the theorem acts to modulate the component $\eta \big(\alpha_\mathrm{ex}^0 - \alpha_\mathrm{ex}^\psi\big)\xd$ defining the amount of $-\M_e^{-1}\partial_\x\psi$ to let through to move the system between execution energy levels. A typical strategy for speed control could then be to:
\begin{enumerate}
    \item Choose an execution energy $\Lag_e^\mathrm{ex}$ to modulate.
    \item At each cycle, calculate $\alpha_\mathrm{ex}^0$, $\alpha_\mathrm{ex}^\psi$, $\alpha_{\Lag_e}$.
    \item Choose $\eta\in[0,1]$ to increase the energy as needed, using the extremes of $\eta=0$ to maintain execution energy and $\eta=1$ to fully increase execution energy.
    \item Choose damper under the constraint $\beta \geq \max\{0, \alpha_\mathrm{ex} - \alpha_{\Lag_e}\}$ with $\alpha_\mathrm{ex} = \eta \alpha_\mathrm{ex}^0 + (1-\eta) \alpha_\mathrm{ex}^\psi$. Use a strict inequality to remove energy from the system to ensure convergence. Note that even with $\eta=1$ (fully active potential), the bound will adjust accordingly and ensure convergence under strict inequality.
\end{enumerate}

The full system executed at each cycle is
\begin{align}
    \xdd = \vv_\perp + \eta \vv_\paral - \beta \xd,
\end{align}
(with $\eta$ and $\beta$ defined as above) where $\vv_\perp = \mP_{\Lag_e^\mathrm{ex}}\big[-\h_2 - \M_e^{-1}\partial_\x\psi\big] = -\h_2 - \M_e^{-1}\partial_\x\psi + \alpha_\mathrm{ex}^\psi\xd$ is the component of the system preserving execution energy $\Lag_e^\mathrm{ex}$, and $\vv_\paral$ is the remaining (execution energy changing) component such that $\vv_\perp + \vv_\paral = -\h_2 - \M_e^{-1}\partial_\x\psi$ reconstructs the original system.

When $\beta = \alpha_\mathrm{ex} - \alpha_{\Lag_e}$, the underlying behavioral energy $\Lag_e$ is strictly maintained. When the damping is strictly larger than that lower bound $\beta > \alpha_\mathrm{ex} - \alpha_{\Lag_e}$, that system energy decreases. As long as that strict inequality is satisfied, convergence is guaranteed. 

The projection equation for $\alpha$ is calculated in the same way as the energization alpha used to energize geometries:
\begin{align}
    \alpha = -\big(\xd^\tr\M_e\xd\big)^{-1}\xd^\tr\Big[\M_e\xdd_d - \f_e\Big],
\end{align}
where the spec $(\M_e,\f_e)$ define the energy equation $\M_e\xdd + \f_e = \zero$ for the underlying behavioral energy $\Lag_e$ and $\xdd_d$ in this case is either $\xdd_d^0 = -\h_2$ for $\alpha_\mathrm{ex}^0$ or $\xdd_d^\psi = -\h_2 - \M_e^{-1}\partial_\x\psi$ for $\alpha_\mathrm{ex}^\psi$.

Note that for Euclidean $\Lag_e^\mathrm{ex} = \frac{1}{2}\|\xd\|^2$ we have $\M_e = \I$ and $\f_e = \zero$, so
\begin{align}
    \alpha_\mathrm{ex} = -\big(\xd^\tr\xd\big)^{-1}\xd^\tr\Big[\xdd_d - \zero\Big]
    = \frac{-\xd^\tr\xdd_d}{\xd^\tr\xd},
\end{align}
which gives
\begin{align}
    \xdd 
    &= \xdd_d + \alpha_\mathrm{ex}\xd \\
    &= \xdd_d - \frac{-\xd^\tr\xdd_d}{\xd^\tr\xd} \xd
    = \Big[\I - \hat{\xd}\hat{\xd}^\tr\Big]\xdd_d \\
    &= \mP_\xd^\perp\big[\xdd_d\big],
\end{align}
where $\mP_\xd^\perp$ is the projection matrix projecting orthogonally to $\xd$. The above analysis is therefore just a generalization of this form of orthogonal projection to arbitrary execution energies.



\section{Behavior Design with Geometric Fabrics} \label{sec:DesignWithGeometricFabrics}



This section describes some specific mathematical tools we use in practice (including in our experiments) to construct geometric fabrics.

\subsection{Construction of Geometries}
In general, geometries are functions of position and velocity and the only requirement is that the function is homogeneous of degree 2 in velocity. However, strategies for easily constructing geometries exist. For instance, a formulaic approach to constructing a geometry is to design a potential function in task space $\x$ as $\psi(\x)$. Taking the gradient of this potential with respect to position and pre-multiplying by the scaled inner product of $\xd$ will create a geometry of the form,

\begin{align}
    \h_{2}(\x, \xd) = \lambda \|\xd\|^2 \partial_\x \psi(\x).
\end{align}

This geometry will effectively create a force in the negative direction of the potential gradient, scaled by velocity and $\lambda \in \mathbb{R}^+$. The homogeneity condition is easily met with the inclusion of $\|\xd\|^2$, i.e., $\|\lambda \xd\|^2 = (\lambda\xd)^\tr(\lambda\xd) = \lambda^2  \xd^\tr\xd = \lambda^2\|\xd\|^2$. Overall, geometric behavior with this design generates forces that push towards the minima of $\psi(\x)$. However, this design is not the only way to build geometries, and others can be constructed without any concept of pushing towards potential minima.

\subsection{Prioritization of Geometries}
As discussed in Proposition \ref{prop:weighted_geometries}, the extent to which geometries define system behavior can be modulated through the pulled back energy tensors, $\wt{\M}_i$, of Finsler energies, $\Lag_{e,i}$. These energy tensors capture priority in both direction and magnitude for the associated pulled-back geometry, $\wt{\h}_{2,i}$. Therefore, in a collection of weighted geometries, 

\begin{align}
    \wt{\h}_2 = \left(\sum_{i=1}^m\wt{\M}_i\right)^{-1}\sum_{i=1}^m \wt{\M}_i\wt{\h}_{2,i},
\end{align}

\noindent individual geometries can dominate the total system geometry, $\wt{\h}_2$, under certain conditions and play a rather minor role under other conditions. The design of $\wt{\M}_i$ is rather open ended, but can easily be based on intuition of when specific geometries should have priority. For instance, geometries that are constructed on one-dimensional distance task spaces, $x \in \mathbb{R}$, for obstacle avoidance can use a barrier-type potential where $\psi(x) \to \infty$ as $x \to 0$ and $\psi(x) \to 0$ as $x \to \infty$. The energy, $\Lag_{e,i}$, can therefore be designed as $\Lag_{e,i} = \frac{1}{2} G(x) \dot{x}^2$, where $
M_i(x) = G_i(x)$ since $G_i(x)$ does not have velocity dependence. The metric $G_i(x)$ can be chosen as $G_i(x) = \psi(x)$ which means that priority for the associated geometry $h_{2,i}$ would increase as distance to the object decreased. Moreover, priority would vanish as distance to the object decreased. This prioritization intuitively makes sense because obstacle avoidance behavior should dominate the system when close to the object.

\subsection{Acceleration-based Potential Design}
\label{subsec:acceleration_potential}

Although the theory provides that any potential function can be optimized over an arbitrary geometric fabric, the shape of the potential function can lead to more desirable system behavior and timely convergence. Specifically, an acceleration-based potential design is easy to use, tune, and starts with the desire to accelerate along the negative gradient of a baseline potential $\psi_1(\q)$,

\begin{align}
    \qdd = -\partial_\q \psi_1(\q).
\end{align}

\noindent Next, this acceleration profile is prioritized with the energy tensor, $\M_\psi$, from the Finsler energy, $\Lag_{e,\psi} = \q^T \G(\q) \q$ (see Appendix D.4 in \cite{cheng2018rmpflowarxiv} for a related discussion on metric compatibility for acceleration-based (motion policy) design) resulting in 

\begin{align}
   \M_\psi \qdd = -\M_\psi \partial_\q \psi_1(\q).
\end{align}

\noindent Importantly, $\Lag_{e,\psi}$ should be added to a system's energy such that the system mass includes $\M_\psi$, thereby, weighing this policy just like other components (geometries) within the system. This prioritization requires that there exists a total potential, $\psi(\q)$, with an associated gradient,

\begin{align}
    \partial_\q \psi(\q) = \M_\psi \partial_\q \psi_1(\q).
\end{align}

 \noindent For this form of $\partial_\q \psi(\q)$ to originate from a valid, scalar potential function, $\psi_1(\q)$ and $\M_e$ must be chosen such that $\partial_{\q \q}^2 \psi(\q)$ is symmetric. This property is facilitated by designing $\M_{\psi} = w(\|x\|^2)I$, where $w(\cdot) \in \mathbb{R}^+$ is a scalar function that makes $\M_{\psi}$ radially symmetric. Similarly, let $\psi_1(x) = l(\|x\|^2)$ be a radially symmetric potential function. Symmetry of $\partial_{\q \q}^2 \psi(\q)$ can then be shown as

\begin{align}
    \partial_{\q \q}^2 \psi(\q)
    &= \partial_\q \left[ (w(\|\q\|^2)I) (2l' (\|\q\|^2) \q) \right]\\
    &= \partial_\q \left[ r (\|\q\|^2) \q \right]\\
    &= r(\|\q\|^2)I + 2r' (\|\q\|^2) \q \q^T
\end{align}

\noindent where $r=2w(\|\q\|^2) l'(\|\q\|^2)$. The final expression is a sum of symmetric terms and therefore, $\partial_{\q \q}^2 \psi(\q)$ is symmetric.

\subsection{Damping}
\label{subsec:damping}
Damping can be introduced to the system as follows. Under regulating a system's execution energy as discussed in Section \ref{sec:speed_control}, damping must satisfy  $\beta \geq \max\{0, \alpha_\mathrm{ex} - \alpha_{\Lag_e}\}$. This inequality is enforced along with timely convergence with damping designed as 

\begin{align}
\beta = s_\beta(\x) B + \underline{B} + \max\{0, \alpha_\mathrm{ex} - \alpha_{\Lag_e}\}
\end{align}

\noindent where $B$, $\underline{B} \in \mathbb{R}^+$ are a damping constant and minimum damping constant, respectively, and $s_\beta (\x)$ is a switching function. $\underline{B}$ ensures that damping always exists on the system, thereby guaranteeing progression towards convergence. However, $\underline{B}$ should be small such that energy is not significantly siphoned from the system, particularly when the optimization potential is not near its minimum. $B$ is additional damping that gets engaged when $s_\beta (\x)$ is high, ensuring timely convergence. $s_\beta (\x)$ is low when the optimizing potential is far from its minimum and the switch is high when closer. Importantly, the region for which the switching function is high should be aligned with the region for which the potential priority, $\M_\psi$, is high. Otherwise, significant damping can occur prior to heightened priority of the optimization potential leading to slow convergence to the target. Note that when not regulating a execution energy that differs from the system energy, $\alpha_\mathrm{ex} = \alpha_{\Lag_e}$. Therefore, the above damping equation degenerates to $\beta = s_\beta(\x) B + \underline{B}$.

\section{Experimental results} \label{sec:Experiments}
The following section provides empirical evidence confirming fundamental properties of geometries, their energization, and optimization over their fabrics. Moreover, canonical geometric behaviors are sequenced not only to illustrate useful designs and their effect during optimization but also to demonstrate the ease of creating increasingly complex system behavior.

We explore two types of systems. The first system is a simple 2D point mass system which we use as a controlled setting for validating the theoretical properties derived above. This series of experiments confirms many of the theoretical properties outlined in this paper, including path consistency of geometries (Section~\ref{sec:PathConsistencyExperiment}), that energization commutes with pullback (Section~\ref{sec:EnergizationCommutesExperiment}), and that we can build systems up layer-by-layer and optimize successfully at each step even with random forcing geometries (Section~\ref{sec:ComposabilityAndOptimizationExperiment}).

The second system is a 3D planar manipulator arm. This system builds off the 2D point mass setting but introduces kinematic redundancy and a separate end-effector space. We show with these planar arm experiments that the theory can be applied to practical systems to design manipulation systems including nuanced end-effector behavior not traditionally addressed by local reactive systems. These experiments include continuous point-to-point reaching with redundancy resolution (Section~\ref{sec:PointToPointReachingExperiment}) and end-effector behavior shaping (Section~\ref{sec:EndEffectorBehaviorExperiment}).

\subsection{Path Consistency of Geometries} \label{sec:PathConsistencyExperiment}
To illustrate that geometries under the generator equation, $\xdd + \h_2(\q, \qd) = 0$, produce consistent paths given initial conditions, $(\q_0, \alpha \hat{\qd}_0)$, a set of obstacle avoiding geometries are designed and integrated forward. For each designed geometry, two different values of $\alpha$ are used to show that the yielded paths are exactly the same. The results also show the flexibility in defining geometries which can be leveraged to enable desired behavior.

A geometry $\h_2(\q, \qd)$ that naturally produces particle paths that avoid obstacles can be defined in coordinates, $\q \in \mathbb{R}^{2}$, as 

\begin{align} \label{eqn: phi2_case1}
    \h_2(\q,\qd) = \lambda \| \qd \| ^2 \: \partial_\q \psi(\phi(\q))
\end{align}

\noindent where $\phi(\q)$ is a differentiable map that captures the distance to a circular object and $\lambda \in \mathbb{R}^+$ is a scaling gain. More specifically, $\phi(\q) = \frac{\|\q - \q_o\| - r}{r}$, where $\q_o$ and $r$ are the circle's center and radius, respectively. Furthermore, $\psi(\phi(\q)) \in \mathbb{R}^+$ is a barrier potential function, $\psi(\phi(\q)) = \frac{k}{\phi(\q)^2}$, where $k \in \mathbb{R}^+$ is a scaling gain. Altogether, $\partial_\q \psi(\phi(\q))$ produces an increasing repulsive force as distance to the object decreases, and $\| \qd \| ^2$ makes $\h_2(\q, \qd)$ homogeneous of degree 2 in $\qd$. For this experiment, $\lambda = 0.7$, $k = 0.5$ for two scenarios: 1) $\alpha = 1.5$, and 2) $\alpha = 0.75$. The behavior of this geometry can be seen in Figure \ref{fig:phi2_case1}a. Noticeably, the paths generated are completely overlapping which confirms path consistency.

Another example of a generator that can produce obstacle avoiding paths is one derived from an energy CHOMP-like Lagrangian \cite{ratliff2009chomp}, $\Lag_e = \frac{1}{2} \psi(\phi(\q)) \|\qd\|^2$ where $\psi(\phi(\q))$ is the same as before. Passing this Lagrangian through the Euler-Lagrange equation produces a corresponding energy tensor and force, $\M_e$ and $\f_e$. These terms can be used to construct an obstacle avoiding geometry as

\begin{align}
    \h_2(\q,\qd) = -\M_e^{-1} \f_e.
\end{align}

\noindent For this experiment, the two scenarios are with $\alpha = 0.75$ and $\alpha = 0.375$. The behavior of this geometry can be seen in Fig. \ref{fig:phi2_case1}b. Again, path consistency among the obstacle avoiding paths is observed.

A final obstacle-avoiding geometry is constructed by leveraging a Finsler energy. Given a Finsler energy, $\Lag_e = \frac{1}{2 \phi(\q)^2}  (\J(\q) \qd)^2$, a geometry can be constructed as

\begin{align}
    \h_2(\q,\qd) = \lambda \Lag_e \: \partial_\q \psi(\phi(\q))
\end{align}

\noindent where $\phi(q)$ is a differentiable map. This geometry was simulated with $\lambda = 0.7$ for $\alpha = 1.5$ and $\alpha = 0.75$. The paths generated can be seen in Figure \ref{fig:phi2_case1}c. Again, the paths are consistent and this geometry produces an obstacle avoiding behavior. Overall, these three examples illustrate not only path consistency but also the flexibility by which geometries can be designed.

\begin{figure}
  \vspace{2mm}
  \centering
  \includegraphics[width=1.\linewidth]{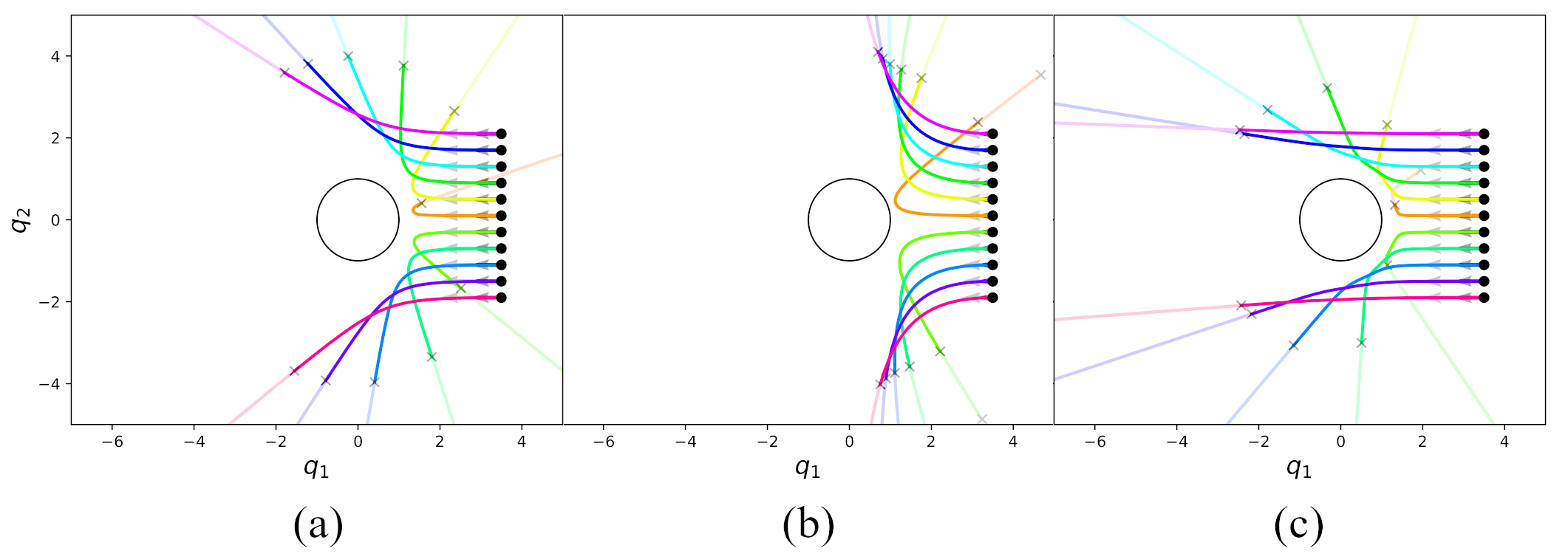}
  \vspace{-4mm}
  \caption{Paths generated from particle motion initiated at different speeds and three different generators. Faded paths indicate higher $\alpha$ and opaque paths indicate lower $\alpha$. Experiment is described in Section \ref{sec:PathConsistencyExperiment}.}
  \label{fig:phi2_case1}
  \vspace{-4mm}
\end{figure}

\subsection{Energization commutes with pullback} \label{sec:EnergizationCommutesExperiment}
Simulations are conducted to empirically demonstrate the theoretical finding that energization cummutes with pullback for full rank systems. Again, the significance of this finding is that leaf generators can be energized in their leaf coordinates and pulled back or pulled back and energized in root coordinates with an equivalent pulled back energy.

A generator is designed in polar coordinates $\x = [r, \theta ]^T$ for $\ddot{\x} + \h_2(\x, \xd) = 0$ as
\begin{align}
\h_2(\x, \xd) = \| \xd \| ^2 \partial_\x \psi(\x)
\end{align}

\noindent where $\psi(\x) = -\frac{1}{2} \| \x \| ^2$. Effectively, this generator tries to maximize the value of $\| \x \| ^2$ so the trajectories will move such that $r \to \infty$ while $\theta \to \pm \frac{\pi}{2}$. This geometry is separately energized in both the leaf and the root with $\Lag_e = \frac{1}{2} \xd^T \xd$ and produces the exact same behavior in Figure \ref{fig:energization_commutes_2D}. This result empirically confirms that energization commutes with pullback for full rank systems.

\begin{figure}
  \vspace{2mm}
  \centering
  \includegraphics[width=0.8\linewidth]{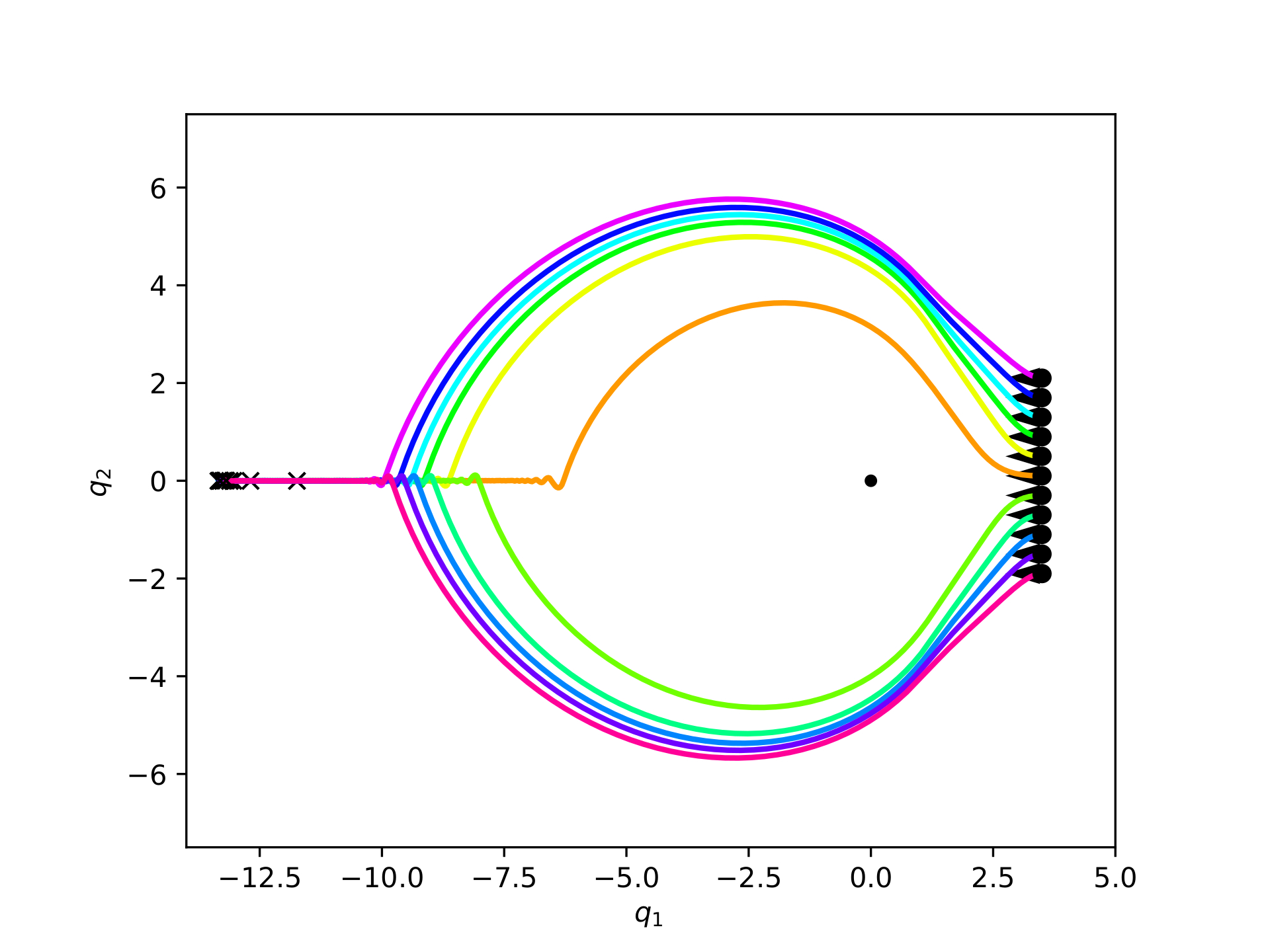}
  \vspace{-4mm}
  \caption{Paths created by a generator and energy built in polar coordinates and pulled back into root coordinates. This result is exactly the same for pulling back the generator first and then energizing in the root. Experiment is described in Section \ref{sec:EnergizationCommutesExperiment}. }
  \label{fig:energization_commutes_2D}
  \vspace{-4mm}
\end{figure}

\subsection{System Compositionality and Optimization} \label{sec:ComposabilityAndOptimizationExperiment}
As previously discussed, geometries define underlying behavior and one can conceivably arrive at increasingly richer behavior by composing multiple, individual geometries together as in Proposition \ref{prop:weighted_geometries}. Compositionality, and in particular, \textit{sequential} compositionality, of geometries is a powerful paradigm for building system behavior. For instance, desirable system behavior may include respecting coordinate boundaries, obstacle avoidance, motion bias towards a target location, and ultimately, reaching that target location. Most of these behaviors can be captured as geometries. When energized, the geometries in combination with an attraction potential and damping will achieve the desired system behavior. As subsequently demonstrated, system behavior will be incrementally advanced through the sequential addition of geometrically defined behavior. Numerical results will reveal overall behavior of the underlying geometry and the departure from these natural paths with optimization. Moreover, results will demonstrate increasingly complex system behavior through the addition of geometries and provide empirical evidence that optimization can occur over arbitrary geometries.

\subsubsection{Optimization Potential, Speed Control, and Damping Settings} 
Specific optimization and numerical integration settings that persist throughout the remaining experiments in this section include the following. A set of 14 particles were given initial conditions of $\q_0 = [2, 3]^T$ and $\|\qd_0\| = 1.5$ pointing radially outward and rotationally distributed from $0$ to $2\pi$. Forward integration of system behavior is conducted with the Runge-Kutta fourth-order routine with a maximum step size of 0.01 s for 16 s of simulation time. The optimization potential is an acceleration-based design as discussed in Section \ref{subsec:acceleration_potential} and is designed for the task space

\begin{align}
\label{eq:attractor_task_map}
\x = \phi(\q) = \q - \q_d,
\end{align}

\noindent where $\q_d$ is the desired position, in this case, $\q_d = [-2.5, -3.75]^T$. The acceleration-based potential gradient, $\partial_\q \psi(\phi(\q)) = M_\psi(\phi(\q)) \partial_\q \psi_1(\phi(\q))$, is designed with

\begin{align}
\label{eq:attractor_potential}
\psi_1(\phi(\q)) = k \left( \|\phi(\q)\| + \frac{1}{\alpha_\psi}\log(1 + e^{-2\alpha_\psi \|\phi(\q)\|} \right)
\end{align}

\noindent and

\begin{align}
\label{eq:potential_priority}
M_\psi(\phi(\q)) = (\widebar{m} - \underline{m}) e^{-(\alpha_m \|\phi(\q)\|) ^ 2} I + \underline{m} I.
\end{align}

\noindent Furthermore, $k \in \mathbb{R}^+$ is a parameter that controls the overall gradient strength, $\alpha_\psi \in \mathbb{R}^+$ is a parameter that controls the transition rate of $\partial_\q \psi_1(\phi(\q))$ from being nearly a constant to 0. $\widebar{m}$, $\underline{m} \in \mathbb{R}^+$ are the upper and lower isotropic masses, respectively. $\alpha_m \in \mathbb{R}^+$ controls the width of the radial basis function. For the following experiments, $k = 5$, $\alpha_\psi = 10$, $\widebar{m} = 2$, $\underline{m}=0.3$, and $\alpha_m = 0.75$. Importantly, the energy $\Lag_{e,\psi} = \q^T \M_\psi(\phi(\q)) \q $ is added to the system energy, $\Lag_e$. Altogether, this design allows for $\partial_\q \psi(\x)$ to be small far away from the object while smoothly escalating with heightening priority as position towards the target location decreases.

For these experiments, damping is invoked as discussed in Section \ref{subsec:damping}. Here, $B=6.5$, $\underline{B} = 0.01$, and the switching function is designed as

\begin{align}
s_\beta (\x) = \frac{1}{2} (\tanh(-\alpha_\beta (\|\x\| - r) ) + 1) 
\end{align}

\noindent where $\alpha_\beta \in \mathbb{R}^+$ defines the quickness of the switch, and $r \in \mathbb{R}^+$ defines the radius at which the switch is half-way engaged. For these experiments, $\alpha_\beta = 0.5$ and $r = 1.5$.

Finally, the switching function $\eta$ from Section \ref{sec:speed_control} that allows to increase system energy for execution energy control is designed as 

\begin{align}
\eta = \frac{1}{2} (\tanh(-\alpha_\eta (\Lag_{ex} - \Lag_{ex,d}) - \alpha_{shift} ) + 1) 
\end{align}

\noindent where $\alpha_\eta$ $\alpha_{shift} \in \mathbb{R}^+$ are parameters that control the quickness of the switch and the horizontal location of the switch, respectively. Moreover, $\Lag_{ex,d}$ is the desired execution energy level. In these experiments, controlling the Euclidean speed, $\|\qd\|$, is desired. Therefore, with a desired Euclidean speed, $v_d$, the execution energy is designed as $\Lag_{ex,d} = \frac{1}{v_d} \qd^T \qd$. Finally, $v_d = 2$ in the following experiments, which is higher than the initialized speed, $\|\qd_0\| = 1.5$.

\subsubsection{Baseline Geometry and Goal-Reaching Optimization.} \label{sec:BaselineOptimizationExperiment}
An advisable starting point for any system built from geometries is to create a baseline geometry with a non-zero energy tensor (priority) everywhere. Effectively, this ensures that the system always has mass even if other geometries are designed such that their priorities (mass) vanish to zero under desirable conditions. 

The baseline geometry is constructed from $\h_{2,b} = 0$ and prioritized with an energy tensor, $M_b$, from the Finsler energy, $\Lag_{e,b} = \frac{\lambda_b}{2} \q^T \q$, where $\lambda \in \mathbb{R}^+$ is a parameter that defines the amount of baseline inertia. For these experiments, $\lambda_b = 1$. Therefore, the weighted system geometry and system energy is

\begin{align}
\M_e \qdd = \M_b \qdd = \M_b \h_{2,b}
\end{align}

\noindent and

\begin{align}
\Lag_{e} =  \Lag_{e,b},
\end{align}

\noindent respectively. System behavior can be seen in the first row of Figure \ref{fig:geometries_optimization}. As shown, the underlying geometry results in straight-line motion due to the nature of the Euclidean geometry. During optimization, the motion of particles are redirected from their natural paths resulting in a symmetric plume that terminates at the desired location. Along these paths, particle motion quickly achieves and maintains the desired speed level before the escalation in damping removes energy from the system near the desired location, resulting in convergence to the desired target.


\subsubsection{Limit Avoidance and Goal-Reaching Optimization} \label{sec:LimitOptimizationExperiment}
A useful task space is often one-dimensional and captures the distance between a current position and some boundary. This task space is generally denoted as $x \in \mathbb{R}$ and subsequently used to describe different one-dimensional distance task spaces. For instance, the distance to an upper and lower coordinate limit can be described as

\begin{equation}
\label{eq:upper_limit_map}
x = \phi(q_j) = \bar{q}_j - q_j
\end{equation}

\noindent and 

\begin{equation}
\label{eq:lower_limit_map}
x = \phi(q_j) = q_j - \underline{q}_j,
\end{equation}

\noindent respectively, where $\bar{q}_j$ and $\underline{q}_j$ are the upper and lower limits of the $j^{th}$ coordinate of $\q$. Overall, $2n$ task maps, $x$, are needed for $n$-dimensional coordinates, $\q$. Repulsive geometries can be constructed in this one-dimensional space as

\begin{align}
\label{eq:limit_geometry}
h_{2,i}(x, \dot{x}) = \lambda \dot{x}^2 \partial_{x} \psi(x)
\end{align}

\noindent where the potential $\psi(x)$ is designed such that $\psi(x) \to \infty$ as $x \to 0$ and $\psi(x) \to 0$ as $x \to \infty$. This potential function is specifically designed here as,

\begin{align}
\psi(x) = \frac{\alpha_1}{x^2} + \alpha_2 \log \left(e^{-\alpha_3(x - \alpha_4)} + 1 \right)
\end{align}

\newgeometry{top=0.cm, bottom=0.cm, left=0.5cm, right=0.5cm}
\begin{figure}
  \centering
  \includegraphics[height=1.25\linewidth]{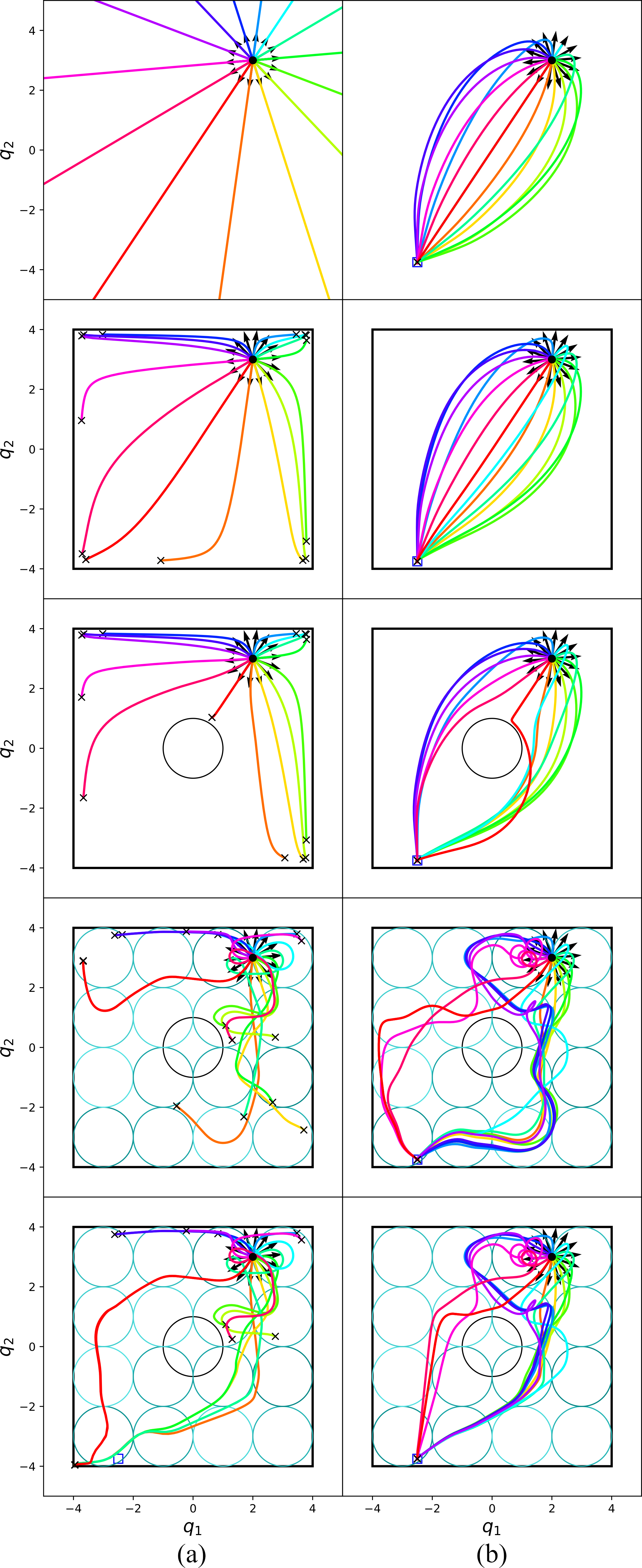}
  \vspace{-4mm}
  \caption{\small The left column (a) shows the nominal behavior of the underlying geometric fabric and the right column (b) shows the behavior optimization over it (see Sections \ref{sec:BaselineOptimizationExperiment} - \ref{sec:AttractorOptimizationExperiment} for details on each row). As the fabric complexity grows row-by-row the underlying nominal behavior becomes more sophisticated (left), but optimization (right) remains successful and consistent.}
  \label{fig:geometries_optimization}
\end{figure}
\restoregeometry

Moreover, $\alpha_1$, $\alpha_2$, $\alpha_3$, $\alpha_4 \in \mathbb{R}^+$ where, $\alpha_1$ and $\alpha_2$ are gains that control the significance and mutual balance of the first and second terms. $\alpha_3$ controls the sharpness of the smooth rectified linear unit (SmoothReLU) while $\alpha_4$ offsets the SmoothReLU. For the following experiments, $\alpha_1 = 0.4$, $\alpha_2 = 0.2$, $\alpha_3 = 20$, and $\alpha_4 = 5$. This potential function offers two key features: 1) a nearly constant gradient force to influence motion farther away from coordinate limits, and 2) an unlimited gradient force as distance to a coordinate limit shrinks, preventing motion from crossing the limit.  

To establish appropriate priorities for these geometries, individual energy tensors, $M_i$, are created from energies, $\Lag_{e_i}$, where $i=1,...,2n$. $\Lag_{e_i}$ is designed as

\begin{align}
\Lag_{e_i} = \frac{1}{2} s(\hat{\dot{x}}) \: G(x) \dot{x}^2
\end{align}

\noindent where the metric is designed as $G(x) = \frac{\lambda}{x}$ and a switching function $s(\hat{\dot{x}})$ designed as

\begin{equation}
s(\hat{\dot{x}}) = 
    \begin{cases} 
      0 & \hat{\dot{\phi}}(q_j) \geq 0 \\
      1 & \hat{\dot{\phi}}(q_j) < 0 \\
   \end{cases}
\end{equation}

\noindent Effectively, this removes the effect of the coordinate limit geometry once motion is orthogonal or away from the limit. Finally, to use this geometry within the existing two-dimensional system, all coordinate-limiting geometries must be pulled back to root coordinates, $\q$. The weighted geometries at the root take the form,

\begin{align}
\J_i^T M_i \J_i \qdd = -\J_i^T M_i \dot{\J_i} \qd - \J_i^T M_i h_{2,i}.
\end{align}

To illustrate the effect of adding this geometry to the system, limits are placed at $\pm 4$ for both $q_1$ and $q_2$, a two-dimensional Euclidean space. Furthermor, $\lambda = 0.25$. Now the total system weighted geometries and system energy is described by

\begin{align}
M_e \qdd = \left( \M_b + \sum_i \J_i^T M_i \J_i \right) \qdd = \M_b \h_{2,b} + \sum_i -\J_i^T M_i \dot{\J_i} \qd - \J_i^T M_i h_{2,i}
\end{align}

\noindent and

\begin{align}
\Lag_{e} =  \Lag_{e,b} + \sum_i \Lag_{e,i}
\end{align}

The time evolution of the system geometry can be see in the second row of Figure \ref{fig:geometries_optimization}a. If particles are far away from their limits, then straight-line motion is observed due to the underlying Euclidean geometry. As particles approach limit boundaries, repulsion impedes motion towards the limits, and motion is ultimately redirected to limit corners before any one limit is reached. Once energized, optimization was conducted with this geometry as can be seen in Figure \ref{fig:geometries_optimization}b. Importantly, behavior is not significantly influenced when motion is away from limits or distance to limits are larger. Subtle behavioral changes can be seen by the slight bulging of the particle plume and the approach to the target is slightly more aligned with the vertical direction.

\subsubsection{Obstacle Avoidance and Goal-Reaching Optimization} \label{sec:ObstacleOptimizationExperiment}
Another canonical behavior is that of obstacle avoidance. This behavior is akin to limit avoidance except the task map is different to accommodate an object instead of a coordinate limit. For instance, the task map for a circular object can be defined as 

\begin{equation}
x = \phi(\q) = \frac{\|\q - \q_o\|}{r} - 1
\end{equation}

\noindent where $\q_o$ is the origin of the circle and $r$ is its radius. A geometry for obstacle avoidance can be constructed with the exact same form as the limit avoidance geometry,

\begin{align}
h_{2,o}(x, \dot{x}) = \lambda \dot{x}^2 \partial_x \psi(x)
\end{align}

\noindent where $\psi(x)$ is the same potential function from limit avoidance. To define priority for this geometry, the same energy structure, switching function, and metric from limit avoidance is leveraged again as well,

\begin{align}
\Lag_{e,o} = \frac{1}{2} s(\hat{\dot{x}}) \: G(x) \dot{x}^2.
\end{align}

\noindent Now, the total weighted system geometry and energy are

\begin{align}
M_e \qdd
&= \left( \M_b + \sum_i \J_i^T M_i \J_i + \J_o^T M_o \J_o \right) \qdd \\
&= \M_b \h_{2,b} + \left( \sum_i -\J_i^T M_i \dot{\J_i} \qd - \J_i^T M_i h_{2,i} \right) -\J_o^T M_o \dot{\J_o} \qd - \J_o^T M_o h_{2,o}
\end{align}

\noindent and

\begin{align}
\Lag_{e} =  \Lag_{e,b} + \sum_i \Lag_{e,i} + \Lag_{e,o}.
\end{align}

This combination of geometries and energies produces the behavior seen in the third row of Figure \ref{fig:geometries_optimization}a. Importantly, trajectories that are far from the object do not experience its effect and those closer to the object (magenta and yellow curves) are redirected. The particle belonging to the red curve was directed straight to the object's center and therefore it increasingly slows as it approaches the boundary of the object. The red particle path does subtly redirect around the object as a product of the faint interactions of the limit avoiding geometries.

Again, optimization is conducted over this increasingly more complex geometric fabric and results are shown in the third row of Figure \ref{fig:geometries_optimization}b. As seen, paths naturally bend around the obstacle and convergence to the target occurs. Importantly, despite the red and cyan curves approaching orthogonally (or nearly so) to the object, they successfully circumnavigate it and optimization succeeds. It is also possible to incorporate different task spaces and geometries to encourage early object avoidance even if the approach direction is orthogonal to the object.

\subsubsection{Randomized Vortices and Goal-Reaching Optimization} \label{sec:VortexOptimizationExperiment}
Randomized vortex geometries are added to the system as an abstraction of invoking significant and desired behavioral changes to a system. Furthermore, the vortices create a rather chaotic fabric for optimization, yet, optimization still succeeds as guaranteed by the theory. A vortex geometry is created in root coordinates as

\begin{align}
\h_{2,j}(\q, \dot{\q}) = f_j \| \qd \| ^2 \mathbf{R}_j \hat{\qd}
\end{align}

\noindent where $\mathbf{R}_j \in \mathbb{R}^{2\times 2}$ is a rotation matrix randomly selected as either

\begin{align}
\mathbf{R}_j = \pm
\begin{bmatrix}
0 & -1\\
1 & 0
\end{bmatrix}
\end{align}

\noindent and $f \in \mathbb{R}^+$ is a force magnitude uniformly drawn from the interval $f \sim \mathcal{U}(2, 10)$. A radial priority for this geometry was created from the energy tensor of the following energy,

\begin{align}
L_{e,j} = m s(\q) \qd^T \qd
\end{align}

\noindent where $m \in \mathbb{R}^+$ is a mass constant and the switching function $s(\q) \in [0,1]$ is defined as

\begin{equation}
    s(\q) =
    \left\{ \begin{array}{ll}
    \frac{1}{\|\q - \q_{o,j}\|^2} (\|\q - \q_{o,j}\| - r)^2 & \|\q - \q_{o,j}\| < r \\
    0 &\text{otherwise}
    \end{array} \right.
\end{equation}

\noindent where $\q_{o,j}$ is the effect center of a vortex and $r \in \mathbb{R}^+$ is its radius. In these experiments, $r=1$ for all vortices.

The total system weighted geometries and energy is now

\begin{align}
M_e \qdd
&= \left( \M_b + \sum_i \J_i^T M_i \J_i + \J_o^T M_o \J_o + \sum_j \M_j \h_{2,j} \right) \qdd \\
&= \M_b \h_{2,b} + \sum_i -\J_i^T M_i \dot{\J_i} \qd - \J_i^T M_i h_{2,i} -\J_o^T M_o \dot{\J_o} \qd - \J_o^T M_o h_{2,o} + \\
& \sum_j \M_j \h_{2,j}
\end{align}

\noindent and

\begin{align}
\Lag_{e} =  \Lag_{e,b} + \sum_i \Lag_{e,i} + \Lag_{e,o} + \sum_j \Lag_{e,j}.
\end{align}

As seen in the fourth row Figure \ref{fig:geometries_optimization}a, the addition of vortex geometries create turbulence to the object motion, redirecting particles in different directions as they enter a vortex zone. Despite this volatility, optimization progresses unhindered with all particles making the target location as shown in Figure \ref{fig:geometries_optimization}b. Moreover, the paths taken during optimization are also significantly different from the previous optimization results, showcasing the innate ability of geometries to influence system behavior.

\subsubsection{Attractor and Goal-Reaching Optimization} \label{sec:AttractorOptimizationExperiment}
Finally, geometries can also be constructed to facilitate optimization. The previous addition of vortex geometries produced exploratory behavior beyond more straightforward motion to the goal location. However, straight motion to the goal could be desired system behavior. This trait can easily be obtained with the addition of another geometry. Using the same task map of the optimizing potential (Equation \ref{eq:attractor_task_map}) and the potential from Equation \ref{eq:attractor_potential}, an attracting geometry can be constructed as 

\begin{align}
\label{eq:att_geometry}
\h_{2,a} = \lambda_a \| \qd \| ^2 \partial_\q \psi(\phi(\q))
\end{align}

\noindent where $\lambda_a \in \mathbb{R}^+$ is a control gain. This geometry is prioritized with the energy tensor from the energy,

\begin{align}
L_{e,a} = \xd^T \mathbf{G}_a \xd
\end{align}

\noindent where the metric $\mathbf{G}_a$ is designed as

\begin{align}
\mathbf{G}_a = s(\x) (\widebar{m} - \underline{m}) I + \underline{m} I
\end{align}

\noindent where $\widebar{m}$ and $\underline{m}$ are the upper and lower isotropic mass values and the switching function is designed as

\begin{align}
s(\x) = \frac{1}{2} (\tanh(-\alpha_s(\|\x\| - r)) + 1)
\end{align}

\noindent where $\alpha_s \in \mathbb{R}^+$ controls the rate of the switch and $r$ is the effect radius. Overall, this metric allows the priority to transition from a small to a large isotropic priority as the position nears the target. For these experiments, $\widebar{m} = 1$, $\underline{m} = 0$, $\alpha_s = 25$, $r = 5$, $\lambda_a = 7$. The parameter values for the attracting potential from Equation \ref{eq:attractor_potential} are $k=1$ and $\alpha_\psi=1$. Effectively, this geometry does not engage until the Euclidean distance between the current position and the target is within 5 units.

Altogether, the system now possesses the total weighted geometry,

\begin{align}
M_e \qdd
&= \left( \M_b + \sum_i \J_i^T M_i \J_i + \J_o^T M_o \J_o + \sum_j \M_j \h_{2,j} + \M_a \right) \qdd \\
&= \M_b \h_{2,b} + \sum_i -\J_i^T M_i \dot{\J_i} \qd - \J_i^T M_i h_{2,i} -\J_o^T M_o \dot{\J_o} \qd - \J_o^T M_o h_{2,o} + \\
& \sum_j \M_j \h_{2,j} + \M_a \h_{2,a}
\end{align}

\noindent and system energy,

\begin{align}
\Lag_e = \Lag_{e,b} + \sum_i \Lag_{e_i} + \Lag_{e,o} + \sum_j \Lag_{e,j} +  \Lag_{e,a}.
\end{align}

\noindent where the individual energies can be pulled back into root coordinates by direct substitution of the task maps and their time derivatives. The total energy Lagrangian can then be passed through the Euler-Lagrange equation to obtain system mass, $\M_e$, and force, $\f_e$. The system geometry behavior can be seen in the last row of Figure \ref{fig:geometries_optimization}a. As seen, the underlying geometry biases motion straight towards the target when within the radial threshold. The significant effect of this geometry can also be seen during optimization (last row Figure \ref{fig:geometries_optimization}b) where motion once dominated by the underlying randomized vortices are now compelled straight towards the object when particle positions are within the radial threshold.

\subsection{Planar Robot Goal-Reaching Optimization} \label{sec:PointToPointReachingExperiment}

\begin{figure}
  \centering
  \includegraphics[width=0.75\linewidth]{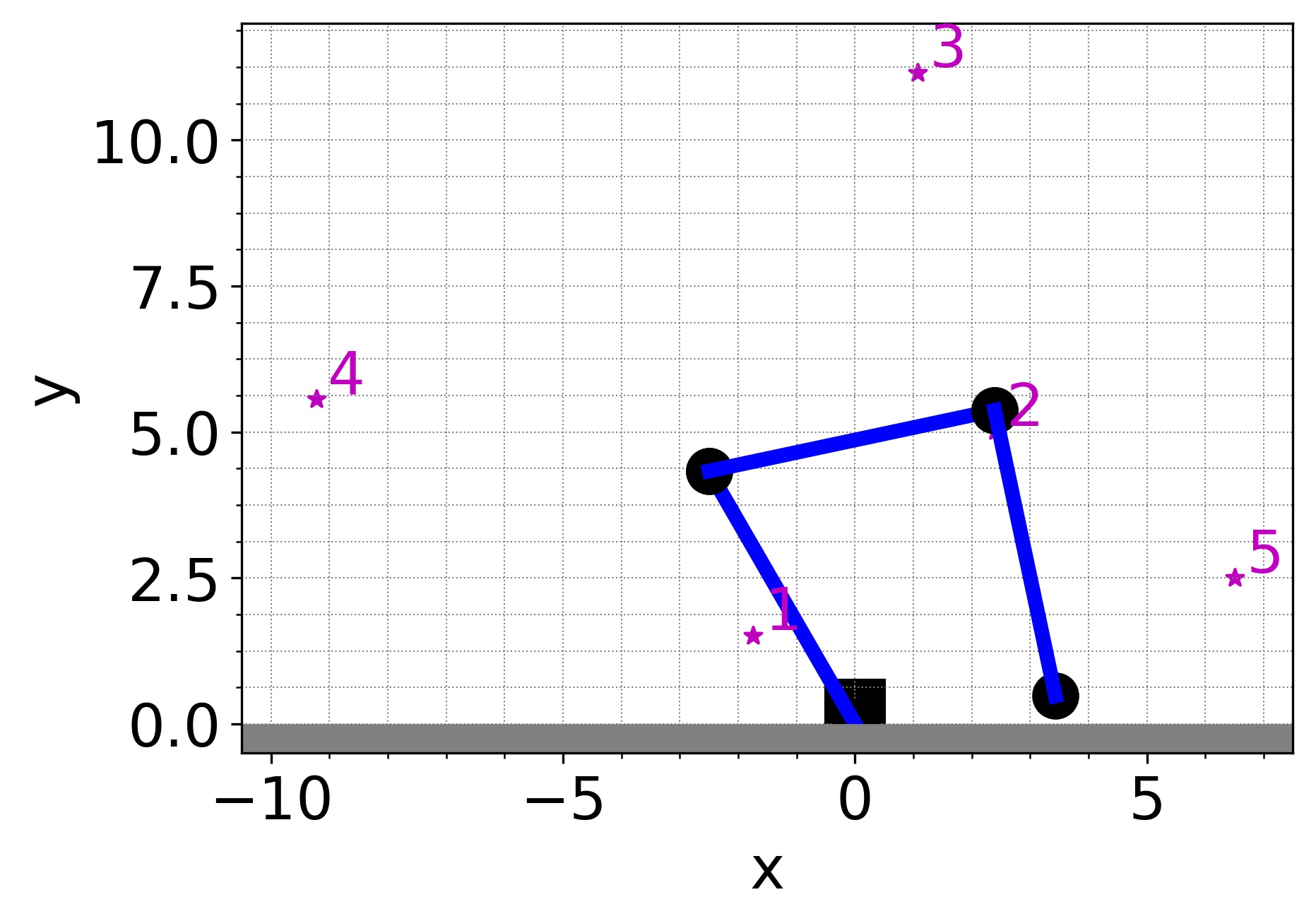}
  \caption{Planar robot goal-reaching experiment where the robot continuously moves to 5 random goals, starting from goal 1 to goal 5. This experiment is described by Section \ref{sec:PointToPointReachingExperiment}.}
  \label{fig:5goals_random_init}
\end{figure}

Here we show some experimental results on a 3-dof planar robot, as shown in Figure~\ref{fig:5goals_random_init}. The first task is simply to move the end-effector to a series of randomly picked 2-D point targets continuously. To solve this goal-reaching problem, we optimize for an attractor potential over a geometric fabric, where limit avoidance geometries, as described in Equation~\ref{eq:limit_geometry}, are used to keep each joint staying within its limit, and an attractor geometry as described in Equation~\ref{eq:att_geometry} is used to help reaching the goal. 

\newgeometry{top=0.cm, bottom=0.cm}
\begin{figure}
  \centering
  \includegraphics[height=8in]{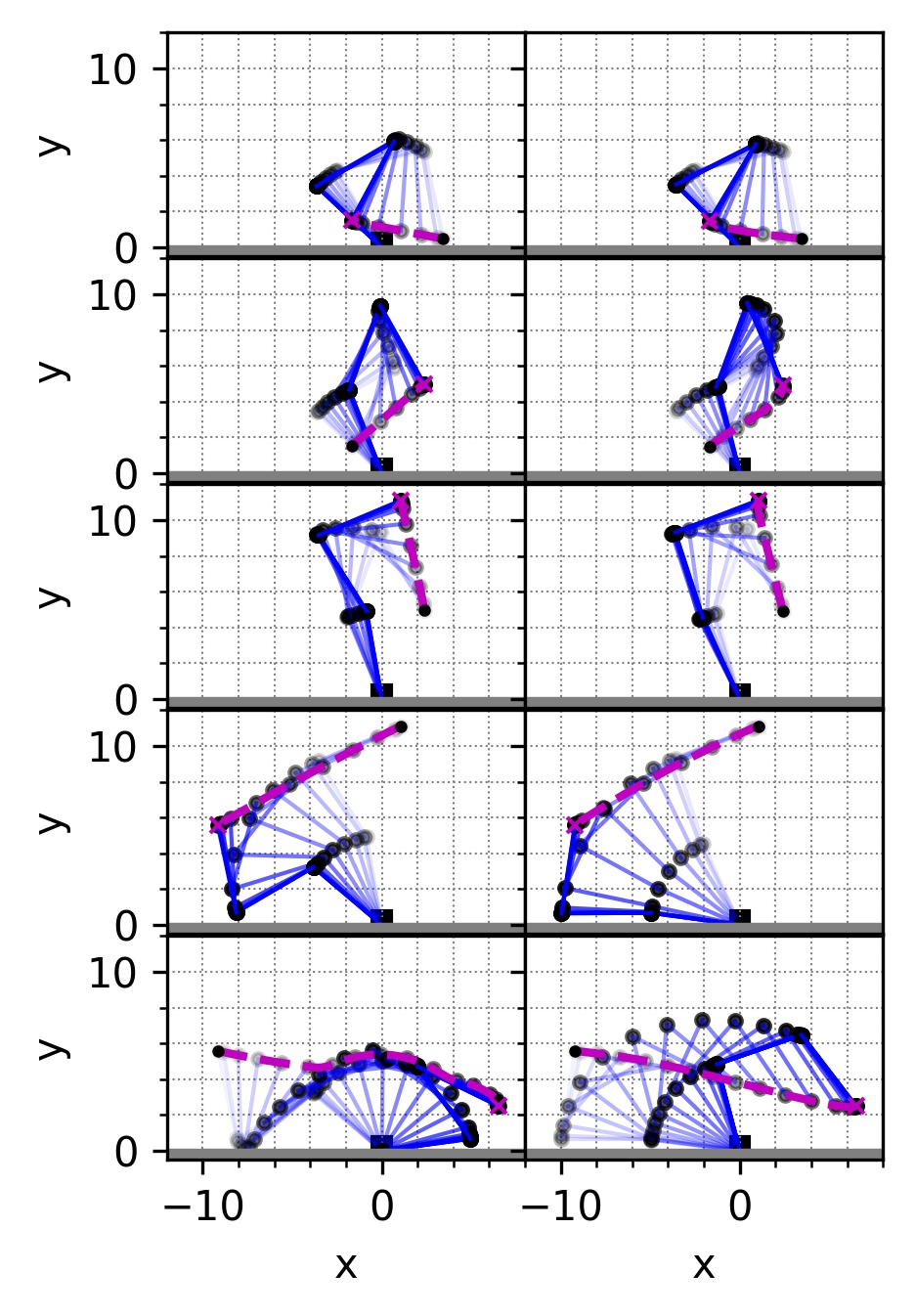}
  \caption{Continuous planar goal reaching experiment. The left and right columns shows the behavior with and without the redundancy resolution geometries, respectively. The arm's alpha transparency ranges from light at the beginning of the trajectory to dark at the end. Note in particular the last two rows: without redundancy resolution the arm bends over and ultimately collapses on itself, while with redundancy resolution it successfully retains its overall ``ready'' configuration. See Section \ref{sec:PointToPointReachingExperiment} for details.}
  \label{fig:5goals_compare}
\end{figure}
\restoregeometry

The main difference between this planar arm setup and the above 2D point mass fabric is that the arm has redundancy. That means both that we need to add the attractor to the end-effector space and that we need to resolve the redundancy with an extra geometric fabric component. To resolve the redundancy issue, we add a redundancy resolution geometry to the fabric coaxing the system toward a default arm configuration. This geometry can be expressed as follows:
\begin{align}
\label{eq:dc_geometry}
\h_{\q,\qd} = \| \qd \| ^2 (\q_0 - \q)
\end{align}
where $\q_0$ is a nominal default configuration such as the one shown in Figure~\ref{fig:5goals_random_init}, and it usually represents the robot ``at ready". This geometry is prioritized with the energy tensor from the energy,

\begin{align}
L_{e, dc} = \qd^T \mathbf{G}_{dc} \qd
\end{align}


\noindent where the metric $\mathbf{G}_{dc}$ is simply an identity matrix scaled by a constant $\lambda_{dc}$ as follows:
\begin{align}
\mathbf{G}_{dc} = \lambda_{dc} I
\end{align}
To demonstrate the effect of the redundancy resolution geometry, we run two experiments, in which the robot continuously moves its end-effector to 5 randomly picked targets, starting from 1 to 5, indicated by asterisks as shown in Figure~\ref{fig:5goals_random_init}. In the first experiment, goal-reaching optimization is performed over the fabric containing only limit avoidance geometries and an attractor geometry (no redundancy resolution). In the second experiment, we add a redundancy resolution geometry to the fabric. When moving continuously between random targets we would expect without redundancy resolution the arm will fold over into awkward configurations, while with it, the arm will run forever resolving itself back to the general default posture whenever possible. The experiment results are shown in Figure~\ref{fig:5goals_compare}. Figures from the top to the bottom show the robot moving sequentially from goals 1 to 5 (row 1 moves from an initial configuration to goal 1, row 2 moves from that resulting configuration to goal 2, etc.). The left column shows results for the experiment without redundancy resolution, and the right column shows results with redundancy resolution. In each figure, alpha transparency of the arm indicates progress along the trajectory, starting from light at the beginning and ranging to dark at the end. 

As expected, we observe that with redundancy resolution the robot consistently resolves back to the default ``ready'' configuration. The behavior is most apparent in the final two rows where without redundancy resolution (left) the arm folds over on itself when reaching from goal 4 to goal 5. With redundancy resolution (right) the arm successfully resolves back to ``ready'' en route.



\subsection{Planar Robot Behavior Shaping Optimization} \label{sec:EndEffectorBehaviorExperiment}

In this section, we show some experimental results on shaping robotic behavior with geometric fabrics. Instead of reaching the goal along the floor in a straight line motion at the end-effector task space, we want to shape the behavior of the system so that it can en route toward its target. In this case, we want the end-effector to first lift off the floor, then move to the target region horizontally parallel to the floor, and then descend towards the target from above once close enough. In a real life application, this behavior is designed for the robot to reach the goal without hitting any obstacles on the floor. We can construct this behavior using two geometries, a floor lifting geometry and a goal attraction geometry. As the name implies, the floor lifting geometry lifts the end-effector off the floor, and can be formulated as follows:
\begin{align}
\label{eq:fl_geometry}
\h_{\x,\xd} = \| \xd \| ^2 \hat{n}
\end{align}
where $\x$ and $\xd$ are expressed in the end-effector taskspace, and $\hat{n}$ is the normal vector to the floor pointing up. This geometry is prioritized with the energy tensor from the energy,

\begin{figure}
  \centering
  \includegraphics[width=0.75\linewidth]{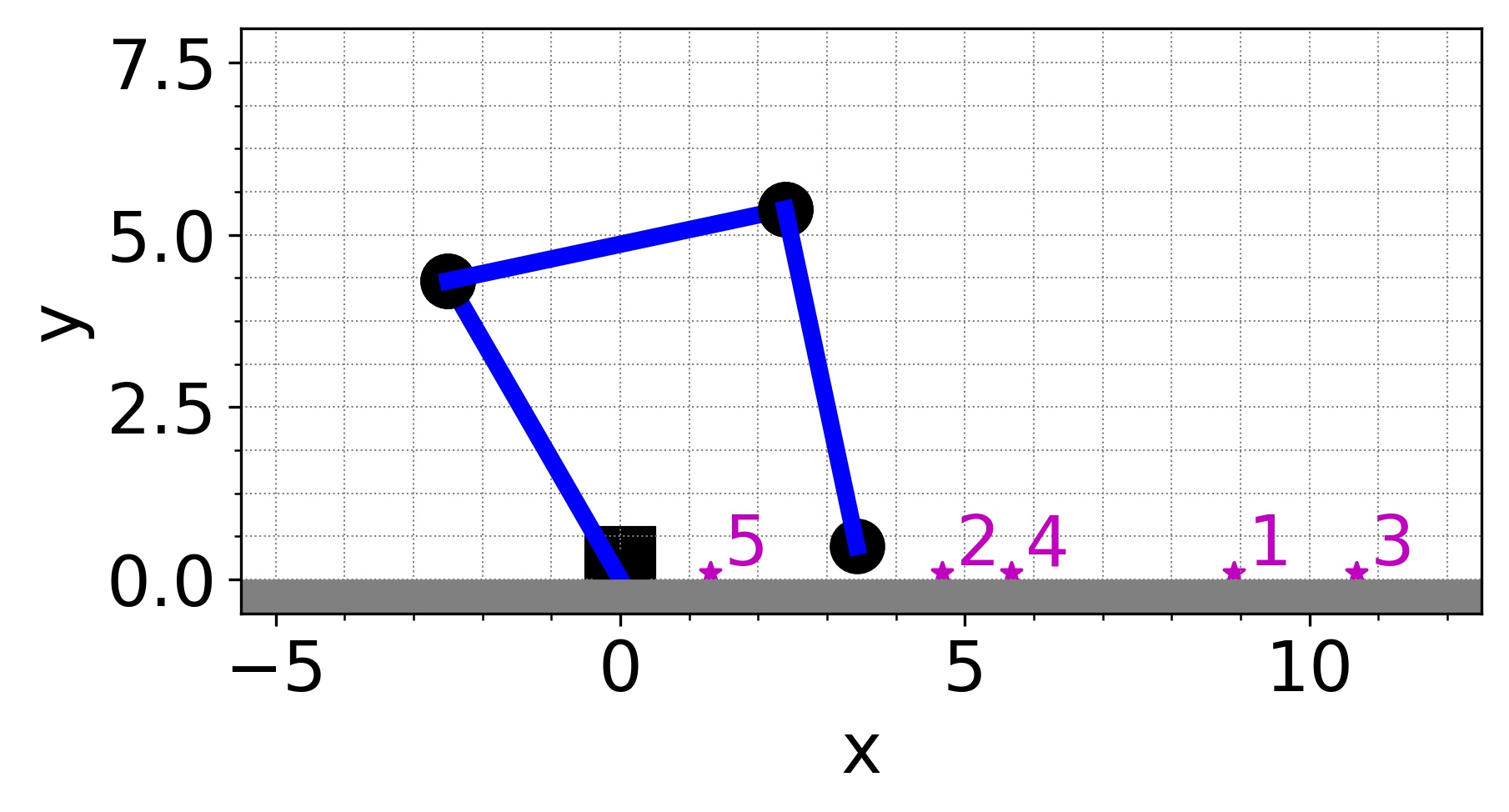}
  \caption{Planar robot behavior shaping experiment where the robot continuously moves to 5 random goals on the ground with shaped behavior, starting from 1 to 5 indicated by asterisks. This experiment is described by Section \ref{sec:EndEffectorBehaviorExperiment}.}
  \label{fig:5goals_init}
\end{figure}

\begin{align}
L_{e, fl} = \xd^T \mathbf{G}_{fl} \xd
\end{align}

\noindent where the metric $\mathbf{G}_{fl}$ is designed as follows
\begin{align}
\mathbf{G}_{fl} = \lambda_{fl} \exp{(-\frac{h}{\sigma_{fl}})} I
\end{align}
in which $\lambda_{fl}$ is constant scalar greater than zeros, and $\sigma_{fl}$ is a length scale. The goal attractor geometry is similar to the one used in the previous section, but with a slight change in the formulation,
\begin{align}
\label{eq:bs_att_geometry}
\h_{\x,\xd} = \| \xd \| ^2 (\x_g - \x)
\end{align}
where $\x$ and $\xd$ are expressed in the end-effector taskspace, and $\x_g$ is the goal position. 

\newgeometry{top=0.cm, bottom=0.cm}
\begin{figure}
  \centering
  \includegraphics[height=8in]{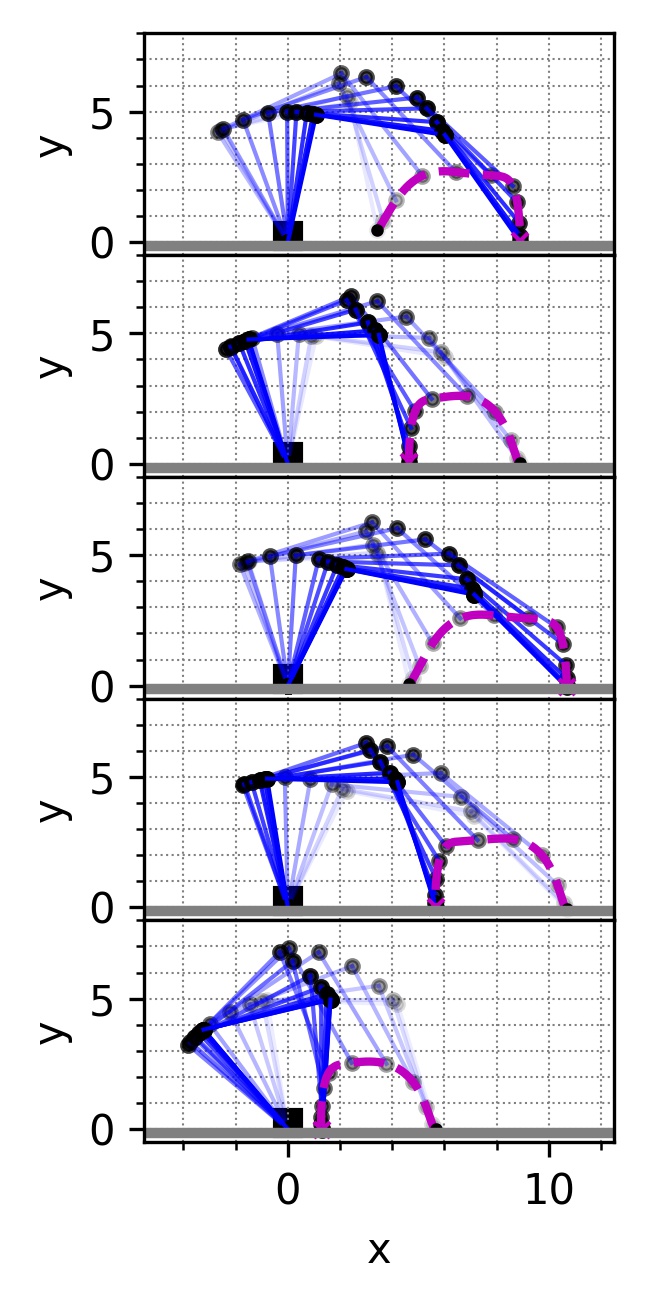}
  \caption{Planar robot behavior shaping experiment with geometric fabrics. From the top to the bottom, figures show the robot moving sequentially from goal 1 to goal 5, each time starting from the final configuration of the last episode. The arm's alpha transparency ranges from light at the beginning of the trajectory to dark at the end. As designed, geometric fabrics on the end-effector shape the system to first lift the end-effector from the floor, then proceed horizontally a fixed distance above the floor before lowering to the target from above. See Section \ref{sec:EndEffectorBehaviorExperiment} for details.}
  \label{fig:5goals_behavior_shaping}
\end{figure}
\restoregeometry

\noindent This geometry is prioritized with the energy tensor from the energy,
\begin{align}
L_{e, att} = \xd^T \mathbf{G}_{att} \xd
\end{align}
\noindent where the metric $\mathbf{G}_{att}$ is designed as follows
\begin{align}
\mathbf{G}_{att} = \lambda_{att} \exp{(-\frac{s^2}{2\sigma_{att}^2})} I
\end{align}
in which $\lambda_{att}$ is constant scale greater than zeros, $\sigma_{att}$ is a length scale, and $s$ is the horizontal distance between the end-effector and the goal.

To accomplish a goal reaching task with shaped behavior, we optimize an attractor potential over a geometric fabric which includes joint limit avoidance geometries, a default configuration geometry, a floor lifting geometry and a goal attractor geometry. We run an experiment, in which the robot continuously moves its end-effector to 5 randomly picked targets on the floor, masked as 1 to 5 indicated by asterisks as shown in Figure~\ref{fig:5goals_init}. The experimental results are shown in Figure~\ref{fig:5goals_behavior_shaping}, in which figures from the top to the bottom show that the robot moves from goal 1 to goal 5. As expected, the robot reaches all the goals with the desired behavior.


\section{Conclusions}

This work has been an illuminating exploration into the experimentally verifiable theoretical properties of structured second-order systems, and from our deepened understanding of the problem we gain a collection of concrete and powerful tools for behavior design.

Our experimental results confirm predicted theoretical properties such as geometric consistency, energization commutes with pullback of full rank systems, the ability to optimize across arbitrary (even random) fabrics. The provided framework also enables the sequential construction of behavioral systems, where designers can add more constraints, layer-by-layer, without having to re-tune earlier components. 

Our next steps are to study constrained optimization in greater generality (beyond boundary constraints) and to explore how the structure studied in this paper affects policy learning paradigms. Regarding the latter, optimization fabrics constitute nominal behaviors shared across tasks that encode commonalities among problems---the fabric itself create the underlying behavioral dynamics and task potentials (objective functions) act to push the system away from those nominal behaviors to (provably) solve specific tasks. This structure may, therefore, play the role of a well-informed policy class prior that can be trained over time across multiple related problems.

\pagebreak[4]
\appendix
\appendixpage

\section{Concrete derivations on manifolds}
\label{apx:Manifolds}

Nonlinear geometry is most commonly constructed in terms of smooth manifolds, often in an abstract, coordinate-free, form. To make the topic more accessible, we will stick to coordinate descriptions and standard vector notations from advanced calculus. Analogous to how standard classical equations of motion are expressed in generalized coordinates convenient to the problem (Cartesian coordinates, polar coordinates, robotic joint angles, etc.) and understood to represent concrete physical phenomena independent of those coordinates, we take the same model here. Our constructions of nonlinear geometry will be made exclusively in terms of coordinates to keep the expressions and notation familiar, and practitioners are free to change coordinates as needed as the system moves across the manifold.  

Formally, the equations we describe in this paper are \textit{covariant}, which means they maintain their form under changes of coordinates. See \cite{ratliff2020SpectralSemiSprays} for an in depth discussion of covariance and the use of transform trees to guarantee covariant transformation. Instead of ensuring explicitly all objects used in equations are coordinate free, we define quantities in terms of clearly coordinate free quantities such as lengths. For instance, we define geometry in terms of minimum length criteria, so as long as the length measure transforms properly so it remains consistent under changes of coordinates, the geometric equations should be independent of coordinates.

Manifolds will be defined in the traditional way (see \cite{LeeSmoothManifolds2012} for a good introduction), but for our purposes, we will consider them $d$-dimensional spaces $\mathcal{X}$ with elements identified with $\x\in\R^d$ in $d$ coordinates. Often we implicitly assume a system evolves over time $t$ in a trajectory $\x(t)$ with velocity $\xd = \frac{d\x}{dt}$, and say that coordinate velocity vector $\xd$ is an element of the tangent space $\mathcal{T}_\x\mathcal{X}$. When discussing general tangent space vectors, we often us a separate notation $\vv\in\mathcal{T}_\x\mathcal{X}$ to distinguish it from being a velocity of a specific trajectory.

Manifolds with a boundary are common modeling tools in this work as well. For instance, both joint limits and obstacles form boundaries in a manifold. If $\mathcal{X}$ denotes a manifold, its boundary is denoted $\partial\mathcal{X}$ and is assumed to form a smooth lower-dimensional submanifold with cooresponding tangent space $\mathcal{T}_\x\partial\mathcal{X}$. 

\section{Advanced calculus notation}
\label{apx:CalculusNotation}

Higher-order tensors are common in differential geometry, so frequently when dealing with coordinates, explicit indices are exposed and summed over to handle the range of possible combinations. To avoid clutter, the Einstein convention is then used to unambiguously drop summation symbols. 

However, the resulting index notation is unfamiliar and takes some getting used to. Alternatively, the matrix-vector notation found in many advanced calculus and engineering texts is often much simpler and concise. The key to this notation's simplicity is the associativity of matrix products. When at most two indices are involved, by arranging the components of our indexed objects into matrices, with one-indexed vectors being column vectors by default, we can leverage this associativity of matrix products to remove the indices entirely while ensuring expression remain unambiguous with regard to order of operation. We, therefore, use the simpler and more compact matrix-vector notation wherever possible, with a slight extension for how to deal with added indices beyond 2 that might arise from additional partials as discussed below. The majority of our algebra can be expressed using just two indices allowing us to remain within the matrix-vector paradigm.
 
Whenever we take a partial derivative, a new index is generated ranging over the individual dimensions of the partial derivative. For instance, a partial derivative of a function $\partial_\x f(\x)$ where $f:\R^n\rightarrow\R$ is a list of $n$ partial derivatives, one for each dimension of $\x$. Likewise, if $\g:\R^n\rightarrow\R^m$, there are again $n$ partials in $\partial_\x\g$, but this time each of them has the same dimensionality as the original $\g$. $\g$ has its own index ranging over its $m$ codomain dimensions, and now there's a new second index ranging over the $n$ partials. 

We use the convention that if the partial derivative generates a first index (as in $\partial_\x f$), it is oriented as a column vector by default (in this case $n$ dimensional), with its transpose being a row vector. If it generates a second index (as in $\partial_\x \g$), the second index creates a matrix (in this case $m\times n$ dimensional) so that the original vector valued function's orientation is maintained. Specifically, if originally the vector-valued function was column oriented, then each partial will be column oriented and they will be lined up in the matrix so that the new index ranges over the columns. And if the original vector-valued function is row oriented, then the partials will be row vectors and stacked to make rows of a matrix, so the new index will range over the first index of the resulting matrix.

We use the compact notation $\partial_\x$ rather than $\frac{\partial}{\partial \x}$ so multiple partial derivatives unambiguously lists the partials in the order they're generating indices. The notation $\partial_{\x\y} h(\x,\y)$ where $h:\R^m\times\R^n\rightarrow\R$, therefore, means we first generate an index over partials of $\x$ and then generate an index over partials of $\y$. That means $\partial_{\x\y}h = \partial_\y\big(\partial_\x h\big)$ is an $m\times n$ matrix since $\partial_\x h$ is first generated as an $n$-dimensional column vector. When the partials are over two of the same variable denoting a Hessian $\mH_f$, we use the notation $\mH_f = \partial^2_{\x\x} f$ using a squared exponent to emphasize that the partials are not mixed.

For partials generating up to only the first two indices, matrix algebra governs how they interact with surrounding matrices and column or row vectors, with the partial derivative operator taking priority over matrix multiplication (we use parentheses when the product should be included in the partial). For instance, $\partial_\x \g\:\xd = \J_\g\xd$ is the Jacobian of $\g$, with partials ranging over the columns (second index) and each partial constituting a full column, multiplied by the column vector $\xd$. Likewise, $\frac{1}{2}\xd^\tr\partial^2_{\xd\xd}\Lag_e\xd$ is a squared norm of $\xd$ with respect to the symmetric matrix $\partial^2_{\xd\xd}\Lag_e = \partial_\xd\partial_\xd\Lag_e$ of second partials. The first $\partial_\xd\Lag_e$ creates a column vector (it's a first index), and the second $\partial_\xd\big(\partial_\xd\Lag_e\big)$ creates a matrix, with columns constituting the partials of the vector of first partials.

We use bold font to emphasize that these objects with one or two indices are treated as matrices, with one index objects assumed to be column oriented by default. Transposition operates as standard in matrix algebra, swapping the indices in general, with column vectors becoming row vectors and vice versa.

Beyond two indices, we use the convention that a partial derivative simply generates a new index ranging over the partials. By default, multiplication on the right by another indexed object (e.g. vector or matrix) will contract (sum products of matching index values) across this new index and the first index in the right operand. For instance, if $\M:\R^d\rightarrow\R^{m\times n}$, and $\qd\in\R^d$ is a velocity, then $\partial_\q\M\:\qd = \sum_{i=1}^d\frac{\partial\M}{\partial q^i} \dot{q}^i$, unambiguously. Note that beyond two indices, associativity no longer holds in general, so parentheses must be used to disambiguate wherever necessary.

\vspace{10pt}
\noindent Examples:
\begin{enumerate}
\item $\partial_\x f(\x,\y)$ is a column vector
\item $\partial_{\x\y} f(\x,\y)$ is a matrix with first index ranging over partials of $\x$ and second index ranging over partials of $\y$.
\item When $\f:\R^n\rightarrow\R^m$ the matrix $\partial_\x \f$ is an $m\times n$ Jacobian matrix, and $\partial_\x \f^\tr$ is an $n\times m$ matrix, the transpose of the Jacobian matrix.
\item $\frac{d}{dt} \f(\x) = \partial_\x \f\:\xd$ and $\frac{d}{dt} \f^\tr(\x) = \xd^\tr \partial_\x \f^T = \xd^\tr \big(\partial_\x \f\big)^\tr$.
\item $\partial_\x g\big(\f(\x)\big) = \big(\partial_\x \f\big)^\tr \partial_\y g|_{\f(\x)} = J_\f^\tr \partial_\y g(\f(\x))$ with $\y = \f(\x)$.
\end{enumerate}
Often a gradient is denoted $\nabla_\x f(\x)$, but with these conventions outlined above, we use simply $\partial_\x f(\x)$ to avoid redundant notation. We additionaly frequently name common expressions for clarity, such as 
\begin{enumerate}
    \item $\p_e = \partial_\xd\Lag_e$
    \item $\M_e = \partial^2_{\xd\xd}\Lag_e = \partial_\xd \p_e$
    \item $\J_{\p_e} = \partial_\x \p_e$ (even though $\p_e$ is a function of both position and velocity, we use $\J$ to denote the position Jacobian)
    \item $\g_e = \partial_\x\Lag$
\end{enumerate}
These vector-matrix definitions $\M_e,\J_{\p_e},\g_e$ make it more clear what the general size and orientation of the objects $\partial^2_{\xd\xd}\Lag, \partial_{\xd\x}\Lag = \partial_\x \p_e$, and $\partial_\x\Lag$ are.

The equations of motion in different forms would be:
\begin{align}
    &\partial^2_{\xd\xd}\mathcal{L}\;\xdd +  \partial_{\xd\x}\mathcal{L}\:\xd - \partial_\x \mathcal{L} = \zero.\\
    &\M_e\xdd + \J_{\p_e}\xd - \g_e = \zero.
\end{align}
Using the named quantities, that structure of the final expression is clear at a glance. Similarly, we have
\begin{align}
E(\x,\xd) &= \frac{1}{2}\xd^\tr\:\partial^2_{\xd\xd}\mathcal{L}\: \xd = \frac{1}{2}\xd^\tr\M_e\xd,
\end{align}

Example of algebraic operations using this notation (taken from a common calculation involving the time invariance of Finsler energy functions):
\begin{align}
    &\xd^\tr \partial_{\xd\x}\Lag\:\xd - 2\partial_\x\Lag_e^\tr\xd
    = \big( \xd^\tr \partial_\x\partial_\xd\Lag_e - 2\partial_\x \Lag_e^\tr \big)\xd\\
    &\ \ = \big( \partial_\x(\xd^\tr \partial_\xd\Lag_e)^\tr - 2\partial_\x \Lag_e^\tr \big)\xd\\
    &\ \ = \partial_\x\big(\xd^\tr\partial_\xd \Lag_e - 2\Lag_e\big)^\tr \xd \\
    &\ \ = \partial_\x\big(2\Lag_e - 2\Lag_e\big)^\tr\xd\\
    &\ \ = \zero,
\end{align}
where we use $\Lag_e = \frac{1}{2}\xd^\tr\partial_\xd\Lag_e$ in that last line.

\bibliographystyle{plain}
\bibliography{refs}

\end{document}